\begin{document}

%

%

\twocolumn[

\aistatstitle{SPEED: Experimental Design for Policy Evaluation in Linear Heteroscedastic Bandits}

\aistatsauthor{Subhojyoti Mukherjee$^{*}$ \And Qiaomin Xie$^{*}$ \And Josiah P. Hanna$^{*}$ \And Robert Nowak$^{*}$}


\aistatsaddress{ $^{*}$University of Wisconsin-Madison   } ]

\begin{abstract}
    In this paper, we study the problem of optimal data collection for policy evaluation in linear bandits. 
    In policy evaluation, we are given a \textit{target} policy and asked to estimate the expected reward it will obtain when executed in a multi-armed bandit environment. 
    Our work is the first work that focuses on such an optimal data collection strategy for policy evaluation involving heteroscedastic reward noise in the linear bandit setting. 
    We first formulate an optimal design for weighted least squares estimates in the heteroscedastic linear bandit setting with the knowledge of noise variances. This design minimizes the mean squared error (MSE) of the estimated value of the target policy and is termed the oracle design. 
    Since the noise variance is typically unknown, we then introduce a novel algorithm, \sp\ (\textbf{S}tructured \textbf{P}olicy \textbf{E}valuation \textbf{E}xperimental \textbf{D}esign), that tracks the oracle design and derive its regret with respect to the oracle design. We show that regret scales as $\widetilde{O}_{}(d^3 n^{-3/2})$ and prove a matching lower bound of $\Omega(d^2 n^{-3/2})$.
    Finally, we evaluate \sp\ on
    a set of policy evaluation tasks and demonstrate that it achieves MSE comparable to an optimal oracle and much lower than simply running the target policy.
\end{abstract}

\section{INTRODUCTION}
\label{sec:intro}

Bandit policy optimization has been applied in various applications such as {web marketing \citep{bottou2013counterfactual}, web search \citep{li2011unbiased}, and healthcare recommendations \citep{zhou2017residual}}. 
In practice, before widely deploying a learned policy, it is often necessary to have an accurate estimation of its performance (i.e., expected reward).
To this effect, \textit{policy evaluation} is often a critical step as it allows practitioners to determine if a learned policy truly represents improved task performance.
%
%
While off-policy evaluation has been extensively studied as a potential solution \citep{dudik2014doubly, li2015toward, swaminathan2017off, wang2017optimal, su2020doubly, kallus2021optimal, cai2021deep}, in practice, some amount of online evaluation is often required before widescale deployment.
For instance, in web-marketing it is common to run an A/B test with a subset of users before a potential new policy is deployed for all users \citep{kohavi2017online}.
%
%
When online policy evaluation is required, we desire methods that provide an accurate estimate of policy performance with a minimal amount of data collected.
The default choice for online policy evaluation is to simply run the target policy and average the resulting rewards.
However, this approach is sub-optimal when the action space is large or different actions have reward distributions with different variances.

In this paper, we formulate a new experimental design for allocating action samples so as to obtain minimal mean squared error (MSE) for policy evaluation.
%
%
%
Specifically, we consider optimal policy evaluation under the following linear heteroscedastic bandit model.
 %
%
%

Let $\A$ be the set of \emph{actions} and each {$a \in \A$} is associated with a feature vector $\bx(a)\in \R^d$ and {$|\A|=A$}. The reward distribution for each action $a$ has mean $\btheta_*^\top \bx(a)$, for some $\btheta_* \in \R^d$.  Often the variance of the reward distribution is assumed to be the same for all actions, but in this paper, we depart from this assumption.  We consider the setting that the variance is governed by a quadratic function of the form $\bx(a)^\top \bSigma_* \bx(a)$, for some symmetric positive definite matrix $\bSigma_*\in\R^{d\times d}$.  This assumption allows us to capture problems in which both the mean reward and the variance may depend on the action taken, but both vary smoothly in $\bx(a)$.


We briefly contrast our studied setting with other work.
In policy evaluation, the common metric of algorithm performance is regret with respect to the mean squared error of an oracle algorithm that has knowledge of the variances of different reward distributions (i.e., knows $\Sigma^*$). There has been an increasing focus on studying data collection for policy evaluation in bandit settings \citep{zhu2021safe, pmlr-v151-zhu22a, wan2022safe} and there has been some theoretical progress \citep{chaudhuri2017active,fontaine2021online}. 
%
%
Several works \citep{antos2008active, carpentier2012minimax, carpentier2015adaptive, fontaine2021online} have shown that in the classical bandit setting a regret of $\widetilde{O}(A n^{-3/2})$ is possible where $n$ is the total budget of actions that can be tried and $\widetilde{O}$ hides logarithmic factors.
These works have also shown that simply running the target policy to take actions results in a slower decrease of regret at the rate of $\widetilde{O}(A n^{-1})$.
{Note that collecting data through running the target policy is called on-policy sampling.}
The work of \citet{pmlr-v151-zhu22a, wan2022safe} studies the same setting under safety constraints and provides asymptotic error bounds. 
However, none of the above works provides a finite-time regret guarantee for data collection for policy evaluation in the heteroscedastic linear bandit setting.

The closest works to ours \citep{antos2008active,carpentier2012minimax, carpentier2015adaptive,fontaine2021online} either consider unstructured settings or {consider the classical bandit setting.} 
As many real-world bandit applications have $d \ll A$, a natural question arises as to how to build an algorithm for policy evaluation in the heteroscedastic linear bandit setting with unknown $\btheta_*$ and $\bSigma_*$ that can leverage the structure. 
Further, we want the regret of such an algorithm to decrease at a rate faster than $\widetilde{O}(n^{-1})$ (the on-policy regret rate) and to scale with the dimension $d$ instead of actions as $A\gg d$. Note that the regret should scale at least by $d^2$ because the learner needs to probe in $d^2$ dimensions to estimate $\bSigma_*\in\R^{d\times d}$ \citep{wainwright2019high}. Thus, the goal of our work is to answer the question:
\begin{quote}
\begin{center}
    \textbf{Can we design an algorithm to collect data  for policy evaluation that adapts to the variance of each action, and its regret decreases at a rate {faster} than $\widetilde{O}(d^2n^{-1})$?}
\end{center}
\end{quote}
In this paper, we answer this question affirmatively. 
{We make the following novel contributions to the growing literature on online policy evaluation:}
\par
\textbf{1.} We are the first to formulate the policy evaluation problem for heteroscedastic linear bandit setting where the variance of each action $a\in \A$ depends on a lower dimensional co-variance matrix parameter $\bSigma_*\in\R^{d\times d}$ such that variance $\sigma^2(a) = \bx(a)^\top\bSigma_*\bx(a)$. This is a more general heteroscedastic linear bandit setting than studied in \citet{chaudhuri2017active, kirschner2018information, fontaine2021online}, and different than the time-dependent variance model of \citet{zhang2021improved, zhao2022bandit}.
\par
\textbf{2.} We characterize the {MSE} in this setting and show that the optimal design, denoted as \textbf{P}olicy \textbf{E}valuation (PE) Optimal  design that minimizes the {MSE} is different than  A-, D-, E-, G-optimality \citep{pukelsheim2006optimal}. We establish several key properties of this novel \PE design and discuss how we can solve for the design efficiently.
\par
\textbf{3.} Finally, we propose the agnostic algorithm, \sp, that does not know the underlying covariance matrix $\bSigma_*$.  \sp\ tracks the oracle design and we analyze 
its MSE. We then bound the regret of \sp\ compared to an oracle strategy that follows the optimal design with the knowledge of $\bSigma_*$. We show that the regret scales as $O(\tfrac{d^3\log(n)}{n^{3/2}})$ which is an improvement over the regret for the stochastic non-structured bandit setting which scales as $O(\tfrac{A\log(n)}{n^{3/2}})$ \citep{carpentier2011finite, carpentier2012minimax,carpentier2015adaptive, fontaine2021online}. Hence, we answer positively to our main query. We also prove the first lower bound for this setting that scales as $\Omega(\tfrac{d^2\log(n)}{n^{3/2}})$. Finally, we conduct experiments on synthetic and real-life data sets and show that \sp\ lowers the MSE of policy evaluation compared to baseline methods.
We discuss more related works and motivations in \Cref{sec:related}.
\section{PRELIMINARIES}\label{sec:prelims}
We study the linear bandit setting where the expected reward for each action is assumed to be a linear function~\citep{mason2021nearly, jamieson2022interactive}.
We define $[m]\coloneqq [1,2,\ldots,m]$. We denote the action space as $\A$ and $|\A| = A$. 
Actions are indexed by $a\in[A]$, and {each action $a$ is associated with a} feature vector $\bx(a)\in\R^d$ with dimension {$d \ll A$}. 
Denote by $\triangle(\A)$ the probability simplex over the action space $\A$ and a policy $\pi\in\triangle(\A)$ as a mapping $\pi: \A \rightarrow [0,1]$ such that $\sum_a \pi(a) = 1$.
%
%

Data collection is performed over $n$ rounds of action selection.
Specifically, at each round $t\in[n]$, the selected action $a_t$ yields a reward:  $r_t =  \bx(a_t)^{\top}\btheta_* + \eta_t$,
where $\btheta_*\in\R^d$ is the \textit{unknown} reward parameter, and $\eta_t$ is zero-mean noise with variance $\sigma^2(a_t)$ and we further assume that $\eta_t$ is $\kappa^2$-subgaussian.
We assume that for each action $a \in \A$ the variance $\sigma^2(a)$ has a lower-dimensional structure such that $\sigma^2(a) = \bv(a)^{\top}\bSigma_*\bv(a)$ where $\bSigma_* \in \R^{d\times d}$ is an \emph{unknown} variance parameter. Observe that the variance depends on the action features, which is called the heteroscedastic noise model \citep{greene2002000, chaudhuri2017active} which differs from the unknown time-dependent variance model of \citet{ zhang2021improved, zhao2022bandit}.
Moreover, {\citet{fontaine2021online} do not consider structure in variances and}  \citet{chaudhuri2017active} only consider a special case of our setting where $\bSigma_*$ is a rank-1 matrix.
We also assume that the norms of the features are bounded such that $H^2_L \leq \|\bx(a)\|^2\leq H^2_U$ for all $a\in\A$. 
In our heteroscedastic linear bandit setting selecting any action gives information about $\btheta_*$ and also gives information about the noise covariance matrix $\bSigma_*$. 

The value of a policy $\pi$ is defined as $v(\pi) \coloneqq \E[R_t]$ where the expectation is taken over $a_{t}\!\sim\!\pi,{R_{t}\!\sim\! \bx(a_t)^\top\btheta_*}+\eta_t$. 
 In the policy evaluation problem, we are given a fixed, target policy $\pi$ and asked to estimate $v(\pi)$. 
Estimating $v(\pi)$ requires a dataset of actions and their associated rewards, $\D \coloneqq \{(a_1, r_1,...,a_{n},r_{n})\}$, which is collected by executing some policy.
We refer to the policy that collects $\D$ as the \textit{behavior policy}, denoted by $\bb\in\triangle(\A)$.
We then define the value estimate of a policy $\pi$ as $Y_n$, where $n$ is the sample budget. The exact nature of the value estimate for the linear bandit setting will be made clear in \Cref{sec:oracle}. Our goal is to choose a behavior policy that minimizes the mean squared error (MSE) defined as
    $\E_{\mathcal{D}}[(Y_n - v(\pi))^{2}]$,
 where the expectation is over the collected data set $\D$.

We  now 
state an assumption on the boundedness on the variance of each action $a\in[A]$. 
%
%
%
%
Let the  singular value decomposition of $\bSigma_*$ be $\bU \bD \bP^{\top}$ with orthogonal matrices $\bU, \bP^{\top}$ and $\bD=\operatorname{diag}\left(\lambda_{1}, \ldots, \lambda_{d}\right)$ where $\{\lambda_{i}\}$ are singular values. It follows that $\sigma^2_{\min} \leq \sigma^2(a)\leq \sigma^2_{\max}$ where $\sigma^2_{\min} = \min_{i}|\lambda_{i}|H_L^2$ and $\sigma^2_{\max} = \max_{i}|\lambda_{i}|H^2_U$ (see \Cref{remark:bound-variance}).
\begin{assumption}
\label{assm:bounded-variance}
We assume that $\bSigma_*$ has its minimum and maximum eigenvalues bounded such that for every action $a\in[A]$ the following holds $\sigma^2_{\min} \leq \sigma^2(a)\leq \sigma^2_{\max}$. 
\end{assumption}


\section{\hspace*{-0.5em}OPTIMAL DESIGN FOR POLICY EVALUATION}
\label{sec:loss-def}

In this section, we first discuss why following the target policy to take actions can lead to a poor estimation of the value of the policy. This discussion motivates how a different behavior policy can produce more accurate estimates of the target policy's value.
After this motivation, we derive an expression for policy evaluation error in terms of the behavior sampling proportion $\bb\in\triangle(\A)$, target policy $\pi$, and action features $\bx(a)\in\R^d$. 
We call the minimizer of this expression the ``optimal design" \citep{pukelsheim2006optimal} as it minimizes the mean squared error for policy evaluation.
We then analyze the error incurred by an oracle that can compute and follow the optimal behavior policy through knowledge of problem-dependent parameters. 

\textbf{Motivating Example:} Consider the linear bandit environment where $d=2$ and $A=100$ actions. Let one action be along the x-axis, one action along the y-axis, and $98$ actions along the direction of $(\frac{1}{\sqrt{2}}, \frac{1}{\sqrt{2}})$. Assume $\btheta_*$ is in the direction of $x$-axis (so action $1$ is the optimal action). A similar canonical linear bandit setting has been studied by \citet{fiez2019sequential, katz2020empirical}. Consider a target policy $\pi$ such that $\pi(1) = 0.9$ and it distributes $0.1$ probability equally on the remaining actions.  In this case, just running the target policy $\pi$ for $n$ rounds leads to sampling uninformative actions for identifying $\btheta_*$. In fact, in our experiments, we show that the estimate $v(\pi)$ will be inaccurate compared to running the optimal behavior policy (called Oracle policy; see \Cref{fig:linear-expt} top-left). 


Now suppose we divide the budget of $n$ samples across the actions, and let $T_{n}(1), T_{n}(2), \ldots, T_{n}(A)$ be the number of samples allocated to actions $1,2, \ldots, A$ at the end of $n$ rounds. 
After observing $n$ samples, let the \textit{weighted} least square estimate (WLS) be: 
\vspace*{-0.5em}
\begin{align}
\wtheta_{n} \coloneqq \arg\min_{\btheta}\sum_{t=1}^{n}\tfrac{1}{\sigma^{2}(a_{t})}(r_t-\bv(a_t)^{\top}\btheta)^{2} \label{eq:weighted-least-square}
\end{align}
where $a_t$ is the action sampled at round $t$ and $\sigma^2(a_t)$ is the variance of  action $a_t$. Also note that this is an unbiased estimator of $\btheta_*$ (see \Cref{remark:unbiased-estimator}). 
In a linear bandit, we can define the value estimate of a \emph{target policy} as 
$Y_n \coloneqq \sum_a \bw(a)^\top \wtheta_n,$ 
where $\bw(a) \coloneqq \pi(a)\bv(a)$ is the expected feature for each action $a\in \A$ under the target policy, and $\wtheta_n$ is an unbiased estimate of $\btheta_*$ computed with $n$ samples in $\D$. 
As $\wtheta_n$ is an unbiased estimate, we have that $\E_{\D}[Y_{n}]=\sum^A_{a=1}\bw(a)^{\top}\btheta_* = v(\pi)$.
Since we have an unbiased estimator of $v(\pi)$, minimizing the MSE is equivalent to minimizing the variance,  $\min  \E_{\D}\big[\left(Y_n \!-\! \E[Y_n]\right)^2\big] = \min\E_{\D}\big[ \big(\sum_{a=1}^A\bw(a)^{\top}(\wtheta_n \!-\! \btheta_*)\big)^2\big]$,
%
where the minimization is with respect to the data distribution $\D$, which is determined by the behavior policy. In general, the behavior policy that minimizes the MSE may be different from the target policy. 
%
%
%
To identify this optimal behavior policy, following the optimal design literature \citep{pukelsheim2006optimal, fedorov2013theory} we define the design or information matrix $\bA_{\bb,\bSigma_*}\in\R^{d\times d}$ w.r.t.\ each $\bb\in \Delta(\A)$ as
\begin{align}
    \hspace*{-1em}\bA_{\bb,\bSigma_*} \!\!=\!\!\sum_{a\in\A}\bb(a)\big(\tfrac{\bx(a)}{\sigma(a)}\big)\big(\tfrac{\bx(a)}{\sigma(a)}\big)^\top 
    \!\!=\!\! \sum_{a\in\A}\bb(a)\tx(a)\tx(a)^\top \label{eq:design-matrix}
\end{align}
where $\tx(a) = \bx(a)/\sigma(a)$. 
Observe that our design matrix in \eqref{eq:design-matrix} captures the information about the action features $\bx(a)$, and variance $\sigma^2(a)$ and weights them by the sampling proportion $\bb(a)$. Then in the following proposition, we exactly characterize the MSE with respect to the design matrix $\bA_{\bb,\bSigma_*}$, target policy $\pi$ and action features $\bx$. 
Moving forward, we will use the term \textit{loss} interchangeably with MSE.
\begin{customproposition}{1}
\label{prop:linear-bandit}
Let $\wtheta_n$ be the Weighted Least Square (WLS) estimate \eqref{eq:weighted-least-square} of $\btheta_*$ after observing $n$ samples and define $\bw(a) = \pi(a)\bx(a)$. Define the design matrix as $\bA_{\bb,\bSigma_*}$ (see \eqref{eq:design-matrix}). Then the loss is given by
\begin{align*}
    \E_\D\!\big[\big(\sum_{a=1}^A\bw(a)^\top(\wtheta_n-\btheta_*)\!\big)^{2}\big] \!\!=\!\! \underbrace{\frac{1}{n}\! \!\sum_{a,a'}\bw(a)^\top\bA_{\bb,\bSigma_*}^{-1}\bw(a')\!}_{\coloneqq\L_n(\pi,\bb,\bSigma_*)} .
\end{align*}
\end{customproposition}

\textbf{Proof (Overview)}
The key idea is to show that the linear model yields for each action 
$a \in[A], \widetilde{Y}_{n}(a) = \widetilde{\bx}_{n}(a)^{\top} \btheta^{\star}+\widetilde{\eta}_{n}(a)$ where we define 
\vspace*{-0.25em}
\begin{align*}
&\widetilde{Y}_{n}(a)=\sum_{i=1}^{T_{n}(a)} \tfrac{R_{i}^{}(a)}{\sigma(a) \sqrt{T_{n}(a)}}, \tx_n(a)=\tfrac{\sqrt{T_n(a)} \bx(a)}{ \sigma(a)},\\
&\widetilde{\eta}_{n}(a)=\sum_{i=1}^{T_n(a)} \tfrac{\eta_{i}^{}(a)}{\sigma(a) \sqrt{T_n(a)}},
\end{align*}
with $R_i(a)$ being the reward observed for action $a$ taken for the $i$-th time, $\eta_i(a)$ being the corresponding noise, and $T_n(a)$ is the number of samples of action $a$. Next, observe that using the independent noise assumption, we have that  $\E[\widetilde{\eta}_{n}(a)]=0$ and $\Var\left[\widetilde{\eta}_{n}(a)\right]=1$. Let $\bX=$ $\left(\widetilde{\bx}_{n}(1)^{\top}, \cdots, \widetilde{\bx}_{n}(A)^{\top}\right)^{\top} \in \mathbb{R}^{A \times d}$ be the induced feature matrix of the policy and $\mathbf{Y} = [\widetilde{Y}_{n}(1), \widetilde{Y}_{n}(2),\ldots, \widetilde{Y}_{n}(A)]^\top$. The above weighted least squares (WLS) problem has an optimal unbiased estimator $\wtheta_n=\left(\bX^{\top} \bX\right)^{-1} \bX^{\top} \mathbf{Y}$ \citep{fontaine2021online}. 
Substituting the definition of $\wtheta_n$ yields the desired expression of the loss as stated in the proposition. The detailed proof is given in \Cref{app:bandit-prop}. \hfill$\blacksquare$

Observe that the loss in our setting depends on the inverse of the design matrix denoted by $\bA^{-1}_{\bb,\bSigma_*}$, the target policy, as well as features of action pairs $(a,a')\in\A\times \A$. Hence, minimizing the loss is equivalent to minimizing the quantity $1/n (\sum_{a,a'}\bw(a)^\top\bA_{\bb,\bSigma_*}^{-1}\bw(a'))$. As this design is different than a number of existing notions of optimality such as D-, E-, T-, or G-optimality \citep{pukelsheim2006optimal, fedorov2013theory, jamieson2022interactive}, we call this the \textit{\PE design}. None of these previously proposed designs capture the objective of minimal MSE for policy evaluation.
For example, G-optimality (as studied by \citep{katz2020empirical,  mason2021nearly, katz2021improved, mukherjee2023efficient, mukherjee2024multi}) minimizes the worst-case error of $\max_{\bx(a)}\E_{\D}[(\bx(a)^\top(\wtheta_n-\btheta_*))^2]$ by minimizing the quantity $\max_{\bx(a)}\bx(a)^{\top}\bA^{-1}_{\bb}\bx(a)$ for homoscedastic noise. The E-optimal design minimizes $\max_{\|\mathbf{u}\|\leq 1}\E_{\D}[(\mathbf{u}^\top(\wtheta_n-\btheta_*))^2]$ by minimizing the minimum eigenvalue of the inverse of design matrix  \citep{mukherjee2022chernoff} and the A-optimal design minimizes $\E_{\D}[(\wtheta_n - \btheta_*)^2]$ by minimizing the trace of the inverse of design matrix \citep{fontaine2021online}.
%

We now state a few more notations for ease of exposition. Using \Cref{prop:linear-bandit} we define the  optimal behavior policy when the matrix $\bSigma_*$ is known as:
\begin{align}
    \bb_* \coloneqq \argmin_{\bb}\L_n(\pi, \bb, \bSigma_*), \label{eq:opt-oracle-sol}
\end{align}
where the loss $\L_n(\pi, \bb, \bSigma_*)$ is defined in \Cref{prop:linear-bandit}. 
We define the optimal loss (with knowledge of $\bSigma_*$)  as: 
\begin{align}
    \L^*_n(\pi, \bb_*, \bSigma_*) = \min_{\bb}\L_n(\pi, \bb, \bSigma_*). \label{eq:opt-oracle-loss}
\end{align}

\vspace*{-1.5em}
\subsection{Computation of the optimal design $\bb_*$}
\vspace*{-0.5em}


In this section, we digress a bit to discuss the computational aspect of $\L_n(\pi, \bb, \bSigma_*)$. Since \PE design is a new type of design, the natural question to ask is \emph{how to optimize this loss function w.r.t.\ $\bb$?} We show 
%
%
in \Cref{prop:convex-loss} that the loss $\L_n(\pi, \bb, \bSigma_*)$ for any arbitrary design proportion $\bb\in\triangle(\A)$  is strictly convex with respect to the proportion $\bb$. The proposition and its proof are given in \Cref{app:convex-loss}. Next in \Cref{prop:gradient-loss} we show that the gradient of the loss function is bounded. Due to space constraints, both propositions and their proofs are given in \Cref{app:convex-loss} and \Cref{app:gradient-loss} respectively. We first state an assumption  that the minimum eigenvalue satisfies $\lambda_{\min }\left(\sum_{a=1}^A \bw(a) \bw(a)^{\top}\right)>0$, which is required for proving \Cref{prop:gradient-loss}.


\begin{assumption}
\label{assm:target-dist}
\textbf{(Distribution of $\pi$)} 
We assume that the set of actions $a$ such that $\!\pi(a) \!\!>\!\! 0$, spans $\R^d$ and $\R^{d\times d}$.
\end{assumption}
Note that this is a realistic and not a restrictive assumption, since if the target policy never takes an action that is needed to cover some dimension then we can avoid identifying $\btheta_*$ in that dimension. 
%
Using \Cref{prop:convex-loss}, \ref{prop:gradient-loss} we can effectively solve the \PE design with gradient descent approaches \citep{lacoste2013affine, berthet2017fast}. 
We capture this convergence guarantee with the assumption of the existence of an approximation oracle.
\begin{assumption}\textbf{(Approximation Oracle)}
\label{assm:oracle-approx}
We assume access to an approximation oracle. Given a convex loss function $\L_n(\pi,\bb ,\bSigma_*)$ with minimizer $\bb_*$, the approximation oracle returns a proportion 
$\wb_* = \argmin_{\bb} \L_n(\pi,\bb ,\bSigma_*)$ such that 
    $|\L_n(\pi,\wb_* ,\bSigma_*) - \L_n(\pi,\bb_* ,\bSigma_*)| \leq \epsilon$.
\end{assumption}
Therefore from \Cref{prop:convex-loss}, and \ref{prop:gradient-loss} and using \Cref{assm:target-dist}, and \ref{assm:oracle-approx} we can get a computationally efficient solution to $\min_{\bb\in \Delta(\A)}\L_n(\pi, \bb, \bSigma_*)$. 
%

\label{sec:oracle}

\vspace*{-0.25em}
\subsection{Oracle Loss}
\label{sec:oracle-loss}
\vspace*{-0.25em}



Recall from \Cref{sec:intro}, that our final goal is to control the regret (excess loss) of an agnostic algorithm that does not know $\bSigma_*$, with respect to an oracle that already knows $\bSigma_*$.
Towards this goal, in this section, we develop our theory for optimal data collection by considering an oracle for the heteroscedastic linear bandit setting. 
%
%
Specifically, we consider an oracle that has knowledge of $\bSigma_*$ but does not know $\btheta_*$. 
With this knowledge, it can solve \Cref{eq:opt-oracle-sol} (\Cref{assm:oracle-approx}) to determine the \PE design, $\bb_*$, that minimizes the loss. 
%
The oracle takes actions in proportion $\bb_*$ for $n$ samples and then computes the WLS estimate $\wtheta_n$ using $\bSigma_*$.
%
%
%
%
The following proposition then bounds the loss of the oracle after $n$ samples.

\begin{customproposition}{5}
\label{prop:loss-oracle}
\textbf{(Oracle Loss)}
Let the oracle sample each action $a$ for $\lceil n \bb_*(a)\rceil$ times, where $\bb_*$ is the solution to \eqref{eq:opt-oracle-sol}. Define $\lambda_1(\bV)$ as the maximum eigenvalue of $\bV:=\sum_{a,a'}\bw(a)\bw(a')^{\top}$. Then the loss satisfies $
    \L^*_n(\pi, \bb_*, \bSigma_*) \leq O_{\kappa^2,H^2_U}\footnote{Here $O_{\kappa^2,H^2_U}()$ hides the sub-Gaussian factor $\kappa^2$ and  upper bound $H^2_U$ on feature norm }\left(\tfrac{ d\lambda_1(\bV)
    \log n}{n}\right) + O_{\kappa^2,H^2_U}\left(\tfrac{1}{n}\right).
$
\end{customproposition}

\textbf{Proof (Overview)} Note that the oracle knows the $\bSigma_*$ and uses $\wtheta_n$ in \eqref{eq:weighted-least-square} to estimate $\btheta_*$. We use \Cref{corollary:kiefer} to show that $\L_n(\pi, \bb_*,\bSigma_*) \leq \lambda_1(\bV) d$ where $\bV = \sum_{a,a'}\bw(a)\bw(a')^\top$. 
The proof follows by showing that $(\sum_{a=1}^A \bw(a)^\top (\wtheta_n-\btheta_{*}))^2$ is a sub-exponential variable. 
Then using sub-exponential concentration inequality in \Cref{conc-lemma-sub-exp} (\Cref{app:prob-tools}) and setting $\delta=O(1/n^2)$ we can bound the expected loss with high probability. The full proof is given in \Cref{app:loss-bandit-oracle}. \hfill$\blacksquare$

\textbf{Connection to prior work:} 
Prior work has considered a similar oracle for the basic stochastic bandit setting, which is a special case of our setting with $\bv(a)$ being a one-hot vector in $\mathbb{R}^A$. 
In this case, we can see that $\bb_* = \arg\min_\bb \sum_a\frac{\pi^2(a)\sigma^2(a)}{\lceil \bb(a)n\rceil}$.
This captures the optimal number of times the actions should be pulled weighted by the target policy and their variance. 
%
Solving for $\bb_*$, we obtain $\bb_*(a)\propto \pi^2(a)\sigma^2(a)$.
This solution matches the optimal sampling proportion given by \citet{antos2008active,  carpentier2011finite, carpentier2012minimax, carpentier2015adaptive} for this special case.
The loss in prior work decays at the rate of $\widetilde{O}\footnote{Here $\widetilde{O}$ hides logarithmic and problem dependent factors like $\sigma^2_{\min}, \kappa^2, H^2_U$.}(An^{-1})$ whereas the loss in \Cref{prop:loss-oracle} decreases at the rate of $\widetilde{O}(d n^{-1})$. Also note the loss in \Cref{prop:loss-oracle} scales with $d$ instead of $d^2$ as the oracle knows the $\bSigma_*$ and does not need to explore $d^2$ directions to estimate $\bSigma_*$.
So we obtain an equivalence between the \PE design and the solution from prior work in the basic bandit setting while considering a more general setting.

\section{\sp\ AND REGRET ANALYSIS}
\label{sec:speed}
\vspace*{-0.4em}
In this section, we first introduce an agnostic algorithm called \sp\ for data collection that does not know $\bSigma_*$, and then analyze its regret. Here, regret refers to the excess loss relative to the oracle that knows $\bSigma_*$.

\vspace*{-1em}
\subsection{Details of Algorithm \sp}
\vspace*{-0.4 em}

In practice, $\bSigma_*$ is unknown and so the oracle behavior policy cannot be directly computed.
Instead, we first conduct a small amount of exploration to estimate $\bSigma_*$ and then use the estimate in place of $\bSigma_*$ in \eqref{eq:design-matrix}.
Specifically, we define the forced exploration phase as the first $\Gamma$ rounds in which the algorithm conducts exploration to estimate $\bSigma_*$. 
%
%
To ensure adequate exploration, we first apply Principal Component Analysis (PCA) on the feature matrix $\bX$ and choose the most significant $d$ directions (directions having the highest variance). Then we choose one random action for each of these $d$ significant directions and sample these actions uniform randomly for $\Gamma$ rounds.
Since the algorithm explores first and then uses the estimate to compute the \PE design, it can be viewed as an explore-then-commit algorithm \citep{rusmevichientong2010linearly, lattimore2020bandit}. 
As we consider a structured setting we call this algorithm \textbf{S}tructured \textbf{P}olicy \textbf{E}valuation \textbf{E}xperimental \textbf{D}esign (\sp). 
After $\Gamma = \sqrt{n}$ rounds, \sp{} estimates the covariance matrix $\wSigma_\Gamma$ as follows:
\begin{align}
    \hspace*{-1.4em}\wSigma_\Gamma \!=\! \!\min_{\mathbf{S}\in \R^{d\times d}}\!\sum_{t=1}^{\Gamma}\!\! \big[\langle \bv(a_{t}) \bv(a_{t})^{\top}, \mathbf{S}\rangle\!-\!(r_t\!-\!\bv(a_{t})^{\top} \wtheta_{\Gamma})^{2}\big]^{2} \label{eq:wSigma-Gamma}
\end{align}
where $\wtheta_\Gamma$ is the ordinary least square (OLS) estimate of $\btheta_*$ using the data from the first $\Gamma$ rounds. Note that the OLS estimate is given by $\wtheta_\Gamma = (\bX^\top\bX)^{-1}\bX^\top \bY, $ where $\bX= \left(\bx_1^{\top}, \cdots, \bx_{\Gamma}^{\top}\right)^{\top}$ and $\mathbf{Y} = [r_1, \ldots, r_\Gamma]^\top$.
%
A covariance estimation technique similar to \eqref{eq:wSigma-Gamma} has been considered for the active regression setting though only for the case when $\bSigma_*$ has rank 1 \citep{chaudhuri2017active}.
The estimate of the covariance matrix $\wSigma_\Gamma$ is then fed to the oracle optimizer (\Cref{assm:oracle-approx}) to compute the sampling proportion $\wb$. 
Actions are chosen according to $\wb$ for the remaining $n-\Gamma$ rounds and then the WLS estimate $\wtheta_{n-\Gamma}$ is computed using $\wSigma_\Gamma$ as the covariance matrix parameter (\Cref{eq:weighted-least-square}). 
Finally, \sp\ outputs the dataset $\D$ to estimate the value of target policy $\pi$ and $\wtheta_{n-\Gamma}$.
Full pseudocode is given in \Cref{alg:linear-bandit}.
\begin{algorithm}[!tbh]
\caption{Structured Policy Evaluation Experimental Design (\sp)}
\label{alg:linear-bandit}
\begin{algorithmic}[1]
\State \textbf{Input:} Action set $\A$, target policy $\pi$, budget $n$.
\State Conduct forced exploration for $\Gamma=\sqrt{n}$ rounds and estimate $\wSigma_\Gamma$ using \eqref{eq:wSigma-Gamma}. 
\State Let $\wb_{} \in \triangle(\A)$ be the 
minimizer of $\L_n(\pi,\bb ,\wSigma_\Gamma)$. 
\State Pull each action $a$ exactly $T_n(a) = \left\lfloor\wb_{}(a) (n-\Gamma) \right\rfloor$ times, and let  $\H(a) \coloneqq \{a,R_i(a)\}_{i=1}^{T_n(a)}$ be the corresponding data. Set $\D\leftarrow \cup_a\H(a)$.  
\State Construct the weighted least squares estimator $\wtheta_{n-\Gamma}$ using only the observations $\D$ from step 4.
\State \textbf{Output:} $\D$ and $\wtheta_{n-\Gamma}$.
\end{algorithmic}
\end{algorithm}

\vspace*{-1em}
\subsection{Regret Analysis of \sp}
\vspace*{-0.5em}

In this section, we first state our regret definition and then analyze the regret of the agnostic algorithm \sp.
As an agnostic algorithm, \sp\ does not know the true covariance matrix $\bSigma_*$ and must estimate the covariance matrix $\wSigma_\Gamma$ after conducting exploration for $\Gamma$ rounds. 
%
%
%
We define the loss of an algorithm after exploring for $\Gamma$ rounds as the MSE of the resulting value estimate as follows:
\vspace*{-0.25em}
\begin{align}
    \bL_n(\pi, \wb, \wSigma_\Gamma) := \E_\D\!\big[\big(\sum_{a=1}^A\bw(a)^\top(\wtheta_{n-\Gamma}-\btheta_*)\!\big)^{2}\big],
    %
    \label{eq:opt-agnostic-loss}
\end{align}
where $\wtheta_{n-\Gamma}$ is the WLS estimate of $\btheta_*$ calculated from data of last $n-\Gamma$ rounds. 
%
%
We now define the regret for the agnostic algorithm with the estimated behavior policy $\wb$ as
\begin{align}
    \cR_n &= \bL_n(\pi,\wb,\wSigma_\Gamma) - \L^*_n(\pi,\bb_*,\bSigma_*). \label{eq:regret-definition}
\end{align}
where $\bL_n(\pi,\wb,\wSigma_\Gamma)$ is the loss of the agnostic algorithm and
$\L_n(\pi,\bb_*,\bSigma_*)$ is the oracle loss defined in \eqref{eq:opt-oracle-loss}.
%
%
%
We now state the main theorem for the regret of \sp.
\begin{customtheorem}{1}
\label{thm:regret-linear-bandit} \textbf{(Regret of \Cref{alg:linear-bandit}, informal)}
Running \Cref{alg:linear-bandit} with budget $n\geq O_{\kappa^2,H^2_U}(\tfrac{d^4 \sigma^4_{\max}\log^2 (A/\delta)}{\sigma^4_{\min}})$, the resulting regret satisfies $\cR_n = O_{\kappa^2,H^2_U}\left(\frac{d^3 \sigma^2_{\max}\log(n )}{\sigma^2_{\min} n^{3/2}}\right)$.
%
%
\end{customtheorem}

\textbf{Discussion of Regret:} \Cref{thm:regret-linear-bandit} states that the regret of \Cref{alg:linear-bandit} scales as $O_{\kappa^2,H^2_U}(d^3 \sigma^2_{\max}\log(n)/n^{3/2})$ where $d$ is the dimension of $\btheta_*$. Note that our regret bound depends on the underlying feature dimension $d$ instead of actions $A$, and scales as $\widetilde{O}_{}(d^3 n^{-3/2})$ which gives a positive answer to the main question of whether such a result is possible.
In comparison to earlier work, when $d^3 < A$, we have a tighter bound than that given by \citet{carpentier2011finite}. 
Furthermore, the results of \citet{carpentier2011finite,carpentier2012minimax,carpentier2015adaptive} are for the standard multi-armed bandit setting and cannot be easily extended to incorporate structure in the linear bandit setting. 
Our new bound also improves upon the A-optimal design method given by \cite{fontaine2021online}, as their regret depends on the number of actions $A$ and scales as $O(\frac{A\log n}{n^{3/2}})$.



\textbf{Proof (Overview) of \Cref{thm:regret-linear-bandit}:} We now outline the key steps for proving \Cref{thm:regret-linear-bandit}.

\textbf{Step 1 (Regret Decomposition):}
We first decompose the regret $\cR_n = \bL_n(\pi,\wb,\wSigma_\Gamma) - \L^*_n(\pi,\bb_*,\bSigma_*)\nonumber$. Recall that $\bb_*\in\triangle(\A)$ is the optimal design in \eqref{eq:opt-oracle-sol} and $\wb\in\triangle(\A)$ is the design followed by \sp.  
%
However, we cannot directly go after the loss $\bL_n(\pi,\wb,\wSigma_\Gamma)$ as it does not admit a simple structure like $\L^*_n(\pi,\bb_*,\bSigma_*)$. Rather we establish an upper bound on the loss $\bL_n(\pi,\wb,\wSigma_\Gamma)$, given by $\L'_{n-\Gamma}(\pi,\wb_*,\wSigma_\Gamma)$ (defined formally in \eqref{eq:loss-upper-main-paper}). 
Consequently, we can decompose the regret $\cR_n$ into three parts as follows:
\begin{align}
\vspace*{-2em}
    \cR_n&\overset{(a)}{\leq} \underbrace{\L'_{n-\Gamma}(\pi,\wb,\wSigma_\Gamma) - \L'_{n-\Gamma}(\pi,\wb_*,\wSigma_\Gamma)}_{\textbf{Approximation error}} \nonumber\\
    &+ \underbrace{\L'_{n-\Gamma}(\pi,\wb_*,\wSigma_\Gamma) -  \L_n(\pi,\bb_*,\wSigma_\Gamma)}_{\textbf{Comparing two different loss}}\nonumber\\
    &+ \underbrace{\L_n(\pi,\bb_*,\wSigma_\Gamma) - \L^*_n(\pi,\bb_*,\bSigma_*)}_{\textbf{Estimation error of $\bSigma_*$}}. \label{eq:regret-decomp}
\end{align}
where $(a)$ follows as we show that 
\begin{align}
    & \bL_n(\pi, \wb, \wSigma_\Gamma) \nonumber = \E\left[\big(\sum_{a=1}^A\bw(a)^{\top}(\wtheta_{n-\Gamma} - \btheta_*)\big)^2\right] \nonumber\\
    &\leq \tfrac{1}{n - \Gamma}\left(1+\tfrac{2C d^2  \log (A / \delta)}{\sigma^2_{\min}\Gamma}\right)\sum_{a,a'}\bw(a)^\top\bA_{\wb_*, \wSigma_\Gamma}^{-1}\bw(a')\nonumber\\
    &\coloneqq \L'_{n-\Gamma}(\pi,\wb_*,\wSigma_\Gamma) \label{eq:loss-upper-main-paper},
\end{align}
where $C>0$ is a constant.
Note that the inequality above is shown in \Cref{prop:loss-bandit-tracker} which we discuss in depth in step 2. Finally note that $\wb_*$ is the empirical \PE design  returned by the approximator after it is supplied with $\wSigma_\Gamma$.

\textbf{Step 2 (Bounding the loss $\bL_n(\pi,\wb,\wSigma_\Gamma)$):} In this step we discuss how to upper bound the agnostic loss $\bL_n(\pi,\wb,\wSigma_\Gamma)$ with $\L'_{n-\Gamma}(\pi,\wb_*,\wSigma_\Gamma)
$ as defined in \eqref{eq:loss-upper-main-paper}.

We first state a concentration lemma that is key to proving this upper bound. 
This lemma is novel for our proof because we estimate the underlying covariance matrix $\bSigma_*$ using OLS estimator for $\Gamma$ rounds. We then use the estimation $\wSigma_\Gamma$ in the WLS estimator.
For our lemma, we first define the variance concentration good event under $\Gamma$ rounds of forced exploration as:
\begin{align}
    \xi^{var}_\delta(\Gamma) \coloneqq \bigg\{\forall a, &\bigg|\bx(a)^{\top}(\wSigma_\Gamma - \bSigma_*)\bx(a)\bigg| \nonumber\\
    &\qquad < \dfrac{2C d^2 \sigma^2_{\max} \log ({A}/{\delta})}{\Gamma}\bigg\} \label{eq:event-xi-delta}
\end{align}
\begin{lemma}
\label{lemma:conc}\textbf{(OLS-WLS Concentration Lemma)}
After $\Gamma$ samples of exploration, we can show that $\Pb\left(\xi^{var}_\delta(\Gamma)\right)\geq  1 -8\delta,$
where $C > 0$ is a constant.
\end{lemma}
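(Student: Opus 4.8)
The plan is to decompose the estimation error $\bx(a)^{\top}(\wSigma_\Gamma - \bSigma_*)\bx(a)$ into two sources of noise: the error in the squared-residual "observations" $(R(I_t) - \bv(I_t)^{\top}\wtheta_t)^2$ used as targets in the regression \eqref{eq:wSigma-Gamma}, and the error induced because $\wtheta_t$ (the OLS estimate) is itself only an approximation of $\btheta_*$. First I would write $(R(I_t) - \bv(I_t)^{\top}\wtheta_t)^2 = (\eta_t + \bv(I_t)^{\top}(\btheta_* - \wtheta_t))^2 = \eta_t^2 + 2\eta_t \bv(I_t)^{\top}(\btheta_* - \wtheta_t) + (\bv(I_t)^{\top}(\btheta_* - \wtheta_t))^2$, so that the regression target is an unbiased-up-to-OLS-error estimate of $\sigma^2(I_t) = \langle \bv(I_t)\bv(I_t)^{\top}, \bSigma_*\rangle$ since $\E[\eta_t^2] = \sigma^2(I_t)$. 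Then $\wSigma_\Gamma$ solves a least-squares problem whose solution can be written in closed form via the normal equations in the lifted space $\R^{d^2}$ (vectorizing $\bv(a)\bv(a)^{\top}$), and the error $\wSigma_\Gamma - \bSigma_*$ is the product of the inverse lifted design matrix with a noise vector built from the three terms above.

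The key steps, in order: (i) bound the smallest eigenvalue of the lifted design matrix $\sum_{t=1}^{\Gamma} \operatorname{vec}(\bv(I_t)\bv(I_t)^{\top})\operatorname{vec}(\bv(I_t)\bv(I_t)^{\top})^{\top}$ from below — this is where the forced exploration over $d$ PCA directions and Assumption~\ref{assm:target-dist} (spanning $\R^{d\times d}$) enter, and where a matrix concentration argument (matrix Bernstein/Chernoff, as in \citet{wainwright2019high}) gives that after $\Gamma$ rounds the empirical lifted second-moment matrix is within a constant factor of its population version with probability $\geq 1 - O(\delta)$; (ii) control the OLS error $\|\wtheta_t - \btheta_*\|$ uniformly over $t \le \Gamma$ using a standard self-normalized / least-squares concentration bound together with the lower bound on the (un-lifted) design matrix eigenvalue, giving $\|\wtheta_t - \btheta_*\|^2 = O(d\log(A/\delta)/t)$; (iii) bound the martingale sum $\sum_t \operatorname{vec}(\bv(I_t)\bv(I_t)^{\top})(\eta_t^2 - \sigma^2(I_t))$ — here $\eta_t^2 - \sigma^2(I_t)$ is a bounded, mean-zero, martingale-difference sequence (bounded since $|\eta_t| \le B$), so Azuma–Hoeffding or Freedman gives deviation $O(d\sqrt{\Gamma \log(A/\delta)})$; (iv) bound the cross term $\sum_t \operatorname{vec}(\bv(I_t)\bv(I_t)^{\top})\, 2\eta_t \bv(I_t)^{\top}(\btheta_* - \wtheta_t)$ and the quadratic term $\sum_t \operatorname{vec}(\bv(I_t)\bv(I_t)^{\top})(\bv(I_t)^{\top}(\btheta_* - \wtheta_t))^2$ by plugging in step (ii); note the quadratic term contributes $\sum_t O(d\log(A/\delta)/t) = O(d\log(A/\delta)\log\Gamma)$, which is lower-order. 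Combining, multiplying by the inverse lifted design matrix from step (i), and contracting against $\bx(a)\bx(a)^{\top}$ using $\|\bx(a)\|^2 \le H_U^2$ and a union bound over the $A$ actions yields $|\bx(a)^{\top}(\wSigma_\Gamma - \bSigma_*)\bx(a)| = O(d^2\log(A/\delta)/\Gamma)$ for all $a$, which is the claimed event with failure probability $8\delta$ after tracking constants.

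The main obstacle I expect is step (i) combined with cleanly propagating everything through the lifted/vectorized regression: one has to ensure the forced-exploration distribution actually makes the lifted design matrix well-conditioned (the $d$ chosen PCA directions must span $\R^{d\times d}$ in the outer-product sense, which is not automatic — $d$ rank-one matrices $\bv_i\bv_i^{\top}$ span only a $d$-dimensional subspace of the $\binom{d+1}{2}$-dimensional symmetric matrices, so the argument must rely on the random choice of actions within each direction and on Assumption~\ref{assm:target-dist}), and the dimension-counting there is exactly what produces the $d^2$ rather than $d$ in the bound. A secondary delicate point is the dependence between $\wtheta_t$ and the later samples $I_s$, $s>t$; since $\wtheta_t$ uses only samples up to time $t$ and the forced-exploration actions are drawn i.i.d., the filtration is clean, but the cross term in step (iv) still requires care to keep it a genuine martingale sum so that Freedman's inequality applies.
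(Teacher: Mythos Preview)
Your decomposition of the squared residual into the centered noise $\eta_t^2-\E[\eta_t^2]$, the cross term, and the quadratic OLS-error term $(\bx_t^\top(\btheta_*-\wtheta))^2$ is exactly the paper's starting point (the paper absorbs the cross term via $(a+b)^2\le 2a^2+2b^2$, and uses a single fixed $\wtheta_\Gamma$ rather than time-varying $\wtheta_t$, which removes the filtration concern you raise). Your steps (ii)--(iv) --- sub-exponential concentration for $\eta_t^2$ and least-squares concentration for the OLS error --- also match what the paper does for these terms.

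The gap is your step~(i), and the obstacle you flag is fatal to the approach as stated. The forced exploration samples $d$ \emph{fixed} actions (one per PCA direction), so the lifted design matrix $\sum_t \operatorname{vec}(\bx_t\bx_t^\top)\operatorname{vec}(\bx_t\bx_t^\top)^\top$ has rank at most $d<d(d+1)/2$ and its smallest eigenvalue is exactly zero; no matrix-concentration argument can produce a positive lower bound. Neither proposed escape applies: Assumption~\ref{assm:target-dist} concerns the target policy $\pi$, not the forced-exploration scheme, and the $d$ exploration actions are fixed rather than re-drawn each round.

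The paper sidesteps this by never inverting the lifted design. It uses the basic inequality of least squares: since $\wSigma_\Gamma$ minimizes $\|\Phi S - z\|_2^2$ (with $\Phi$ the stacked outer products and $z$ the direction-averaged squared residuals), optimality plus Cauchy--Schwarz give $\|\Phi(\wSigma_\Gamma-\bSigma_*)\|_2\le 2\|\Phi\bSigma_*-z\|_2$. This controls the \emph{in-sample} error $|\langle\Phi_m,\wSigma_\Gamma-\bSigma_*\rangle|$ at the $d$ sampled directions directly in terms of the noise $\epsilon_t$, with no eigenvalue condition on $\Phi$. Your steps (ii)--(iv) then bound that noise. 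Extending from the sampled directions to all $a\in\A$ is handled separately (under an additional spanning assumption on $\{\bx(a)\bx(a)^\top\}_{a\in\A}$) in the operator-norm lemma that follows the present one.
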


\textbf{Proof (Overview) of \Cref{lemma:conc}:} Note that we construct an initial estimate $\wtheta_\Gamma$ of $\btheta_*$ using OLS estimate based on the first $\Gamma$ rounds of data $\{a_t,r_t\}_{t=1}^{\Gamma}.$ Let the feature of $a_t$ be $\bx_t$ and the squared residual $y_{t}:=(\bx_{t}^{\top}\wtheta_\Gamma-r_{t})^{2}.$ Recall that \sp\ estimates $\bSigma_*$ via $
\min_{\bS\in\R^{d\times d}}\sum_{t=1}^{\Gamma}(\left\langle \bx_{t}\bx_{t}^{\top},\bS\right\rangle -y_{t})^{2}$.
Let $\zeta_\Gamma \coloneqq \wtheta_\Gamma-\btheta_*$. Then we can show that $y_{t}=\bx_{t}^{\top}\bSigma_* \bx_{t}+\epsilon_{t}$ and the noise $\epsilon_t$ can be bounded by 
\begin{align*}
    \epsilon_t =  \underbrace{\eta_{t}^{2} - \E[\eta_t^2]}_{\textbf{Part A}} + \underbrace{2\eta_{t}\bx_{t}^{\top}\zeta_\Gamma}_{\textbf{Part B}} + \underbrace{\left(\bx_{t}^{\top}\zeta_\Gamma\right)^{2}}_{\textbf{Part C}}.
\end{align*}
For the part A, observe that $\eta^2_t$ is a sub-exponential random variable as  $\eta_t\sim\SG(0,\bx_{t}^{\top}\bSigma_* \bx_{t})$. Hence we can use sub-exponential concentration inequality from \Cref{conc-lemma-sub-exp} (\Cref{app:prob-tools}) to bound it. For part C first recall that $\zeta_\Gamma \coloneqq \wtheta_\Gamma-\btheta_*$ and we use  \Cref{lemma:least-square-conc} (\Cref{app:prob-tools}) to bound it.
Finally, for part B, we can decompose $2\eta_{t}\bx_{t}^{\top}\zeta_\Gamma \leq 2\eta_t^2 + \tfrac{1}{2}(\bx_{t}^{\top}\zeta_\Gamma)^2$. Then using the same technique for parts A and C we bound the total deviation for part B.
Combining the three parts gives the desired concentration inequality. The proof is in \Cref{app:loss-bandit-tracker}. \hfill$\blacksquare$

\Cref{lemma:conc} directly leads to  \Cref{corollary:additive} (\Cref{app:regret-linear-bandit}) which shows that for $n\geq 16C^2 d^4\log^2 (A/\delta)/ \sigma^4_{\min}$ we have that $\bL_n(\pi, \wb, \wSigma_\Gamma) \leq  \L'_{n-\Gamma}(\pi,\wb,\wSigma_\Gamma)$.
Compared to earlier work,  \citet{fontaine2021online} does not require this approach as the variances of each action lack a common structure. Similarly, this approach differs from the time-dependent variance model of \citet{ zhang2021improved, zhao2022bandit}.
%
%


\textbf{Step 3 (Bounding the approximation error and comparing two different losses):} For the approximation error in \eqref{eq:regret-decomp} we need access to an optimization oracle  that gives $\epsilon$ approximation error (\Cref{assm:oracle-approx}). Then setting $\epsilon=\tfrac{1}{\sqrt{n}}$ we have that the estimation error is upper bounded by $n^{-3/2}$.
For comparing the two different losses in \eqref{eq:regret-decomp}, we use their definition of to bound it as $O_{\kappa^2, H^2_u}(\tfrac{d^2\log(A/\delta)}{n^{3/2}})$ 
as shown in \eqref{eq:comparing-two-loss} in \Cref{app:loss-alg-1}.

\textbf{Step 4 (Bounding Estimation Error):}
Now observe that the third quantity in \eqref{eq:regret-decomp} (estimation error of $\bSigma_*$) contains $\L_n(\pi, \bb_*, \wSigma_\Gamma)$ that depends on the design matrix $\bA^{-1}_{\bb_*,\wSigma_\Gamma}$ which in turn depends on the estimation of $\wSigma_\Gamma$.
Similarly $\L_n(\pi, \bb_*, \bSigma_*)$ in the third quantity depends on the design matrix $\bA^{-1}_{\bb_*,\bSigma_*}$ which in turn depends on the true $\bSigma_*$.
Hence, we now bound the concentration of the loss under $\bA^{-1}_{\bb_*,\wSigma_\Gamma}$ against the design matrix $\bA^{-1}_{\bb_*,\bSigma_*}$ in the following lemma.
\begin{lemma}
\label{lemma:gradient-conc}
\textbf{(Concentration of the design matrix)} Let $\wSigma_\Gamma$ be the empirical estimate of $\bSigma_*$, and  $\bV=\sum_{a,a'}\bw(a)\bw(a')^{\top}$. For any arbitrary proportion $\bb$, with probability at least $(1-\delta)$, we have the following: 

\begin{align*}
    \bigg|\sum_{a,a'} &\bw(a)^{\top}(\bA^{-1}_{\bb_*, \wSigma_\Gamma} - \bA^{-1}_{\bb_*, \bSigma_*})\bw(a')\bigg|\\
    &\qquad\leq \frac{2C B^* d^3 \sigma^2_{\max}\log (A/ \delta)}{\Gamma},
\end{align*}
where $B^*$ is a problem-dependent quantity
and $C>0$ is a universal constant.
\end{lemma}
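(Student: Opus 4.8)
\textbf{Proof proposal for \Cref{lemma:gradient-conc}.}

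The plan is to reduce the bound on $\big|\sum_{a,a'} \bw(a)^{\top}(\bA^{-1}_{\bb^*, \wSigma_\Gamma} - \bA^{-1}_{\bb^*, \bSigma_*})\bw(a')\big|$ to the already-established pointwise variance concentration of \Cref{lemma:conc}. The first step is to rewrite the design-matrix difference via the standard resolvent identity $\bA^{-1}_{\bb^*,\wSigma_\Gamma} - \bA^{-1}_{\bb^*,\bSigma_*} = \bA^{-1}_{\bb^*,\wSigma_\Gamma}\,(\bA_{\bb^*,\bSigma_*} - \bA_{\bb^*,\wSigma_\Gamma})\,\bA^{-1}_{\bb^*,\bSigma_*}$. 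Now $\bA_{\bb^*,\bSigma_*} - \bA_{\bb^*,\wSigma_\Gamma} = \sum_a \bb^*(a)\,\bx(a)\bx(a)^\top\big(\tfrac{1}{\sigma^2(a)} - \tfrac{1}{\widehat\sigma^2(a)}\big)$, where $\sigma^2(a) = \bx(a)^\top\bSigma_*\bx(a)$ and $\widehat\sigma^2(a) = \bx(a)^\top\wSigma_\Gamma\bx(a)$. On the good event $\xi^{var}_\delta(\Gamma)$ we have $|\widehat\sigma^2(a) - \sigma^2(a)| \le \tfrac{2Cd^2\log(A/\delta)}{\Gamma}$ for every $a$, so $\big|\tfrac{1}{\sigma^2(a)} - \tfrac{1}{\widehat\sigma^2(a)}\big| = \tfrac{|\widehat\sigma^2(a)-\sigma^2(a)|}{\sigma^2(a)\widehat\sigma^2(a)} \le \tfrac{2Cd^2\log(A/\delta)}{\sigma^4_{\min}\Gamma}$ once $\Gamma$ is large enough that $\widehat\sigma^2(a) \ge \sigma^2_{\min}/2$ (which is guaranteed by the theorem's sample-size condition $n \ge O(d^4\log^2(A/\delta)/\sigma^4_{\min})$ together with $\Gamma=\sqrt n$). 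This bounds the perturbation matrix in operator norm: $\|\bA_{\bb^*,\bSigma_*} - \bA_{\bb^*,\wSigma_\Gamma}\|_{op} \le \tfrac{2Cd^2\log(A/\delta)}{\sigma^4_{\min}\Gamma}\sum_a \bb^*(a)\|\bx(a)\|^2 \le \tfrac{2Cd^2 H_U^2 \log(A/\delta)}{\sigma^4_{\min}\Gamma}$, since $\sum_a\bb^*(a)=1$ and $\|\bx(a)\|^2 \le H_U^2$.

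The second step is to convert the operator-norm bound into a bound on the quadratic form. Writing $S := \sum_{a,a'}\bw(a)^\top(\bA^{-1}_{\bb^*,\wSigma_\Gamma}-\bA^{-1}_{\bb^*,\bSigma_*})\bw(a') = \operatorname{tr}\big((\bA^{-1}_{\bb^*,\wSigma_\Gamma}-\bA^{-1}_{\bb^*,\bSigma_*})\bV\big)$ with $\bV = \sum_{a,a'}\bw(a)\bw(a')^\top$, and using the resolvent identity plus submultiplicativity of the operator norm and the trace inequality $|\operatorname{tr}(MN)| \le \|M\|_{op}\operatorname{tr}(N)$ for PSD $N$, we get
\begin{align*}
|S| \le \|\bA^{-1}_{\bb^*,\wSigma_\Gamma}\|_{op}\,\|\bA_{\bb^*,\bSigma_*}-\bA_{\bb^*,\wSigma_\Gamma}\|_{op}\,\|\bA^{-1}_{\bb^*,\bSigma_*}\|_{op}\,\operatorname{tr}(\bV).
\end{align*}
The two inverse-design-matrix operator norms are $1/\lambda_{\min}(\bA_{\bb^*,\cdot})$, which are bounded by problem-dependent constants under \Cref{assm:target-dist} (the support of $\pi$ spans $\R^d$, and combined with the forced exploration this forces $\lambda_{\min}$ away from zero); call the product of these two reciprocals part of $B^*$. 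Finally $\operatorname{tr}(\bV) = \sum_a \|\bw(a)\|^2 \le d\,\lambda_1(\bV)$ (or just $\le \sum_a\pi(a)^2 H_U^2 \le H_U^2$, whichever the paper prefers), contributing the remaining factor. Collecting constants into a single problem-dependent $B^*$ yields $|S| \le \tfrac{2CB^* d^3\log(A/\delta)}{\Gamma}$ on $\xi^{var}_\delta(\Gamma)$; since that event has probability at least $1-8\delta$ by \Cref{lemma:conc} (rescaling $\delta$ by a constant gives the stated $1-\delta$), the claim follows.

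The main obstacle I expect is making the lower bound on $\lambda_{\min}(\bA_{\bb^*,\wSigma_\Gamma})$ rigorous and uniform: $\bb^*$ (and $\wb^*$) are themselves defined as minimizers of a loss that depends on $\wSigma_\Gamma$, so one cannot simply invoke a fixed design; one must argue either (i) that the optimal design for the \PE objective always puts enough mass to keep $\lambda_{\min}$ bounded below by a problem constant, or (ii) restrict to designs mixed with a small amount of a spanning exploration distribution so that $\lambda_{\min}$ is controlled by construction. The $d^3$ scaling (rather than $d^2$) comes precisely from the $\operatorname{tr}(\bV)$ factor being $O(d)$ times the $O(d^2)$ estimation error in $\wSigma_\Gamma$; tracking exactly where each power of $d$ enters — the $d^2$ from \Cref{lemma:conc}, one more $d$ from the trace — is the bookkeeping that needs care but no new ideas.
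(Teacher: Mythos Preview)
Your approach is essentially the paper's: resolvent identity $\bA^{-1}_{\bb^*,\wSigma_\Gamma}-\bA^{-1}_{\bb^*,\bSigma_*}=\bA^{-1}_{\bb^*,\bSigma_*}(\bA_{\bb^*,\bSigma_*}-\bA_{\bb^*,\wSigma_\Gamma})\bA^{-1}_{\bb^*,\wSigma_\Gamma}$, then control the middle factor via the pointwise variance concentration of \Cref{lemma:conc}. The one substantive difference is how the two outer factors are handled, and this is exactly where your stated ``main obstacle'' dissolves.

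You pass to operator norms, $|S|\le \|\bA^{-1}_{\bb^*,\wSigma_\Gamma}\|_{op}\|\Delta\|_{op}\|\bA^{-1}_{\bb^*,\bSigma_*}\|_{op}\Tr(\bV)$, and then worry about lower-bounding $\lambda_{\min}(\bA_{\bb^*,\wSigma_\Gamma})$ uniformly over data-dependent designs. The paper instead keeps the quantity as a bilinear form: since $\sum_{a,a'}\bw(a)^\top M\bw(a')=\bw^\top M\bw$ with $\bw:=\sum_a\bw(a)$ (equivalently $\bV=\bw\bw^\top$ is rank one), Cauchy--Schwarz gives $|\bw^\top\bA^{-1}_{\bb^*,\bSigma_*}\Delta\,\bA^{-1}_{\bb^*,\wSigma_\Gamma}\bw|\le \|\bA^{-1}_{\bb^*,\bSigma_*}\bw\|\,\|\Delta\|\,\|\bA^{-1}_{\bb^*,\wSigma_\Gamma}\bw\|$. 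The first vector norm is a fixed problem constant and goes into $B^*$. For the second, the paper does not bound $\lambda_{\min}$; it uses $\widehat\sigma^2(a)\le \sigma^2(a)+\tfrac{2Cd^2\log(A/\delta)}{\Gamma}$ (from \Cref{lemma:conc}) to get a deterministic PSD lower bound $\bA_{\bb^*,\wSigma_\Gamma}\succeq \sum_a\tfrac{\bb^*(a)\bx(a)\bx(a)^\top}{\sigma^2(a)+2Cd^2\log(A/\delta)/\Gamma}$, whose inverse applied to $\bw$ is again a problem-dependent constant absorbed into $B^*$. This sidesteps entirely the issue you flag about the optimal design possibly degenerating.

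Two small corrections to your bookkeeping. First, $\Tr(\bV)\neq\sum_a\|\bw(a)\|^2$; because $\bV=\bw\bw^\top$ is rank one, $\Tr(\bV)=\|\bw\|^2=\lambda_1(\bV)$, so ``$\Tr(\bV)\le d\lambda_1(\bV)$'' is true but vacuous and is not where the extra $d$ comes from. Second, in the paper the extra factor of $d$ over the $d^2$ from \Cref{lemma:conc} enters through the operator-norm concentration of $\wSigma_\Gamma-\bSigma_*$ (their \Cref{lemma:conc-operator}), which brings in a $\lambda_{\min}^{-1}(\bY)$-type factor when passing from pointwise to spectral control; it is then folded into the final constant.
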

\textbf{Proof (Overview) of \Cref{lemma:gradient-conc}:} We can upper bound 
    $|\sum_{a,a'} \!\bw(a)^{\top}(\bA^{-1}_{\bb_*, \wSigma_\Gamma}\!-\!\bA^{-1}_{\bb_*, \bSigma_*})\bw(a')| 
    \!\leq\! \|\bu\|\!\underbrace{\left\|\bA_{\bb_*,\bSigma_*} - \bA_{\bb_*,\wSigma_\Gamma}\right\|}_{\Delta}\!\|\mathbf{v}\| $
where, $\|\bu\|=\|\bA^{-1}_{\bb_*, \bSigma_*} \bw\|$ and $\|\mathbf{v}\| = \|\bA^{-1}_{\bb_*, \wSigma_\Gamma}\bw\|$. First, observe that $\|\bu\|$ is a problem-dependent quantity. Then to bound $\Delta$ we use the \Cref{lemma:conc} on the concentration of $\wsigma^2_\Gamma(a)$.
%
%
Finally to bound $\|\mathbf{v}\|$ we need to bound $\wsigma^2_\Gamma(a) \leq \sigma^2(a) + \frac{2C d^2 \sigma^2_{\max}\log (A / \delta)}{\Gamma}$ where $\wsigma^2_\Gamma(a)$ is the empirical variance of $\sigma^2(a)$. Combining everything yields the desired result. The proof is in \Cref{app:regret-linear-bandit} \hfill $\blacksquare$

One of our key technical contributions in \Cref{lemma:gradient-conc} is to show that the difference between the two losses $\L_n(\pi, \bb_*, \wSigma_\Gamma)$, and $\L_n(\pi, \bb_*, \bSigma_*)$ scales with $d^3$ instead of the number of actions $A$. In contrast to prior work, a similar loss concentration in \citet{fontaine2021online} scales with $A$. 
Now using \Cref{lemma:gradient-conc}, setting the exploration factor $\Gamma=\sqrt{n}$, and $\delta=\frac{1}{n}$ we can show that the estimation error is upper bounded by 
$\frac{B^*Cd^3 \sigma^2_{\max}\log (n )}{\sigma^2_{\min} n^{3/2}}  + \frac{d^2}{n^2}\Tr(\sum_{a,a'}\bw(a)\bw(a')^\top)$.
Combining steps 1 -- 4 we have the regret of \sp\ as $ O_{\kappa^2,H^2_U}(\tfrac{B^*d^3 \sigma^2_{\max} \log (n )}{\sigma^2_{\min} n^{3/2}})$. The full proof of \Cref{thm:regret-linear-bandit} is in \Cref{app:speed-regret}. \hfill$\blacksquare$.

%

\subsection{Lower Bound}
\Cref{thm:regret-linear-bandit} upper bounds the regret of our agnostic algorithm \sp\ compared to an oracle algorithm with knowledge of $\bSigma_*$.
To quantify the tightness of our upper bound, we now turn to the question of whether we can lower bound the regret for any behavior policy learning algorithm.
For our final theoretical result, we consider a slightly different notion of regret:
$
\cR'_n \coloneqq \L_n(\pi, \wb, \bSigma_*) - \L_n(\pi, \bb_*, \bSigma_*).
$
This notion of regret captures how sub-optimal the estimated $\wb$ is compared to $\bb_*$, \textit{without} additional error incurred by using an estimate of $\bSigma_*$ in the WLS estimator.
We conjecture that $\cR'_n$ is indeed a lower bound to $\cR_n$ as we have established in \Cref{prop:linear-bandit} that the minimum variance estimator is the WLS estimator using $\bSigma_*$. 
Intuitively, $\L_{n}(\pi,\wb,\bSigma_*)$ is a lower bound to $\overline{\L}_n(\pi, \wb, \wSigma_\Gamma)$ as estimation error will likely increase when using $\wSigma_\Gamma$ in place of $\bSigma_*$ in the WLS estimator.
We leave proving that $\cR'_n$ is a lower bound to $\cR_n$ to future work.

\begin{customtheorem}{2}\!\!\!\textbf{(Lower Bound)}
\label{thm:minimax} 
\!\!\!Let $|\bTheta| \!\!=\!\! 2^d$, $\btheta_*\!\in\!\bTheta$. Then any arbitrary $\delta$-PAC policy following the design $\bb\in\triangle(\A)$  
satisfies $\cR'_{n} \!\!=\!\! \L_{n}(\pi,\bb,\bSigma_*) \!-\! \L_{n}(\pi,\bb_*,\bSigma_*) \!\geq\! \Omega\left(\frac{d^2\lambda_d(\bV)\log({n})}{{n}^{3/2}}\right)$ for the environment specified
in \eqref{eq:minimax-environment}. 
\end{customtheorem}
\textbf{Proof (Overview:)} The proof follows the change of measure argument \citep{lattimore2020bandit} and we follow the proof technique of \citet{huang2017structured, mukherjee2022chernoff}. We reduce the policy evaluation problem to the hypothesis testing setting and state a worst-case environment as in \eqref{eq:minimax-environment}. We then show that the regret of any $\delta$-PAC algorithm against an oracle in this environment must scale as $\Omega(\log n/n^{3/2})$. The proof is given in \Cref{app:lower-regret-bound}. \hfill$\blacksquare$    

From the above result, the upper bound of \sp\ regret $\cR_n$ matches the lower bound of regret $\cR'_n$ in $n$ but suffers an additional factor of $d$. 

\section{EXPERIMENTS}
\label{sec:expts}

We now conduct numerical experiments to show that \sp\ decreases MSE faster than other baselines. 
These experiments complement our theoretical analysis as they do not have the conditions on budget $n$ required in \Cref{thm:regret-linear-bandit}.
Thus, our experimental analysis will show that the theoretically motivated \sp\ algorithm still provides benefit even outside of the sample regime considered in theory.
%
%
As baselines, we compare against \onp, \ora, \ao \citep{fontaine2021online}, and \go \citep{wan2022safe}. 
The \onp\ algorithm simply runs the target policy to collect data, whereas the \ora\ (as discussed in \Cref{sec:loss-def}) samples according to the optimal $\bb_*.$
%
Of existing optimal design methods, \ao{}, and \go are the closest in relation to our work. 
We experiment with \ao design because this criterion minimizes the average variance of the estimates of the regression coefficients and is most closely aligned with our goal.
%
The work of \citet{wan2022safe} considers data collection under safety constraints using Inverse Propensity Weighting. In our unconstrained policy evaluation setting their approach boils down to just G-optimal design.
Further experimental details are in \Cref{app:addl-expt}.
%
%
\begin{figure}[!ht]
\centering
\begin{tabular}{cc}
\hspace*{-1.2em}\includegraphics[scale = 0.25]{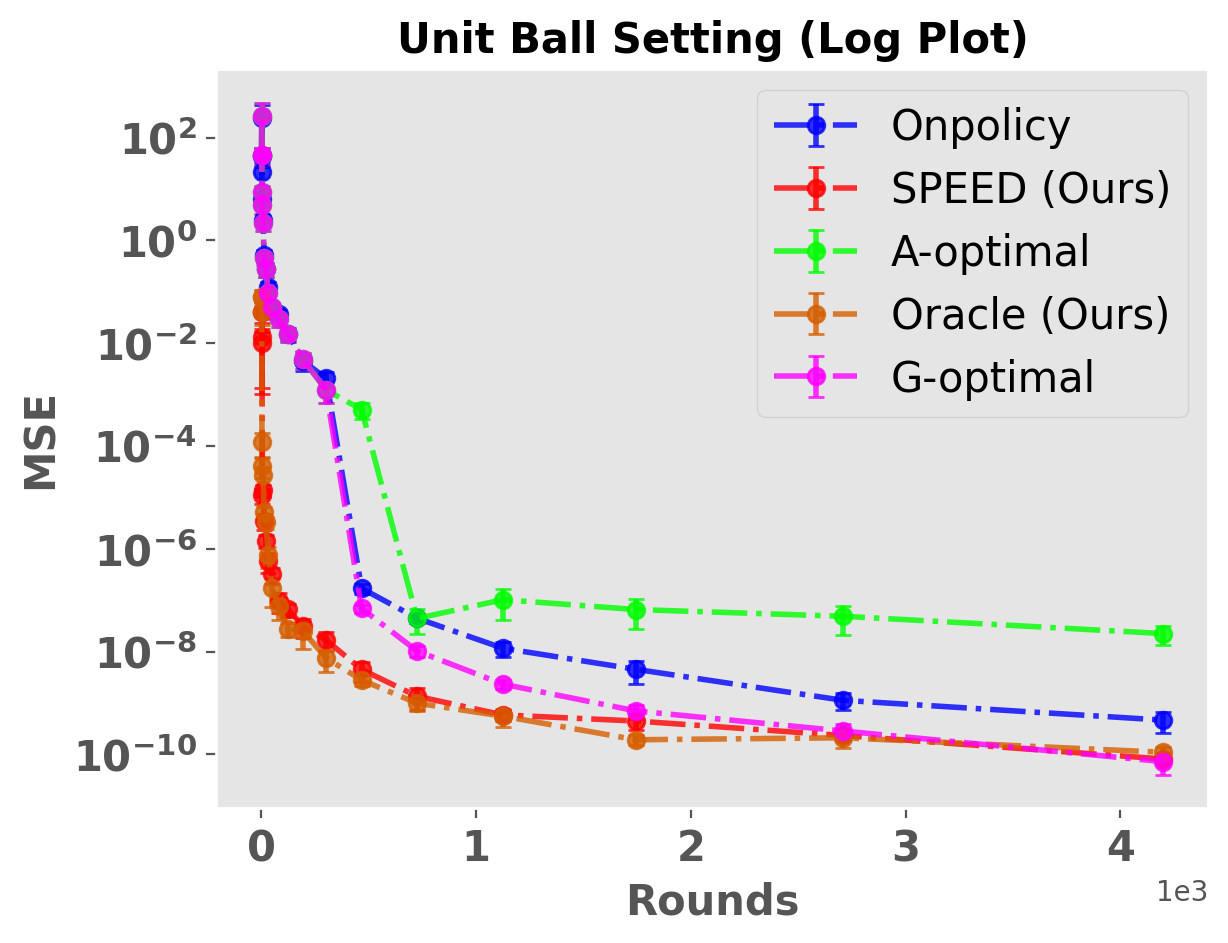} &
\label{fig:expt-linear}\hspace*{-1.2em}\includegraphics[scale = 0.25]{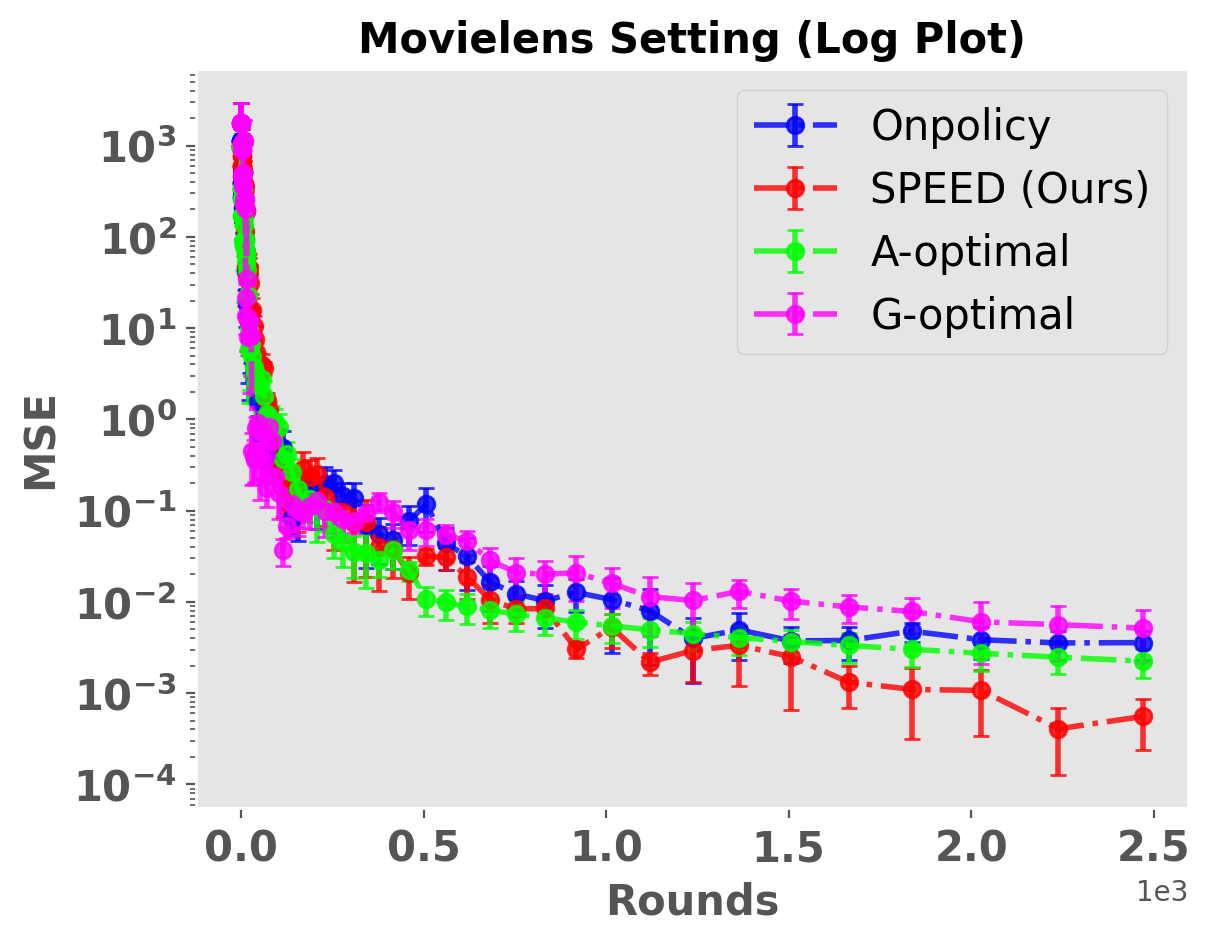} \\
\label{fig:env1}\hspace*{-1.2em}\includegraphics[scale = 0.25]{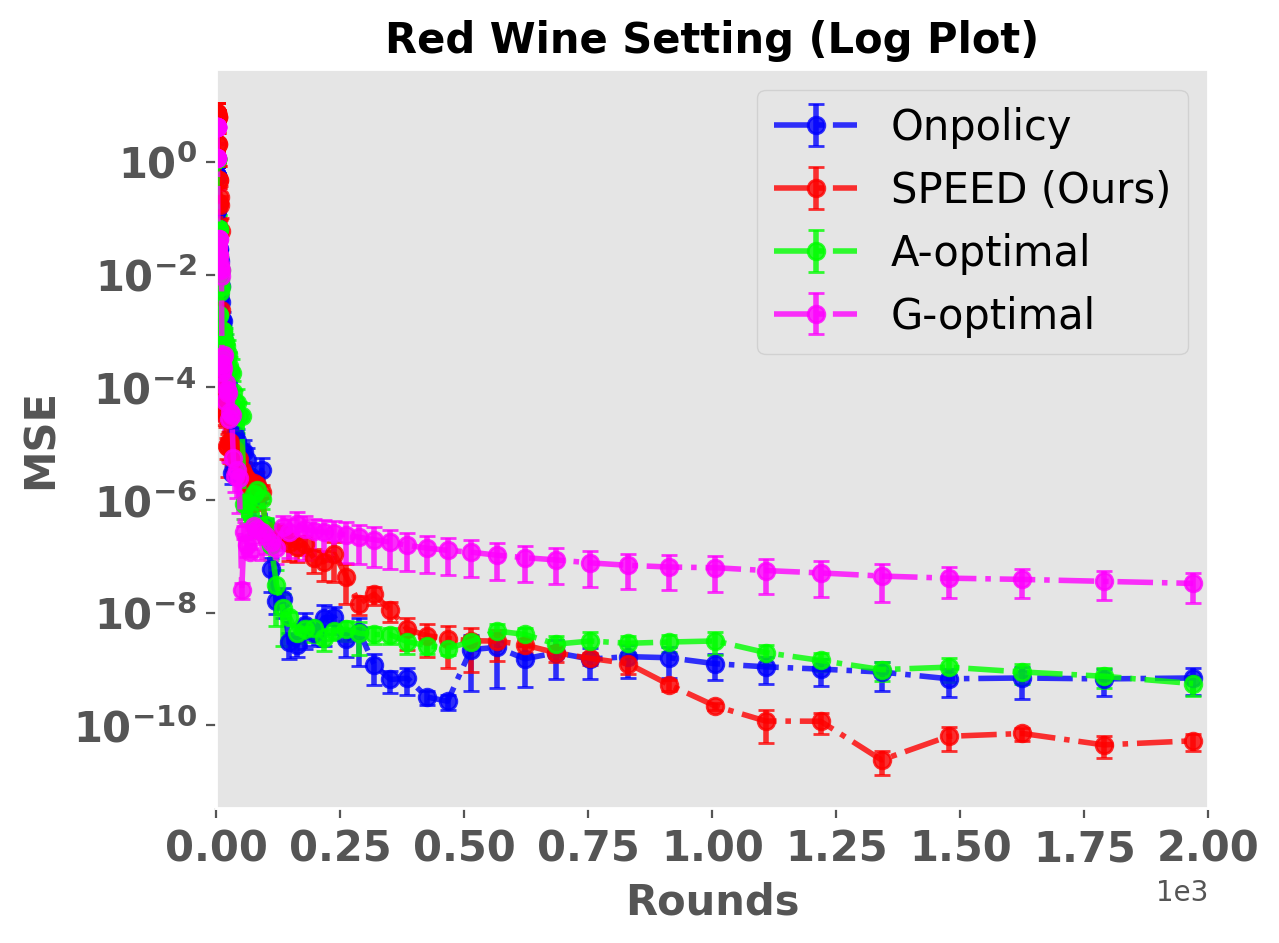} &
\label{fig:expt-linear1}\hspace*{-1.2em}\includegraphics[scale = 0.25]{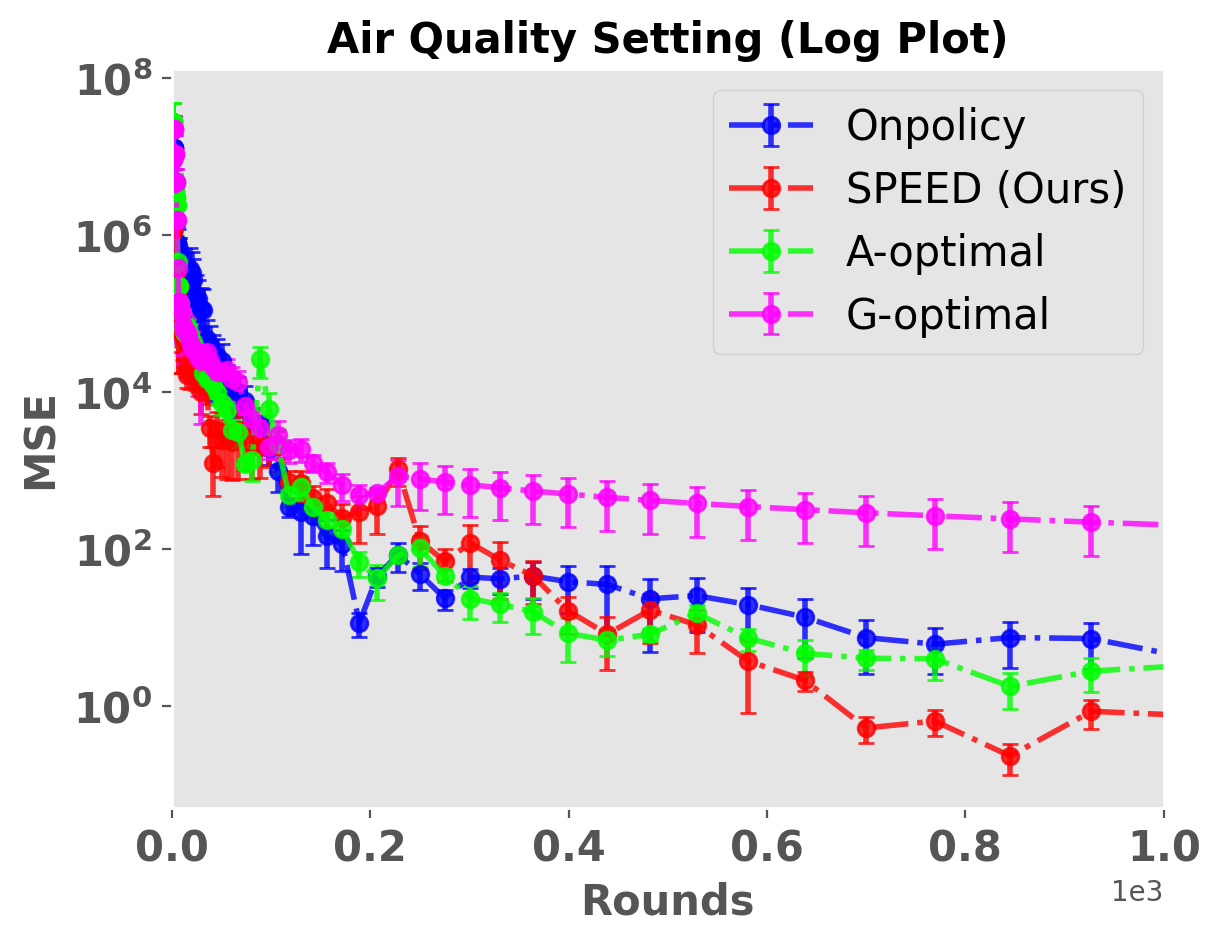} 
\end{tabular}
\vspace{-0.5em}
\caption{(Top-left) MSE plot for the Unit ball. (Top-right) MSE plot for the Movielens dataset. (Bottom-left) MSE plot for Red Wine Quality dataset. (Bottom-right) MSE plot for Air Quality dataset. The vertical axis gives MSE and the horizontal axis is the number of rounds. 
The vertical axis is log-scaled and confidence bars show one standard error. }
\label{fig:linear-expt}
\end{figure}

\textbf{Unit Ball:} We perform this experiment on a set of $5$ actions that are arranged in a unit ball in $\R^2$ to show that \sp\ allocates proportion to the most informative action (weighted by their variance). 
\Cref{fig:linear-expt} (Top Left) shows that \sp\  reduces the MSE faster than \onp, \go, and \ao. 
We also include \ora\ in this setting to show how quickly \sp\ converges to it. However, for settings based on real-life data, we do not have such oracles.

\textbf{Movielens Dataset:}  
Consider a startup that wants to recommend movies to users based on their ratings. 
They have access to a target policy and want to evaluate it on a limited informative dataset before deploying it for full public use. 
We use real-world Movielens 1M dataset \citep{movielens} datasets for this experiment. 
We apply low-rank factorization to the rating matrix to obtain $5$-dimensional representations of users and movies.
We then fit a weighted least square estimate of $\btheta_*$ and $\bSigma_*$.
We generate the reward using this $\btheta_*$ and $\bSigma_*$.
Then we use \sp\ and other baselines to generate the small informative dataset to evaluate the target policy and this experiment is shown in \Cref{fig:linear-expt} (Top Right).
\sp\ initially conducts forced exploration to estimate $\btheta_*$, $\bSigma_*$ and incurs slightly higher MSE but the MSE decreases faster than other baselines as the number of rounds increases.

\textbf{Red Wine Quality:} Consider an online wine company that wants to recommend wines to users and wants to evaluate a target policy before full deployment. 
We perform this  experiment on real-world dataset \textit{Red Wine Quality} from UCI datasets \citep{cortez2009modeling}.
The dataset consists of $1600$ samples (actions) of red wine with each sample $a$ having feature $\bx(a)\in\mathbb{R}^{11}$ and their ratings. We fit a weighted least square estimate to the original dataset and get an estimate of $\btheta_*$ and $\bSigma_*$.
%
%
Then we use \sp\ to generate the informative dataset to evaluate the target policy. 
\Cref{fig:linear-expt3} (Bottom-left) shows  \sp\  outperforming other baselines as horizon increases.

\textbf{Air Quality:} We now consider a setting where a government agency wants to record air quality and notify the public. However, it wants to evaluate a target policy on a limited informative dataset before full deployment. 
We perform this experiment on real-world dataset \textit{Air-Quality} from UCI datasets \citep{de2008field}.
The dataset consists of $1500$ samples (actions) with each sample $a$ having feature $\bx(a)\in\mathbb{R}^{6}$ and their air quality value. Similar to red wine dataset we estimate of $\btheta_*$ and $\bSigma_*$.
%
%
Then we use \sp\ and other baselines (which do not know $\btheta_*$ and $\bSigma_*$) to generate the informative dataset to evaluate the target policy and this experiment is shown in \Cref{fig:linear-expt} (Bottom-right). 
%
%
Observe that \sp's MSE decreases faster than other baselines as the number of rounds increases.

\section{CONCLUSIONS AND FUTURE DIRECTIONS}
\label{sec:conclusions}

We proposed \sp\ for optimal data collection for policy evaluation in linear bandits with  heteroscedastic reward noise. We formulated a novel optimal design problem, \PE design, for which the optimal behavior policy is the solution that will produce minimal MSE policy evaluation when using a weighted least square estimate of the hidden reward parameters $\btheta_*$ and $\bSigma_*$.
We showed the regret of \sp\ degrades at the rate of $\widetilde{O}(d^3n^{-3/2})$ and matches the lower bound of $\widetilde{O}(d^2n^{-3/2})$ except a factor of $d$.
In contrast the \onp\ suffers a regret of $\widetilde{O}(n^{-1})$ \citep{carpentier2015adaptive}.
We showed empirically that our design outperforms other optimal designs. 
In future work, we intend to extend the result to a more general class of hard problems such as collecting data to minimize the MSE of multiple target policies.

\section*{Acknowledgements}
We would like to thank all the reviewers for their helpful feedback.
Q.\ Xie is supported in part by NSF grant CNS-1955997. J.\ Hanna is supported in part by American Family Insurance through a research partnership
with the University of Wisconsin-Madison’s Data Science Institute. We also thank Justin Weltz, Blake Mason, and Lalit Jain for pointing out errors in the previous version of this paper and helping to improve the draft.

\bibliography{biblio}

\begin{thebibliography}{68}
\providecommand{\natexlab}[1]{#1}
\providecommand{\url}[1]{\texttt{#1}}
\expandafter\ifx\csname urlstyle\endcsname\relax
  \providecommand{\doi}[1]{doi: #1}\else
  \providecommand{\doi}{doi: \begingroup \urlstyle{rm}\Url}\fi

\bibitem[Abbasi-Yadkori et~al.(2011)Abbasi-Yadkori, P{\'a}l, and
  Szepesv{\'a}ri]{abbasi2011improved}
Yasin Abbasi-Yadkori, D{\'a}vid P{\'a}l, and Csaba Szepesv{\'a}ri.
\newblock Improved algorithms for linear stochastic bandits.
\newblock \emph{Advances in neural information processing systems}, 24, 2011.

\bibitem[Agrawal and Goyal(2012)]{agrawal2012analysis}
Shipra Agrawal and Navin Goyal.
\newblock Analysis of thompson sampling for the multi-armed bandit problem.
\newblock In \emph{Conference on learning theory}, pages 39--1. JMLR Workshop
  and Conference Proceedings, 2012.

\bibitem[Antos et~al.(2008)Antos, Grover, and Szepesv{\'a}ri]{antos2008active}
Andr{\'a}s Antos, Varun Grover, and Csaba Szepesv{\'a}ri.
\newblock Active learning in multi-armed bandits.
\newblock In \emph{International Conference on Algorithmic Learning Theory},
  pages 287--302. Springer, 2008.

\bibitem[Auer et~al.(2002)Auer, Cesa-Bianchi, and Fischer]{auer2002finite-time}
Peter Auer, Nicolò Cesa-Bianchi, and Paul Fischer.
\newblock Finite-time {Analysis} of the {Multiarmed} {Bandit} {Problem}.
\newblock \emph{Machine Learning}, 47\penalty0 (2):\penalty0 235--256, May
  2002.
\newblock ISSN 1573-0565.
\newblock \doi{10.1023/A:1013689704352}.
\newblock URL \url{https://doi.org/10.1023/A:1013689704352}.

\bibitem[Berthet and Perchet(2017)]{berthet2017fast}
Quentin Berthet and Vianney Perchet.
\newblock Fast rates for bandit optimization with upper-confidence frank-wolfe.
\newblock \emph{Advances in Neural Information Processing Systems}, 30, 2017.

\bibitem[Bottou et~al.(2013)Bottou, Peters, Qui{\~n}onero-Candela, Charles,
  Chickering, Portugaly, Ray, Simard, and Snelson]{bottou2013counterfactual}
L{\'e}on Bottou, Jonas Peters, Joaquin Qui{\~n}onero-Candela, Denis~X Charles,
  D~Max Chickering, Elon Portugaly, Dipankar Ray, Patrice Simard, and
  Ed~Snelson.
\newblock Counterfactual reasoning and learning systems: The example of
  computational advertising.
\newblock \emph{Journal of Machine Learning Research}, 14\penalty0 (11), 2013.

\bibitem[Bouchard et~al.(2016)Bouchard, Trouillon, Perez, and
  Gaidon]{bouchard2016online}
Guillaume Bouchard, Th{\'e}o Trouillon, Julien Perez, and Adrien Gaidon.
\newblock Online learning to sample.
\newblock \emph{arXiv preprint arXiv:1506.09016}, 2016.

\bibitem[Cai et~al.(2021)Cai, Shi, Song, and Lu]{cai2021deep}
Hengrui Cai, Chengchun Shi, Rui Song, and Wenbin Lu.
\newblock Deep jump learning for off-policy evaluation in continuous treatment
  settings.
\newblock \emph{Advances in Neural Information Processing Systems},
  34:\penalty0 15285--15300, 2021.

\bibitem[Carpentier and Munos(2011)]{carpentier2011finite}
Alexandra Carpentier and R{\'e}mi Munos.
\newblock Finite-time analysis of stratified sampling for monte carlo.
\newblock In \emph{NIPS-Twenty-Fifth Annual Conference on Neural Information
  Processing Systems}, 2011.

\bibitem[Carpentier and Munos(2012)]{carpentier2012minimax}
Alexandra Carpentier and R{\'e}mi Munos.
\newblock Minimax number of strata for online stratified sampling given noisy
  samples.
\newblock In \emph{International Conference on Algorithmic Learning Theory},
  pages 229--244. Springer, 2012.

\bibitem[Carpentier et~al.(2015)Carpentier, Munos, and
  Antos]{carpentier2015adaptive}
Alexandra Carpentier, Remi Munos, and Andr{\'a}s Antos.
\newblock Adaptive strategy for stratified monte carlo sampling.
\newblock \emph{J. Mach. Learn. Res.}, 16:\penalty0 2231--2271, 2015.

\bibitem[Chaudhuri et~al.(2017)Chaudhuri, Jain, and
  Natarajan]{chaudhuri2017active}
Kamalika Chaudhuri, Prateek Jain, and Nagarajan Natarajan.
\newblock Active heteroscedastic regression.
\newblock In \emph{International Conference on Machine Learning}, pages
  694--702. PMLR, 2017.

\bibitem[Ciosek and Whiteson(2017)]{ciosek2017offer}
Kamil Ciosek and Shimon Whiteson.
\newblock O{F}{F}{E}{R}: Off-environment reinforcement learning.
\newblock In \emph{Proceedings of the 31st AAAI Conference on Artificial
  Intelligence (AAAI)}, 2017.

\bibitem[Corrado and Hanna(2023)]{corrado_on-policy_2023}
Nicholas Corrado and Josiah~P. Hanna.
\newblock On-policy policy gradient reinforcement learning without on-policy
  sampling.
\newblock In \emph{Arxiv Pre-print}, September 2023.
\newblock URL \url{https://arxiv.org/abs/2311.08290}.

\bibitem[Cortez et~al.(2009)Cortez, Cerdeira, Almeida, Matos, and
  Reis]{cortez2009modeling}
Paulo Cortez, Ant{\'o}nio Cerdeira, Fernando Almeida, Telmo Matos, and Jos{\'e}
  Reis.
\newblock Modeling wine preferences by data mining from physicochemical
  properties.
\newblock \emph{Decision support systems}, 47\penalty0 (4):\penalty0 547--553,
  2009.

\bibitem[De~Vito et~al.(2008)De~Vito, Massera, Piga, Martinotto, and
  Di~Francia]{de2008field}
Saverio De~Vito, Ettore Massera, Marco Piga, Luca Martinotto, and Girolamo
  Di~Francia.
\newblock On field calibration of an electronic nose for benzene estimation in
  an urban pollution monitoring scenario.
\newblock \emph{Sensors and Actuators B: Chemical}, 129\penalty0 (2):\penalty0
  750--757, 2008.

\bibitem[Dud{\'\i}k et~al.(2014)Dud{\'\i}k, Erhan, Langford, and
  Li]{dudik2014doubly}
Miroslav Dud{\'\i}k, Dumitru Erhan, John Langford, and Lihong Li.
\newblock Doubly robust policy evaluation and optimization.
\newblock 2014.

\bibitem[Fang et~al.(1994)Fang, Loparo, and Feng]{fang1994inequalities}
Yuguang Fang, Kenneth~A Loparo, and Xiangbo Feng.
\newblock Inequalities for the trace of matrix product.
\newblock \emph{IEEE Transactions on Automatic Control}, 39\penalty0
  (12):\penalty0 2489--2490, 1994.

\bibitem[Fedorov(2013)]{fedorov2013theory}
Valerii~Vadimovich Fedorov.
\newblock \emph{Theory of optimal experiments}.
\newblock Elsevier, 2013.

\bibitem[Fiez et~al.(2019)Fiez, Jain, Jamieson, and
  Ratliff]{fiez2019sequential}
Tanner Fiez, Lalit Jain, Kevin~G Jamieson, and Lillian Ratliff.
\newblock Sequential experimental design for transductive linear bandits.
\newblock \emph{Advances in neural information processing systems}, 32, 2019.

\bibitem[Fontaine et~al.(2021)Fontaine, Perrault, Valko, and
  Perchet]{fontaine2021online}
Xavier Fontaine, Pierre Perrault, Michal Valko, and Vianney Perchet.
\newblock Online a-optimal design and active linear regression.
\newblock In \emph{International Conference on Machine Learning}, pages
  3374--3383. PMLR, 2021.

\bibitem[Greene(2002)]{greene2002000}
William~H Greene.
\newblock 000. econometric analysis, 2002.

\bibitem[Hanna et~al.(2017)Hanna, Thomas, Stone, and
  Niekum]{hanna2017data-efficient}
Josiah~P. Hanna, Philip~S. Thomas, Peter Stone, and Scott Niekum.
\newblock Data-{Efficient} {Policy} {Evaluation} {Through} {Behavior} {Policy}
  {Search}.
\newblock \emph{arXiv:1706.03469 [cs]}, June 2017.
\newblock URL \url{http://arxiv.org/abs/1706.03469}.
\newblock arXiv: 1706.03469.

\bibitem[Honorio and Jaakkola(2014)]{honorio2014tight}
Jean Honorio and Tommi Jaakkola.
\newblock Tight bounds for the expected risk of linear classifiers and
  pac-bayes finite-sample guarantees.
\newblock In \emph{Artificial Intelligence and Statistics}, pages 384--392.
  PMLR, 2014.

\bibitem[Huang et~al.(2017)Huang, Ajallooeian, Szepesv{\'{a}}ri, and
  M{\"{u}}ller]{huang2017structured}
Ruitong Huang, Mohammad~M. Ajallooeian, Csaba Szepesv{\'{a}}ri, and Martin
  M{\"{u}}ller.
\newblock Structured best arm identification with fixed confidence.
\newblock In Steve Hanneke and Lev Reyzin, editors, \emph{International
  Conference on Algorithmic Learning Theory, {ALT} 2017, 15-17 October 2017,
  Kyoto University, Kyoto, Japan}, volume~76 of \emph{Proceedings of Machine
  Learning Research}, pages 593--616. {PMLR}, 2017.
\newblock URL \url{http://proceedings.mlr.press/v76/huang17a.html}.

\bibitem[Jamieson and Jain(2022)]{jamieson2022interactive}
Kevin Jamieson and Lalit Jain.
\newblock Interactive machine learning.
\newblock 2022.

\bibitem[Kallus et~al.(2021)Kallus, Saito, and Uehara]{kallus2021optimal}
Nathan Kallus, Yuta Saito, and Masatoshi Uehara.
\newblock Optimal off-policy evaluation from multiple logging policies.
\newblock In \emph{International Conference on Machine Learning}, pages
  5247--5256. PMLR, 2021.

\bibitem[Katz-Samuels et~al.(2020)Katz-Samuels, Jain, Jamieson,
  et~al.]{katz2020empirical}
Julian Katz-Samuels, Lalit Jain, Kevin~G Jamieson, et~al.
\newblock An empirical process approach to the union bound: Practical
  algorithms for combinatorial and linear bandits.
\newblock \emph{Advances in Neural Information Processing Systems},
  33:\penalty0 10371--10382, 2020.

\bibitem[Katz-Samuels et~al.(2021)Katz-Samuels, Zhang, Jain, and
  Jamieson]{katz2021improved}
Julian Katz-Samuels, Jifan Zhang, Lalit Jain, and Kevin Jamieson.
\newblock Improved algorithms for agnostic pool-based active classification.
\newblock In \emph{International Conference on Machine Learning}, pages
  5334--5344. PMLR, 2021.

\bibitem[Kiefer and Wolfowitz(1960)]{kiefer1960equivalence}
Jack Kiefer and Jacob Wolfowitz.
\newblock The equivalence of two extremum problems.
\newblock \emph{Canadian Journal of Mathematics}, 12:\penalty0 363--366, 1960.

\bibitem[Kirschner and Krause(2018)]{kirschner2018information}
Johannes Kirschner and Andreas Krause.
\newblock Information directed sampling and bandits with heteroscedastic noise.
\newblock In \emph{Conference On Learning Theory}, pages 358--384. PMLR, 2018.

\bibitem[Kohavi and Longbotham(2017)]{kohavi2017online}
Ron Kohavi and Roger Longbotham.
\newblock Online controlled experiments and a/b testing.
\newblock \emph{Encyclopedia of machine learning and data mining}, 7\penalty0
  (8):\penalty0 922--929, 2017.

\bibitem[Lacoste-Julien and Jaggi(2013)]{lacoste2013affine}
Simon Lacoste-Julien and Martin Jaggi.
\newblock An affine invariant linear convergence analysis for frank-wolfe
  algorithms.
\newblock \emph{arXiv preprint arXiv:1312.7864}, 2013.

\bibitem[Lai and Robbins(1985)]{lai1985asymptotically}
T.~L Lai and Herbert Robbins.
\newblock Asymptotically efficient adaptive allocation rules.
\newblock \emph{Advances in Applied Mathematics}, 6\penalty0 (1):\penalty0
  4--22, March 1985.
\newblock ISSN 0196-8858.
\newblock \doi{10.1016/0196-8858(85)90002-8}.
\newblock URL
  \url{https://www.sciencedirect.com/science/article/pii/0196885885900028}.

\bibitem[Lam and Herlocker(2016)]{movielens}
Shyong Lam and Jon Herlocker.
\newblock {MovieLens Dataset}.
\newblock http://grouplens.org/datasets/movielens/, 2016.

\bibitem[Lattimore and Szepesv{\'a}ri(2020{\natexlab{a}})]{lattimore2018bandit}
Tor Lattimore and Csaba Szepesv{\'a}ri.
\newblock \emph{Bandit algorithms}.
\newblock Cambridge University Press, 2020{\natexlab{a}}.

\bibitem[Lattimore and Szepesv{\'a}ri(2020{\natexlab{b}})]{lattimore2020bandit}
Tor Lattimore and Csaba Szepesv{\'a}ri.
\newblock \emph{Bandit algorithms}.
\newblock Cambridge University Press, 2020{\natexlab{b}}.

\bibitem[Li et~al.(2011)Li, Chu, Langford, and Wang]{li2011unbiased}
Lihong Li, Wei Chu, John Langford, and Xuanhui Wang.
\newblock Unbiased offline evaluation of contextual-bandit-based news article
  recommendation algorithms.
\newblock In \emph{Proceedings of the fourth ACM international conference on
  Web search and data mining}, pages 297--306, 2011.

\bibitem[Li et~al.(2015)Li, Munos, and Szepesv{\'a}ri]{li2015toward}
Lihong Li, R{\'e}mi Munos, and Csaba Szepesv{\'a}ri.
\newblock Toward minimax off-policy value estimation.
\newblock In \emph{Artificial Intelligence and Statistics}, pages 608--616.
  PMLR, 2015.

\bibitem[Li et~al.(2024)Li, Shi, Wang, Zhou, et~al.]{li2024optimal}
Ting Li, Chengchun Shi, Jianing Wang, Fan Zhou, et~al.
\newblock Optimal treatment allocation for efficient policy evaluation in
  sequential decision making.
\newblock \emph{Advances in Neural Information Processing Systems}, 36, 2024.

\bibitem[Mason et~al.(2021)Mason, Camilleri, Mukherjee, Jamieson, Nowak, and
  Jain]{mason2021nearly}
Blake Mason, Romain Camilleri, Subhojyoti Mukherjee, Kevin Jamieson, Robert
  Nowak, and Lalit Jain.
\newblock Nearly optimal algorithms for level set estimation.
\newblock \emph{arXiv preprint arXiv:2111.01768}, 2021.

\bibitem[Mukherjee et~al.(2022{\natexlab{a}})Mukherjee, Hanna, and
  Nowak]{mukherjee2022revar}
Subhojyoti Mukherjee, Josiah~P Hanna, and Robert~D Nowak.
\newblock Revar: Strengthening policy evaluation via reduced variance sampling.
\newblock In \emph{Uncertainty in Artificial Intelligence}, pages 1413--1422.
  PMLR, 2022{\natexlab{a}}.

\bibitem[Mukherjee et~al.(2022{\natexlab{b}})Mukherjee, Tripathy, and
  Nowak]{mukherjee2022chernoff}
Subhojyoti Mukherjee, Ardhendu~S Tripathy, and Robert Nowak.
\newblock Chernoff sampling for active testing and extension to active
  regression.
\newblock In \emph{International Conference on Artificial Intelligence and
  Statistics}, pages 7384--7432. PMLR, 2022{\natexlab{b}}.

\bibitem[Mukherjee et~al.(2023)Mukherjee, Zhu, and
  Kveton]{mukherjee2023efficient}
Subhojyoti Mukherjee, Ruihao Zhu, and Branislav Kveton.
\newblock Efficient and interpretable bandit algorithms.
\newblock \emph{arXiv preprint arXiv:2310.14751}, 2023.

\bibitem[Mukherjee et~al.(2024)Mukherjee, Xie, Hanna, and
  Nowak]{mukherjee2024multi}
Subhojyoti Mukherjee, Qiaomin Xie, Josiah Hanna, and Robert Nowak.
\newblock Multi-task representation learning for pure exploration in bilinear
  bandits.
\newblock \emph{Advances in Neural Information Processing Systems}, 36, 2024.

\bibitem[Oosterhuis and de~Rijke(2020)]{oosterhuis2020taking}
Harrie Oosterhuis and Maarten de~Rijke.
\newblock Taking the {Counterfactual} {Online}: {Efficient} and {Unbiased}
  {Online} {Evaluation} for {Ranking}.
\newblock \emph{Proceedings of the 2020 ACM SIGIR on International Conference
  on Theory of Information Retrieval}, pages 137--144, September 2020.
\newblock \doi{10.1145/3409256.3409820}.
\newblock URL \url{http://arxiv.org/abs/2007.12719}.
\newblock arXiv: 2007.12719.

\bibitem[Pukelsheim(2006)]{pukelsheim2006optimal}
Friedrich Pukelsheim.
\newblock \emph{Optimal design of experiments}.
\newblock SIAM, 2006.

\bibitem[Rigollet and H{\"u}tter(2015)]{rigollet2015high}
Phillippe Rigollet and Jan-Christian H{\"u}tter.
\newblock High dimensional statistics.
\newblock \emph{Lecture notes for course 18S997}, 813\penalty0 (814):\penalty0
  46, 2015.

\bibitem[Riquelme et~al.(2017)Riquelme, Ghavamzadeh, and
  Lazaric]{riquelme2017active}
Carlos Riquelme, Mohammad Ghavamzadeh, and Alessandro Lazaric.
\newblock Active learning for accurate estimation of linear models.
\newblock In \emph{International Conference on Machine Learning}, pages
  2931--2939. PMLR, 2017.

\bibitem[Rusmevichientong and Tsitsiklis(2010)]{rusmevichientong2010linearly}
Paat Rusmevichientong and John~N Tsitsiklis.
\newblock Linearly parameterized bandits.
\newblock \emph{Mathematics of Operations Research}, 35\penalty0 (2):\penalty0
  395--411, 2010.

\bibitem[Su et~al.(2020)Su, Dimakopoulou, Krishnamurthy, and
  Dud{\'\i}k]{su2020doubly}
Yi~Su, Maria Dimakopoulou, Akshay Krishnamurthy, and Miroslav Dud{\'\i}k.
\newblock Doubly robust off-policy evaluation with shrinkage.
\newblock In \emph{International Conference on Machine Learning}, pages
  9167--9176. PMLR, 2020.

\bibitem[Sutton and Barto(2018)]{sutton2018reinforcement}
Richard~S Sutton and Andrew~G Barto.
\newblock \emph{Reinforcement learning: An introduction}.
\newblock MIT press, 2018.

\bibitem[Swaminathan et~al.(2017)Swaminathan, Krishnamurthy, Agarwal, Dudik,
  Langford, Jose, and Zitouni]{swaminathan2017off}
Adith Swaminathan, Akshay Krishnamurthy, Alekh Agarwal, Miro Dudik, John
  Langford, Damien Jose, and Imed Zitouni.
\newblock Off-policy evaluation for slate recommendation.
\newblock \emph{Advances in Neural Information Processing Systems}, 30, 2017.

\bibitem[Thompson(1933)]{thompson1933likelihood}
William~R. Thompson.
\newblock On the likelihood that one unknown probability exceeds another in
  view of the evidence of two samples.
\newblock \emph{Biometrika}, 25\penalty0 (3-4):\penalty0 285--294, December
  1933.
\newblock ISSN 0006-3444.
\newblock \doi{10.1093/biomet/25.3-4.285}.
\newblock URL \url{https://doi.org/10.1093/biomet/25.3-4.285}.

\bibitem[Tsybakov(2008)]{tsybakov2008introduction}
Alexandre~B Tsybakov.
\newblock \emph{Introduction to nonparametric estimation}.
\newblock Springer Science \& Business Media, 2008.

\bibitem[Tucker and Joachims(2022)]{tucker2022variance-optimal}
Aaron~David Tucker and Thorsten Joachims.
\newblock Variance-{Optimal} {Augmentation} {Logging} for {Counterfactual}
  {Evaluation} in {Contextual} {Bandits}.
\newblock \emph{arXiv:2202.01721 [cs]}, February 2022.
\newblock URL \url{http://arxiv.org/abs/2202.01721}.
\newblock arXiv: 2202.01721.

\bibitem[Wainwright(2019)]{wainwright2019high}
Martin~J Wainwright.
\newblock \emph{High-dimensional statistics: A non-asymptotic viewpoint},
  volume~48.
\newblock Cambridge university press, 2019.

\bibitem[Wan et~al.(2022)Wan, Kveton, and Song]{wan2022safe}
Runzhe Wan, Branislav Kveton, and Rui Song.
\newblock Safe exploration for efficient policy evaluation and comparison.
\newblock \emph{arXiv preprint arXiv:2202.13234}, 2022.

\bibitem[Wang et~al.(2017)Wang, Agarwal, and Dud{\i}k]{wang2017optimal}
Yu-Xiang Wang, Alekh Agarwal, and Miroslav Dud{\i}k.
\newblock Optimal and adaptive off-policy evaluation in contextual bandits.
\newblock In \emph{International Conference on Machine Learning}, pages
  3589--3597. PMLR, 2017.

\bibitem[Whittle(1958)]{whittle1958multivariate}
P~Whittle.
\newblock A multivariate generalization of tchebichev's inequality.
\newblock \emph{The Quarterly Journal of Mathematics}, 9\penalty0 (1):\penalty0
  232--240, 1958.

\bibitem[Zhang et~al.(2021)Zhang, Yang, Ji, and Du]{zhang2021improved}
Zihan Zhang, Jiaqi Yang, Xiangyang Ji, and Simon~S Du.
\newblock Improved variance-aware confidence sets for linear bandits and linear
  mixture mdp.
\newblock \emph{Advances in Neural Information Processing Systems},
  34:\penalty0 4342--4355, 2021.

\bibitem[Zhao et~al.(2022)Zhao, Zhou, He, and Gu]{zhao2022bandit}
Heyang Zhao, Dongruo Zhou, Jiafan He, and Quanquan Gu.
\newblock Bandit learning with general function classes: Heteroscedastic noise
  and variance-dependent regret bounds.
\newblock \emph{arXiv preprint arXiv:2202.13603}, 2022.

\bibitem[Zhong et~al.(2022)Zhong, Zhang, Sch\"afer, Albrecht, and
  Hanna]{zhong2022robust}
Rujie Zhong, Duohan Zhang, Lukas Sch\"afer, Stefano~V. Albrecht, and Josiah~P.
  Hanna.
\newblock Robust on-policy sampling for data-efficient policy evaluation.
\newblock In \emph{Proceedings of Advances in Neural Information Processing
  Systems (NeurIPS)}, December 2022.

\bibitem[Zhou and Gu(2022)]{zhou2022computationally}
Dongruo Zhou and Quanquan Gu.
\newblock Computationally efficient horizon-free reinforcement learning for
  linear mixture mdps.
\newblock \emph{arXiv preprint arXiv:2205.11507}, 2022.

\bibitem[Zhou et~al.(2021)Zhou, Gu, and Szepesvari]{zhou2021nearly}
Dongruo Zhou, Quanquan Gu, and Csaba Szepesvari.
\newblock Nearly minimax optimal reinforcement learning for linear mixture
  markov decision processes.
\newblock In \emph{Conference on Learning Theory}, pages 4532--4576. PMLR,
  2021.

\bibitem[Zhou et~al.(2017)Zhou, Mayer-Hamblett, Khan, and
  Kosorok]{zhou2017residual}
Xin Zhou, Nicole Mayer-Hamblett, Umer Khan, and Michael~R Kosorok.
\newblock Residual weighted learning for estimating individualized treatment
  rules.
\newblock \emph{Journal of the American Statistical Association}, 112\penalty0
  (517):\penalty0 169--187, 2017.

\bibitem[Zhu and Kveton(2021)]{zhu2021safe}
Ruihao Zhu and Branislav Kveton.
\newblock Safe data collection for offline and online policy learning.
\newblock \emph{arXiv preprint arXiv:2111.04835}, 2021.

\bibitem[Zhu and Kveton(2022)]{pmlr-v151-zhu22a}
Ruihao Zhu and Branislav Kveton.
\newblock Safe optimal design with applications in off-policy learning.
\newblock In Gustau Camps-Valls, Francisco J.~R. Ruiz, and Isabel Valera,
  editors, \emph{Proceedings of The 25th International Conference on Artificial
  Intelligence and Statistics}, volume 151 of \emph{Proceedings of Machine
  Learning Research}, pages 2436--2447. PMLR, 28--30 Mar 2022.
\newblock URL \url{https://proceedings.mlr.press/v151/zhu22a.html}.

\end{thebibliography}
\bibliographystyle{plainnat}


\section*{Checklist}

 \begin{enumerate}

 \item For all models and algorithms presented, check if you include:
 \begin{enumerate}
   \item A clear description of the mathematical setting, assumptions, algorithm, and/or model. [Yes] We present the model and assumptions in Section 2. The algorithms are presented in Section 3 and Section 4. 
   \item An analysis of the properties and complexity (time, space, sample size) of any algorithm. [Yes] Results are presented in Section 3 and Section 4.
   \item (Optional) Anonymized source code, with specification of all dependencies, including external libraries. [Yes] In supplementary material.
 \end{enumerate}

 \item For any theoretical claim, check if you include:
 \begin{enumerate}
   \item Statements of the full set of assumptions of all theoretical results. [Yes] See Section 2 for the details.
   \item Complete proofs of all theoretical results. [Yes] Proofs outlines are provided in the main paper, with all detailed proofs are deferred to the appendix. 
   \item Clear explanations of any assumptions. [Yes]     
 \end{enumerate}

 \item For all figures and tables that present empirical results, check if you include:
 \begin{enumerate}
   \item The code, data, and instructions needed to reproduce the main experimental results (either in the supplemental material or as a URL). [Yes] In the supplementary material.
   \item All the training details (e.g., data splits, hyperparameters, how they were chosen). [Yes] In the supplementary material.
         \item A clear definition of the specific measure or statistics and error bars (e.g., with respect to the random seed after running experiments multiple times). [Yes] In the supplementary material.
         \item A description of the computing infrastructure used. (e.g., type of GPUs, internal cluster, or cloud provider). [Not Applicable]
 \end{enumerate}

 \item If you are using existing assets (e.g., code, data, models) or curating/releasing new assets, check if you include:
 \begin{enumerate}
   \item Citations of the creator If your work uses existing assets. [Yes]
   \item The license information of the assets, if applicable. [Not Applicable]
   \item New assets either in the supplemental material or as a URL, if applicable. [Not Applicable]
   \item Information about consent from data providers/curators. [Not Applicable]
   \item Discussion of sensible content if applicable, e.g., personally identifiable information or offensive content. [Not Applicable]
 \end{enumerate}

 \item If you used crowdsourcing or conducted research with human subjects, check if you include:
 \begin{enumerate}
   \item The full text of instructions given to participants and screenshots. [Not Applicable]
   \item Descriptions of potential participant risks, with links to Institutional Review Board (IRB) approvals if applicable. [Not Applicable]
   \item The estimated hourly wage paid to participants and the total amount spent on participant compensation. [Not Applicable]
 \end{enumerate}

 \end{enumerate}

\newpage
\appendix
\onecolumn

\section{APPENDIX}
\subsection{Related Works and Motivations}
\label{sec:related}

Our work is most closely related to existing work on data collection for policy evaluation.
Perhaps the most natural choice of behavior policy is to simply run the target policy, i.e., on-policy data collection \citep{sutton2018reinforcement}.
%
%
The works in adaptive Monte Carlo for bandits \citep{oosterhuis2020taking,tucker2022variance-optimal} and MDPs \citep{hanna2017data-efficient,ciosek2017offer,bouchard2016online,zhong2022robust,corrado_on-policy_2023} have shown how to lower the variance of Monte Carlo estimation through the choice of behavior policy. 
In contrast to these works, we consider estimating $v(\pi)$ by estimating the reward distributions rather than using Monte Carlo estimation.
Such \textit{certainty-equivalence} estimators take advantage of the setting's structure and are thus typically of lower variance than Monte Carlo estimators \citep{sutton2018reinforcement}.
The work of \citet{wan2022safe} studies a different estimator for reducing the variance of the importance sampling in constrained MDP setting whereas we study certainty equivalence estimator.
Another set of work has studied sample allocation for stratified Monte Carlo estimators -- a problem that is formally equivalent to behavior policy selection for policy evaluation in the bandit setting with linearly independent arms \citep{antos2008active,carpentier2015adaptive}.
This line of work was recently extended to tabular, tree-structured MDPs by \citet{mukherjee2022revar}.
In contrast, we consider the structured linear bandit setting which incorporates generalization across actions.
\citet{li2024optimal} use A-optimal design to find an optimal behavior policy for the doubly robust estimator. Their focus is different though as they consider tabular MDPs rather than linear heteroscedastic bandits.

Our work is closely related to optimal experimental design and active learning literature. 
We formulate determining the optimal behavior policy in the bandit setting as an optimal design problem.
In contrast to prior work, we introduce a new type of optimality that is tailored to the policy evaluation problem. We are also, to the best of our knowledge, the first to consider both heteroscedastic noise and weighted least squares estimators in formulating our design.
The heteroscedastic noise model and weighted least squares estimator have been considered by \cite{chaudhuri2017active} in the active learning literature and in linear bandit setting by \citet{kirschner2018information} using information directed sampling.
In contrast to these works (and the active learning setting in general), we aim to minimize the weighted error $\sum_{a\in\bA} \pi(a) \bv(a)^\top (\btheta^* - \wtheta)^2$ whereas in the active learning setting the goal is to minimize $\|\btheta^* - \wtheta\|^2$ which results in A-optimal design \citep{fontaine2021online, pukelsheim2006optimal}. Moreover the regret bounds in \citet{fontaine2021online} holds for $d=|\A|$.
\cite{riquelme2017active} extends the results of \cite{carpentier2011finite} to a different linear regression setting than ours but under the homoscedastic noise model. 
%

Data collection for policy evaluation is also related to the problem of exploration for policy learning in MDPs or best-arm identification in bandits.
In those contexts, the aim of exploration is to find the optimal policy and the exploration-exploitation trade-off describes the tension between reducing uncertainty and focusing on known promising actions.
In bandits, the exploration-exploitation trade-off is often navigated under the ``\textbf{O}ptimism in the \textbf{F}ace of \textbf{U}ncertainty" principle using techniques such as UCB \citep{lai1985asymptotically,auer2002finite-time, abbasi2011improved} or Thompson Sampling \citep{thompson1933likelihood, agrawal2012analysis}.
%
%
In contrast to the standard exploration problem, we focus on evaluating a fixed policy.
Instead of balancing exploration and exploitation, a behavior policy for policy evaluation should take actions that reduce uncertainty about $v(\pi)$ with emphasis on actions that have high probability under $\pi$. 
Also, note that heteroscedastic bandits have been studied from the perspective of policy improvement \citep{kirschner2018information,zhao2022bandit} however, in this paper we focus on optimal data collection for policy evaluation.

We note that heteroscedasticity is also studied for the policy improvement setup \citep{kirschner2018information, zhou2022computationally, zhou2021nearly, zhang2021improved, zhao2022bandit}. In these prior works the reward variances are time-dependent as opposed to the quadratic structure studied in this paper.
%
%
Note that policy improvement requires a different approach than policy evaluation. These works build tight confidence sets around the unknown model parameter $\btheta_*$ by employing weighted ridge regression involving an estimated upper bound to the time-dependent variances. However, in our setting, the variances of each action share the unknown low dimensional co-variance matrix $\bSigma_*$. Hence we deviate from these approaches and employ an alternating OLS-WLS estimation to learn the underlying parameter $\bSigma_*$.


%



\subsection{Probability Tools}
\label{app:prob-tools}
\begin{lemma}\textbf{\citep{kiefer1960equivalence}}
Assume that $\A \subset \mathbb{R}^{d}$ is compact and $\operatorname{span}(\A)=\mathbb{R}^{d}$. Let $\pi: \A \rightarrow[0,1]$ be a distribution on $\A$ so that $\sum_{a \in \A} \pi(a)=1$ and $\bV(\pi) \in \mathbb{R}^{d \times d}$ and $g(\pi) \in \mathbb{R}$ be given by

\begin{align*}
\bV(\pi)=\sum_{a \in \A} \pi(a) a a^{\top}, \quad g(\pi)=\max _{a \in \A}\|a\|_{\tX(\pi)^{-1}}^{2}
\end{align*}
Then the following are equivalent:
\begin{enumerate}[(a)]
    \item $\pi^{*}$ is a minimizer of $g$.
    \item  $\pi^{*}$ is a maximizer of $f(\pi)=\log \det \bV(\pi)$.
    \item $g\left(\pi^{*}\right)=d$.
\end{enumerate}
Furthermore, there exists a minimizer $\pi^{*}$ of $g$ such that $\left|\operatorname{Supp}\left(\pi^{*}\right)\right| \leq d(d+1) / 2$.
\end{lemma}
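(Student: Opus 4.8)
The plan is to recognize this as the Kiefer--Wolfowitz General Equivalence Theorem and prove (a)$\iff$(b)$\iff$(c) from concavity of $f(\pi)=\log\det\bV(\pi)$ together with a first-order optimality test on the simplex, then obtain the support bound from complementary slackness and Carath\'eodory's theorem. First I would record the structural facts: $\pi\mapsto\bV(\pi)$ is linear on the probability simplex $\triangle(\A)$, and $\log\det$ is concave on the cone of positive definite matrices, so $f$ is concave and upper semicontinuous; since $\operatorname{span}(\A)=\R^d$ there exists $\pi$ with $\bV(\pi)\succ 0$, and as $f=-\infty$ wherever $\bV(\pi)$ is singular, any maximizer $\pi^*$ (which exists by compactness) has $\bV(\pi^*)\succ 0$, so $\bV(\pi^*)^{-1}$ is well defined; likewise $g(\pi)<\infty$ forces $\bV(\pi)\succ 0$.

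Next I would compute the directional derivative of $f$ at any feasible $\pi$ with $\bV(\pi)\succ 0$ toward the point mass $\delta_a$. Writing $\bV\big((1-\alpha)\pi+\alpha\delta_a\big)=\bV(\pi)+\alpha\big(aa^\top-\bV(\pi)\big)$ and using $\tfrac{d}{d\alpha}\log\det(\mathbf{M}+\alpha\mathbf{N})\big|_{\alpha=0}=\operatorname{tr}(\mathbf{M}^{-1}\mathbf{N})$ with $\mathbf{M}=\bV(\pi)$, $\mathbf{N}=aa^\top-\bV(\pi)$,
\[
D_f(\pi;a):=\lim_{\alpha\downarrow 0}\frac{f\big((1-\alpha)\pi+\alpha\delta_a\big)-f(\pi)}{\alpha}=\operatorname{tr}\!\big(\bV(\pi)^{-1}(aa^\top-\bV(\pi))\big)=a^\top\bV(\pi)^{-1}a-d .
\]
Since $f$ is concave and differentiable at $\pi^*$, and the simplex is the convex hull of the point masses, $\pi^*$ maximizes $f$ iff $D_f(\pi^*;a)\le 0$ for all $a\in\A$, i.e. $g(\pi^*)=\max_{a\in\A}a^\top\bV(\pi^*)^{-1}a\le d$. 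On the other hand, for every feasible $\pi$ with $\bV(\pi)\succ0$ we have $\sum_a\pi(a)\,a^\top\bV(\pi)^{-1}a=\operatorname{tr}\big(\bV(\pi)^{-1}\bV(\pi)\big)=d$, so $g(\pi)\ge d$ always. Hence $\pi^*$ maximizes $f\iff g(\pi^*)\le d\iff g(\pi^*)=d$, and because $g\ge d$ everywhere with the value $d$ attained (at any $f$-maximizer), also $g(\pi^*)=d\iff\pi^*$ minimizes $g$; this is exactly (a)$\iff$(c)$\iff$(b).

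For the support bound I would fix a maximizer $\pi^*$ of $f$ (equivalently, a minimizer of $g$) and apply complementary slackness: since $\sum_a\pi^*(a)\big(a^\top\bV(\pi^*)^{-1}a-d\big)=\operatorname{tr}\big(\bV(\pi^*)^{-1}\bV(\pi^*)\big)-d=0$ while each summand is $\le 0$ (by the optimality condition $D_f(\pi^*;a)\le 0$), every $a\in\operatorname{supp}(\pi^*)$ satisfies $a^\top\bV(\pi^*)^{-1}a=d$, i.e. $\langle\bV(\pi^*)^{-1},\,aa^\top\rangle=d$. Thus the rank-one matrices $\{aa^\top:a\in\operatorname{supp}(\pi^*)\}$ all lie in one affine hyperplane of the $\tfrac{d(d+1)}{2}$-dimensional space of symmetric matrices, a set of affine dimension $\tfrac{d(d+1)}{2}-1$; since $\bV(\pi^*)$ is a convex combination of these matrices, Carath\'eodory applied inside that hyperplane writes $\bV(\pi^*)$ as a convex combination of at most $\tfrac{d(d+1)}{2}$ of them. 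The associated design $\pi'$ has $\bV(\pi')=\bV(\pi^*)$, hence is still $f$-optimal and (by the equivalence) a minimizer of $g$, with $|\operatorname{supp}(\pi')|\le\tfrac{d(d+1)}{2}$.

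The step I expect to be the main obstacle is making the first-order characterization airtight: justifying the one-sided directional derivative at boundary points of the simplex, the matrix-calculus identity (differentiating $\log\det$ only where $\bV(\pi^*)\succ 0$), and, for the \emph{sharp} constant, noticing that the active support lies on a hyperplane in symmetric-matrix space, which is exactly what reduces the naive Carath\'eodory count $\tfrac{d(d+1)}{2}+1$ to $\tfrac{d(d+1)}{2}$.
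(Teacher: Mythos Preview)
Your proof is correct and is the standard modern route to the Kiefer--Wolfowitz theorem. Note, however, that the paper does not actually prove this lemma: it is stated in the appendix as a cited result from \cite{kiefer1960equivalence} with no proof attached. The paper does prove a closely related variant (Proposition~4, the Kiefer--Wolfowitz analogue for the heteroscedastic \PE design), and it is instructive to compare your argument to that one.

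For the equivalence (a)$\iff$(b)$\iff$(c), both arguments are essentially the same: compute the gradient of $\log\det$ (the paper does it via the adjugate identity $\bA^{-1}=\operatorname{adj}(\bA)/\det(\bA)$, you do it via the directional derivative toward a point mass), use the averaging identity $\sum_a\pi(a)\,a^\top\bV(\pi)^{-1}a=d$, and invoke first-order optimality on the simplex. Your packaging is slightly cleaner because you go straight to directional derivatives toward extreme points rather than partial derivatives in the weights.

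Where the two genuinely diverge is the support bound. The paper's Proposition~4 proof uses an explicit \emph{perturbation} argument: if the support exceeds $d(d+1)/2$, the matrices $\{aa^\top\}$ are linearly dependent in the space of symmetric matrices, so there is a nonzero signed measure $v$ with $\sum_a v(a)\,aa^\top=0$; sliding along $\pi^*+tv$ preserves $\bV$ (hence optimality) and eventually kills a support point, and one iterates. Your argument instead uses complementary slackness to place all active $aa^\top$ on the hyperplane $\langle\bV(\pi^*)^{-1},\cdot\rangle=d$ and then applies Carath\'eodory inside that hyperplane to get the sharp count $d(d+1)/2$ in one shot. Both are valid; your Carath\'eodory route is shorter and makes the role of the affine constraint (which is exactly what shaves the ``$+1$'') more transparent, while the paper's perturbation argument is more constructive and generalizes readily to the weighted design matrix $\bA_{\bb,\bSigma}$ used in Proposition~4.
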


\begin{lemma}
\label{conc-lemma-sub-exp}
\textbf{(Sub-Exponential Concentration)}
Suppose that $X$ is sub-exponential with parameters $(\nu, \alpha)$. Then
\begin{align*}
\Pb[X \geq \mu+t] \leq \begin{cases}e^{-\frac{t^{2}}{2 \nu^{2}}} & \text { if } 0 \leq t \leq \frac{\nu^{2}}{\alpha} \\ e^{-\frac{t}{2 \alpha}} & \text { if } t>\frac{\nu^{2}}{\alpha}\end{cases}
\end{align*}
which can be equivalently written as follows:
\begin{align*}
\Pb[X \geq \mu+t] \leq \exp \left\{-\frac{1}{2} \min \left\{\frac{t}{\alpha}, \frac{t^{2}}{\nu^{2}}\right\}\right\}.
\end{align*}
\end{lemma}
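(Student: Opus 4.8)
The plan is to prove the bound by the standard Chernoff (exponential Markov) method: control the moment generating function (MGF), tilt by a parameter $\lambda$, and then optimize over the admissible range of $\lambda$, splitting into two regimes according to whether the unconstrained optimizer is feasible. I will use as the working definition of sub-exponentiality that $\E[e^{\lambda(X-\mu)}] \le e^{\nu^{2}\lambda^{2}/2}$ holds for all $|\lambda| \le 1/\alpha$, which is the parametrization implicit in the statement.

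First I would apply Markov's inequality to the nonnegative random variable $e^{\lambda(X-\mu)}$ for an arbitrary admissible $\lambda \in [0, 1/\alpha]$, giving
\[
\Pb[X \ge \mu + t] = \Pb\big[e^{\lambda(X-\mu)} \ge e^{\lambda t}\big] \le e^{-\lambda t}\,\E\big[e^{\lambda(X-\mu)}\big] \le \exp\!\Big(-\lambda t + \tfrac{\nu^{2}\lambda^{2}}{2}\Big).
\]
Denote the exponent by $g(\lambda) = -\lambda t + \nu^{2}\lambda^{2}/2$, a strictly convex parabola whose unconstrained minimizer is $\lambda^{\star} = t/\nu^{2}$. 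The remaining task is to minimize $g$ over the feasible interval $[0, 1/\alpha]$ and read off the resulting exponent.

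Then I would split into the two regimes of the statement. In the regime $0 \le t \le \nu^{2}/\alpha$, the minimizer $\lambda^{\star} = t/\nu^{2}$ satisfies $\lambda^{\star} \le 1/\alpha$ and is therefore admissible; substituting it gives $g(\lambda^{\star}) = -t^{2}/(2\nu^{2})$, which yields the first branch $e^{-t^{2}/(2\nu^{2})}$. In the complementary regime $t > \nu^{2}/\alpha$, the minimizer $\lambda^{\star} = t/\nu^{2}$ exceeds $1/\alpha$, so $g$ is strictly decreasing on all of $[0, 1/\alpha]$ and the best admissible choice is the boundary $\lambda = 1/\alpha$. Evaluating there gives $g(1/\alpha) = -t/\alpha + \nu^{2}/(2\alpha^{2})$, and since $t > \nu^{2}/\alpha$ is exactly the condition $\nu^{2}/(2\alpha^{2}) \le t/(2\alpha)$, I can bound this by $-t/(2\alpha)$, giving the tail branch $e^{-t/(2\alpha)}$. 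Finally, observing that $t \le \nu^{2}/\alpha$ is precisely the condition under which $t^{2}/\nu^{2} \le t/\alpha$, the two branches collapse into the unified form $\exp\{-\tfrac{1}{2}\min(t/\alpha,\, t^{2}/\nu^{2})\}$ stated in the lemma.

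The main obstacle is the constrained optimization in the tail regime: because the sub-exponential MGF control is only valid for $|\lambda| \le 1/\alpha$, one cannot simply substitute the unconstrained optimizer, and must instead argue monotonicity of $g$ on the feasible set and then clean up via the elementary inequality $\nu^{2}/(2\alpha^{2}) \le t/(2\alpha)$, which holds exactly when $t > \nu^{2}/\alpha$. Everything else is a routine Chernoff computation, so I would keep the write-up focused on correctly justifying the boundary case and the matching of the piecewise and unified forms.
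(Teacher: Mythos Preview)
Your proposal is correct and is the standard Chernoff argument for sub-exponential tails. Note that the paper does not actually supply a proof of this lemma: it is listed in Appendix~\ref{app:prob-tools} among the ``Probability Tools'' as a reference result, stated without proof. So there is no paper proof to compare against; your write-up is exactly the textbook derivation one would cite (e.g., Wainwright, \emph{High-Dimensional Statistics}, Prop.~2.9), and in particular your handling of the boundary case $\lambda=1/\alpha$ via the inequality $\nu^{2}/(2\alpha^{2})\le t/(2\alpha)$ is the right way to obtain the second branch $e^{-t/(2\alpha)}$ (the displayed $e^{-1/(2\alpha)}$ in the statement is evidently a typo, as the unified form confirms).
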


\begin{lemma}
\label{lemma:least-square-conc}
\textbf{(Restatement of Theorem 2.2 in \cite{rigollet2015high})} Assume that the linear model holds where the noise $\varepsilon \sim \operatorname{subG}_{n}\left(\sigma^{2}\right)$. Then the least squares estimator $\wtheta_{\Gamma}$ satisfies
\begin{align*}
\mathbb{E}\left[\operatorname{MSE}\left(\bX \wtheta_{\Gamma}\right)\right]=\frac{1}{n} \mathbb{E}\left|\bX \wtheta_{\Gamma}-\bX \theta^{*}\right|_{2}^{2} \lesssim \sigma^{2} \frac{r}{n}
\end{align*}
where $r=\operatorname{rank}\left(\bX^{\top} \bX\right)$. Moreover, for any $\delta>0$, with probability at least $1-\delta$, it holds
\begin{align*}
\operatorname{MSE}\left(\bX \wtheta_{\Gamma}\right) \lesssim \sigma^{2} \frac{r+\log (1 / \delta)}{n}
\end{align*}
\end{lemma}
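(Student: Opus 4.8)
The plan is to exploit the fact that the least-squares fitted value is an orthogonal projection, and then reduce everything to a tail bound for a sub-Gaussian vector restricted to an $r$-dimensional subspace. First I would write $\bX\hat\theta^{\mathrm{LS}} = P Y$, where $P = \bX(\bX^\top\bX)^{+}\bX^\top$ is the orthogonal projection onto the column space $\mathrm{col}(\bX)$ (the pseudoinverse handles the rank-deficient case $r<d$). Since $\bX\theta^* \in \mathrm{col}(\bX)$ we have $P\bX\theta^* = \bX\theta^*$, and substituting $Y = \bX\theta^* + \varepsilon$ yields the clean identity $\bX\hat\theta^{\mathrm{LS}} - \bX\theta^* = P\varepsilon$. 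Hence $\mathrm{MSE}(\bX\hat\theta^{\mathrm{LS}}) = \tfrac1n\|P\varepsilon\|_2^2$, and the entire problem reduces to controlling $\|P\varepsilon\|_2^2$, where $P$ projects onto an $r$-dimensional subspace with $r = \mathrm{rank}(\bX^\top\bX) = \mathrm{tr}(P)$.

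For the in-expectation bound I would pick an orthonormal basis $\phi_1,\dots,\phi_r$ of $\mathrm{range}(P)$, collected as columns of $\Phi$ with $\Phi^\top\Phi = I_r$ and $P = \Phi\Phi^\top$, so that $\|P\varepsilon\|_2^2 = \|\Phi^\top\varepsilon\|_2^2 = \sum_{i=1}^r (\phi_i^\top\varepsilon)^2$. Because $\varepsilon \sim \mathrm{subG}_n(\sigma^2)$ and each $\|\phi_i\|=1$, every $\phi_i^\top\varepsilon$ is a centered scalar sub-Gaussian variable with variance proxy $\sigma^2$, so $\E[(\phi_i^\top\varepsilon)^2] \le \sigma^2$. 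Summing over $i$ gives $\E\|P\varepsilon\|_2^2 \le \sigma^2 r$, and dividing by $n$ establishes the first claim.

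For the high-probability bound I would use a covering argument. Note that $\|P\varepsilon\|_2 = \|\Phi^\top\varepsilon\|_2 = \sup_{v\in S^{r-1}} v^\top(\Phi^\top\varepsilon)$, where $S^{r-1}$ is the unit sphere in $\R^r$. Fix a $1/2$-net $N$ of $S^{r-1}$ with $|N| \le 5^r$; the standard net inequality gives $\sup_{v\in S^{r-1}} v^\top(\Phi^\top\varepsilon) \le 2\max_{v\in N} v^\top(\Phi^\top\varepsilon)$. For each fixed $v\in N$, the vector $\Phi v$ is a unit vector in $\R^n$, so $(\Phi v)^\top\varepsilon$ is sub-Gaussian with proxy $\sigma^2$ and $\Pb((\Phi v)^\top\varepsilon \ge t) \le e^{-t^2/(2\sigma^2)}$. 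A union bound over $N$ and the choice $t = \sigma\sqrt{2(r\log 5 + \log(1/\delta))}$ show that with probability at least $1-\delta$ the maximum over the net is at most $t$; squaring the net inequality then gives $\|P\varepsilon\|_2^2 \le 16\sigma^2(r\log 5 + \log(1/\delta)) \lesssim \sigma^2(r + \log(1/\delta))$, and dividing by $n$ yields the second claim.

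The hard part will be the high-probability step, since one must bound the supremum of a sub-Gaussian process over the \emph{entire} $r$-dimensional sphere rather than a finite index set; this is exactly where the net argument earns its keep, controlling $|N|\le 5^r$ and paying only a constant factor to pass from the net to the full sphere is what produces the additive $r$ in the rate. As an alternative that sidesteps the explicit net, I could recognize $\|P\varepsilon\|_2^2 = \varepsilon^\top P\varepsilon$ as a sub-exponential quadratic form and invoke a Hanson--Wright-type bound, e.g.\ \Cref{conc-lemma-sub-exp}, using $\E[\varepsilon^\top P\varepsilon]\le \sigma^2 r$ together with $\|P\|_{\mathrm{op}}=1$ and $\|P\|_F^2 = r$; both routes deliver the same $\sigma^2(r+\log(1/\delta))/n$ rate.
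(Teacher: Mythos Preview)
The paper does not supply its own proof of this lemma: it is listed under ``Probability Tools'' purely as a restatement of Theorem~2.2 in \cite{rigollet2015high} and is invoked as a black box. Your argument is correct and is essentially the standard proof from Rigollet's notes---projection identity $\bX\hat\theta^{\mathrm{LS}}-\bX\theta^*=P\varepsilon$, the coordinate expansion for the expectation bound, and the $1/2$-net on $S^{r-1}$ for the high-probability bound---so there is nothing to compare against beyond noting that you have reproduced the cited source's proof.
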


\subsection{Formulation for \PE Design to Reduce MSE}
\label{app:bandit-prop}
\begin{customproposition}{1}
Let $\wtheta_n$ be the Weighted Least Square (WLS) estimate \eqref{eq:weighted-least-square} of $\btheta_*$ after observing $n$ samples and define $\bw(a) = \pi(a)\bx(a)$. Define the design matrix as $\bA_{\bb,\bSigma_*}$ (see \eqref{eq:design-matrix}). Then the loss is given by
\begin{align*}
    \E\left[\left(\sum_{a=1}^A\bw(a)^\top(\wtheta_n-\btheta_*)\right)^{2}\right] = \frac{1}{n} \left(\sum_{a,a'}\bw(a)^\top\bA_{\bb,\bSigma_*}^{-1}\bw(a')\right).
\end{align*}
\end{customproposition}

\begin{proof}
Let $T_{n}(a) \geq 0$ be the number of samples of $\bx(a)$, hence $n=\sum_{a=1}^{A} T_{n}(a)$. For each $a \in[A]$, the linear model yields:
\begin{align*}
\dfrac{1}{T_{n}(a)} \sum_{i=1}^{T_{n}(a)} R_{i}^{}(a) = \bx(a)^{\top} \btheta_* + \dfrac{1}{T_{n}(a)} \sum_{i=1}^{T_{n}(a)} \eta_{i}^{}(a) .
\end{align*}
with $R_i(a)$ being the reward observed for action $a$ taken for the $i$-th time, $\eta_i(a)$ being the corresponding noise, and $T_n(a)$ is the number of samples of action $a$.
We define the following:
\begin{align*}
\widetilde{Y}_{n}(a)=\sum_{i=1}^{T_{n}(a)} \dfrac{R_{i}^{}(a)}{\sigma(a) \sqrt{T_{n}(a)}},\qquad \tx_n(a)=\dfrac{\sqrt{T_n(a)} \bx(a)}{ \sigma(a)},\qquad \widetilde{\eta}_{n}(a)=\sum_{i=1}^{T_n(a)} \dfrac{\eta_{i}^{}(a)}{\sigma(a) \sqrt{T_n(a)}}
\end{align*}
so that for all $a \in[A], \widetilde{Y}_{n}(a) = \widetilde{\bx}_{n}(a)^{\top} \btheta_*+\widetilde{\eta}_{n}(a)$  
where we can show the following regarding the expectation of $\widetilde{\eta}_{n}(a)$ as
\begin{align*}
    \E[\widetilde{\eta}_{n}(a)] = \E\left[\sum_{i=1}^{T_n(a)} \dfrac{\eta_{i}(a)}{\sigma(a) \sqrt{T_n(a)}}\right] = \sum_{i=1}^{T_n(a)} \dfrac{\E\left[\eta_{i}(a)\right]}{\sigma(a) \sqrt{T_n(a)}} = 0
\end{align*}
and the variance as
\begin{align*}
    \Var\left[\widetilde{\eta}_{n}(a)\right] = \Var\left[\sum_{i=1}^{T_n(a)} \dfrac{\eta_{i}(a)}{\sigma(a) \sqrt{T_n(a)}}\right] &\overset{(a)}{=} \sum_{i=1}^{T_n(a)} \Var\left[\dfrac{\eta_{i}(a)}{\sigma(a) \sqrt{T_n(a)}}\right] 
    = \sum_{i=1}^{T_n(a)} \dfrac{\Var\left[\eta_{i}(a)\right]}{\sigma^2(a) T_n(a)} =  \dfrac{T_n(a)\sigma^2(a)}{\sigma^2(a) T_n(a)} = 1
\end{align*}
where $(a)$ follows as the noises are independent.
We denote by $\bX=$ $\left(\widetilde{\bx}_{n}(1)^{\top}, \cdots, \widetilde{\bx}_{n}(A)^{\top}\right)^{\top} \in \mathbb{R}^{A \times d}$ the induced design matrix of the policy. Under the assumption that $\bX$ has full rank, the above weighted least squares (WLS) problem has an optimal unbiased estimator $\wtheta_n=\left(\bX^{\top} \bX\right)^{-1} \bX^{\top} \mathbf{Y}$, where $\mathbf{Y} = [\widetilde{Y}_{n}(1), \widetilde{Y}_{n}(2),\ldots, \widetilde{Y}_{n}(A)]^\top$. Let $\mathbf{\eta}  = [\widetilde{\eta}_n(1), \widetilde{\eta}_n(2), \ldots, \widetilde{\eta}_n(A)]^{\top}$. Let $\bw(a)=\pi(a)\bx(a)$. Then the objective is to bound the loss as follows
\begin{align*}
\E&\left[\left(\sum_{a=1}^A\bw(a)^\top\wtheta_n-\sum_{a=1}^A\bw(a)^\top\btheta_*\right)^{2}\right] = \E\left[\left(\sum_{a=1}^A\bw(a)^\top(\wtheta_n-\btheta_*)\right)^{2}\right] \\
&= \E\left[\left(\sum_{a=1}^A\bw(a)^\top\left(\left(\bX^{\top} \bX\right)^{-1} \bX^{\top} \mathbf{Y} - \btheta_*\right)\right)^2\right] 
= \E\left[\left(\sum_{a=1}^A\bw(a)^\top\left(\left(\bX^{\top} \bX\right)^{-1} \bX^{\top} \left(\bX\btheta_* + \mathbf{\eta}\right) - \btheta_*\right)\right)^2\right]\\
&= \E\left[\left(\sum_{a=1}^A\bw(a)^\top\left(\bX^{\top} \bX\right)^{-1} \bX^{\top} \mathbf{\eta}\right)^2\right]
\overset{(a)}{=} \E\left[\Tr\left(\sum_{a=1}^A\bw(a)^\top\left(\bX^{\top} \bX\right)^{-1} \bX^{\top} \mathbf{\eta}\mathbf{\eta}^\top\bX\left(\bX^{\top} \bX\right)^{-1}\sum_{a=1}^{A}\bw(a)\right)\right]\\
&\overset{}{=} \Tr\left(\sum_{a=1}^A\bw(a)^\top\left(\bX^{\top} \bX\right)^{-1} \bX^{\top} \E\left[\mathbf{\eta}\mathbf{\eta}^\top\right]\bX\left(\bX^{\top} \bX\right)^{-1}\sum_{a=1}^{A}\bw(a)\right)\\
&\overset{(b)}{=} \Tr\left(\sum_{a=1}^A\bw(a)^\top\left(\bX^{\top} \bX\right)^{-1} \bX^{\top} \bI\bX\left(\bX^{\top} \bX\right)^{-1}\sum_{a=1}^{A}\bw(a)\right)\\
&=\Tr\left(\sum_{a=1}^A\bw(a)^\top\left(\bX^{\top} \bX\right)^{-1}\sum_{a=1}^{A}\bw(a)\right)=\Tr\left(\sum_{a=1}^A\bw(a)^\top\left(\sum_{a=1}^{A} \tx_{n}(a) \tx_{n}(a)^{\top}\right)^{-1}\sum_{a=1}^{A}\bw(a)\right)\\
&=\frac{1}{n} \Tr\left(\sum_{a=1}^A\bw(a)^\top\left(\sum_{a=1}^{A} \dfrac{\bb(a) \bx(a) \bx(a)^{\top} }{\sigma(a)^{2}}\right)^{-1}\!\!\!\sum_{a=1}^{A}\bw(a)\right)\\
&\overset{(c)}{=} \frac{1}{n} \Tr\left(\!\!\sum_{a=1}^A\bw(a)^\top\left(\sum_{a=1}^{A} \bb(a) \tx(a) \tx(a)^{\top} \right)^{-1}\sum_{a=1}^{A}\!\bw(a)\!\!\right) \\
&= \frac{1}{n} \Tr\left(\sum_{a,a'}\bw(a)^\top\bA_{\bb,\bSigma_*}^{-1}\bw(a')\right) 
\end{align*}
where, in $(a)$ we can introduce the trace operator as for any vector $\bx$ we have $\Tr(\bx^{\top}\bx) = \|\bx\|^2$, $(b)$ follows as the matrix $\E[\mathbf{\eta}\mathbf{\eta}^{\top}]$ has all the non-diagonal element as $0$ (since noises are independent and $\mathbf{Cov}(\widetilde{\epsilon}_n(a),\widetilde{\epsilon}_n(a')) = 0$) and the diagonal element are the  $\Var[\widetilde{\epsilon}_n(a)]=1$, and $(c)$ follows as we redefine $\tx(a) = \bx(a)/\sigma(a)$. 
\end{proof}


\subsection{Loss is convex}
\label{app:convex-loss}


\begin{customproposition}{2}
\label{prop:convex-loss}
The loss function 
\begin{align*}
    \L_n(\pi, \bb, \bSigma_*) = \frac{1}{n} \left(\sum_{a,a'}\bw(a)^\top\bA_{\bb,\bSigma_*}^{-1}\bw(a')\right) 
\end{align*}
for any arbitrary design proportion $\bb\in\triangle(\A)$ and co-variance matrix $\bSigma_*$ is strictly convex. 
\end{customproposition}

\begin{proof}
Let $\bb, \bb' \in \triangle(\A)$, so that $\bA_{\bb}$ and $\bA_{\bb'}$ are invertible. Recall that we have the loss for a design proportion $\bb$ as

\begin{align*}
    \L_n(\pi, \bb, \bSigma_*) = \frac{1}{n} \left(\sum_{a,a'}\bw(a)^\top\bA_{\bb,\bSigma_*}^{-1}\bw(a')\right) \overset{(a)}{=} \frac{1}{n} \Tr\left(\sum_{a,a'}\bw(a)^\top\bA_{\bb,\bSigma_*}^{-1}\bw(a')\right) &= \frac{1}{n}\Tr\left(\bA_{\bb,\bSigma_*}^{-1}\sum_{a,a'}\bw(a)\bw(a')^\top\right) \\
    &= \frac{1}{n}\Tr\left(\bV\bA_{\bb,\bSigma_*}^{-1}\right)
\end{align*}
where, in $(a)$ we can introduce the trace as the R.H.S. is a scalar quantity,
$\bw(a) = \pi(a)\bx(a)$ and $\bV = \sum_{a,a'}\bw(a)\bw(a')^\top$. Similarly for a $\lambda \in[0,1]$ we have 
\begin{align*}
    \L_n(\pi, \lambda\bb + (1-\lambda)\bb', \bSigma_*) = \frac{1}{n} \Tr\left(\bA^{-1}_{\bb,\bb',\bSigma_*}\sum_{a,a'}\bw(a)\bw(a')^\top\right) = \frac{1}{n} \Tr\left(\bV\bA^{-1}_{\bb,\bb',\bSigma_*}\right).
\end{align*}
Let the matrix $\bA_{\bb,\bb',\bSigma_*}$ be defined as
\begin{align*}
    \bA_{\bb,\bb',\bSigma_*} \coloneqq \lambda\bA_{\bb,\bSigma_*} + (1-\lambda)\bA_{\bb',\bSigma_*}.
\end{align*}
Now observe that 
\begin{align*}
    \bA_{\bb,\bb',\bSigma_*} = \lambda\bA_{\bb,\bSigma_*} + (1-\lambda)\bA_{\bb',\bSigma_*}  = \sum_{a=1}^A\left(\lambda\bb(a) + (1-\lambda)\bb'(a)\right)\tx(a)\tx(a)^\top.
\end{align*}
Also observe that this is a positive semi-definite matrix. Now using Lemma 1 from \citep{whittle1958multivariate} we can show that
\begin{align*}
    \left(\lambda\bA_{\bb,\bSigma_*} + (1-\lambda)\bA_{\bb',\bSigma_*}\right)^{-1} \prec \lambda\bA_{\bb,\bSigma_*}^{-1} + (1-\lambda)\bA_{\bb',\bSigma_*}^{-1}
\end{align*}
for any positive semi-definite matrices $\bA_{\bb}, \bA_{\bb'}$, and $\lambda\in[0,1]$. 
Now taking the trace on both sides we get
\begin{align*}
    \Tr\left(\lambda\bA_{\bb,\bSigma_*} + (1-\lambda)\bA_{\bb',\bSigma_*}\right)^{-1} \prec \Tr\lambda\bA_{\bb,\bSigma_*}^{-1} + \Tr(1-\lambda)\bA_{\bb',\bSigma_*}^{-1}.
\end{align*}
Now using Lemma 2 from \citet{whittle1958multivariate} we can show that
\begin{align*}
    \Tr\left(\lambda\bV\bA_{\bb,\bSigma_*} + (1-\lambda)\bV\bA_{\bb',\bSigma_*}\right)^{-1} \prec \Tr\lambda\bV\bA_{\bb,\bSigma_*}^{-1} + \Tr(1-\lambda)\bV\bA_{\bb',\bSigma_*}^{-1}.
\end{align*}
for any positive semi-definite matrix $\bV$. This implies that
\begin{align*}
    \L_n(\pi, \lambda\bb + (1-\lambda)\bb', \bSigma_*) < \lambda\L_n(\pi, \bb, \bSigma_*) + (1-\lambda)\L_n(\pi, \bb', \bSigma_*).
\end{align*}
Hence, the loss function is convex.
\end{proof}

\begin{remark}\textbf{(Bound on variance)}
\label{remark:bound-variance}
We can use  singular value decomposition of $\bSigma_*$ as $\bSigma_* = \bU \bD \bP^{\top}$ with orthogonal matrices $\bU, \bP^{\top}$ and $\bD=\operatorname{diag}\left(\lambda_{1}, \ldots, \lambda_{d}\right)$ where $\lambda_{i}$ denotes a singular value. Then we can bound $\bx(a)^{\top} \bSigma_* \bx(a)$ as
\begin{align*}
\left\|\bx(a)^{\top} \bSigma_* \bx(a)\right\|=\left\|\bx(a)^{\top} \bU \bD \bP^{\top} \bx(a)\right\|
&\overset{(a)}{=}\left\|\bu^{\top} \bD \mathbf{p}\right\| \leq\left\|\bu^{\top}\right\| \max_{i}|\lambda_{i}| \left\|\mathbf{p} \right\| \\
&\overset{(b)}{=} \| \bx(a)\| \max_{i} |\lambda_{i}| \left\| \bx(a)\right\| 
= \max_{i}|\lambda_{i}|\left\| \bx(a) \right\|^{2}
\end{align*}
where in $(a)$ we have $\bu = \bU^{\top} \bx(a)$,  $\mathbf{p} = \bP^{\top} \bx(a)$ and $(b)$ uses the fact that $\left\|\bU^{\top} \bx(a)\right\|=\|\bx(a)\|$ for any orthogonal matrix $\bU^{\top}$. Similarly we can show that $\left\|\bx(a)^{\top} \bSigma_* \bx(a)\right\|\geq \min_{i}|\lambda_{i}|\left\| \bx(a) \right\|^{2}$. Let $H_L^2\leq \|\bx(a)\|^2\leq H_U^2$ for any $a\in[A]$. 
This implies that
\begin{align*}
    &\underbrace{\min_{i}|\lambda_{i}|H_L^2}_{\sigma^2_{\min}} \leq \min_{i}|\lambda_{i}|\left\| \bx(a) \right\|^{2} \leq \underbrace{\bx(a)^{\top} \bSigma_* \bx(a)}_{\sigma^2(a)} 
    \leq \max_{i}|\lambda_{i}|\left\| \bx(a) \right\|^{2}\leq \underbrace{\max_{i}|\lambda_{i}|H^2_U}_{\sigma^2_{\max}}
\end{align*}
\end{remark}

\subsection{Loss Gradient is Bounded}
\label{app:gradient-loss}

\begin{customproposition}{3}
\label{prop:gradient-loss}
Let $\bb, \bb' \in \triangle(\A)$, so that $\bA_{\bb,\bSigma_*}$ and $\bA_{\bb',\bSigma_*}$ are invertible and define $\bV = \sum_{a,a'}\bw(a)\bw(a')^{\top}$. Then the gradient of the loss function is bounded such that
\begin{align*}
    \|\nabla_{\bb(a)}\L(\pi,\bb,\bSigma_*) - \nabla_{\bb(a)}\L(\pi,\bb',\bSigma_*)\|_2 \leq C_{\kappa}
\end{align*}
where, the 
$$
C_\kappa = \frac{\lambda_d(\bV)H^2_U}{\sigma^2(a)\left(\min_{a'\in\A}\frac{\bb(a')}{\sigma(a')^2} \lambda_{\min }\left(\sum_{a=1}^A \bw(a) \bw(a)^{\top}\right)\right)^2} 
+ \frac{\lambda_1(\bV)H^2_U}{\sigma^2(a)\left(\min_{a'\in\A}\frac{\bb'(a')}{\sigma(a')^2} \lambda_{\min }\left(\sum_{a=1}^A \bw(a) \bw(a)^{\top}\right)\right)^2}.
$$
\end{customproposition}

\begin{proof}
Let $\bb, \bb' \in \triangle(\A)$, so that $\bA_{\bb,\bSigma_*}$ and $\bA_{\bb',\bSigma_*}$ are invertible. 
Observe that the gradient of the loss is given by
\begin{align*}
    \nabla_{\bb(a)}\L(\pi,\bb,\bSigma_*) &= \nabla_{\bb(a)} \Tr\left(\sum_{a,a'}\bw(a)^{\top}\bA^{-1}_{\bb,\bSigma_*}\bw(a')\right) \\
    &\overset{(a)}{\leq} \lambda_1(\bV)\nabla_{\bb(a)}\Tr(\bA^{-1}_{\bb,\bSigma_*}) \\
    &\overset{}{=} -\lambda_1(\bV)\Tr\left(\left(\dfrac{\bw(a)\bw(a)^\top}{\sigma^2(a)} \right)\bA^{-2}_{\bb,\bSigma_*}\right)\\
    &= - \lambda_1(\bV) \dfrac{1}{\sigma^2(a)}\left\|\bA^{-1}_{\bb,\bSigma_*} \bw(a)\right\|^2_2
\end{align*}
where, in $(a)$ we denote $\bV = \sum_{a,a'}\bw(a)\bw(a')^{\top}$. 
Similarly, the gradient of the loss is lower bounded by
\begin{align*}
    &\nabla_{\bb(a)}\L(\pi,\bb,\bSigma_*) \geq - \lambda_d(\bV) \dfrac{1}{\sigma^2(a)}\left\|\bA^{-1}_{\bb,\bSigma_*} \bw(a)\right\|^2_2
\end{align*}
which yields a bound on the gradient difference as
\begin{align*}
    \|\nabla_{\bb(a)}\L(\pi,\bb,\bSigma_*) - \nabla_{\bb'(a)}\L(\pi,\bb',\bSigma_*)\|_2
    &\leq \left\|\lambda_d(\bV) \dfrac{1}{\sigma^2(a)}\left\|\bA^{-1}_{\bb,\bSigma_*} \bw(a)\right\|^2_2 - \lambda_1(\bV) \dfrac{1}{\sigma^2(a)}\left\|\bA^{-1}_{\bb',\bSigma_*} \bw(a)\right\|^2_2\right\|_2\\
    &\leq \left|\lambda_d(\bV) \dfrac{1}{\sigma^2(a)}\left\|\bA^{-1}_{\bb,\bSigma_*} \bw(a)\right\|^2_2\right|+ \left|\lambda_1(\bV) \dfrac{1}{\sigma^2(a)}\left\|\bA^{-1}_{\bb',\bSigma_*} \bw(a)\right\|^2_2\right|.
\end{align*}
So now we focus on the quantity
\begin{align*}
    \left\|\bA^{-1}_{\bb,\bSigma_*} \bw(a)\right\|^2_2\leq \|\bA^{-1}_{\bb,\bSigma_*}\|^2_2\|\bw(a)\|^2_2 \leq \|\bA^{-1}_{\bb,\bSigma_*}\|^2_2 H^2_U.
\end{align*}

Now observe that when $\bb(a)\in\triangle(\A)$ and initialized uniform randomly, then the optimization in \eqref{eq:opt-agnostic-loss} results in a non-singular $\bA^{-1}_{\bb,\bSigma_*}$ if each action has been sampled at least once which is satisfied by \sp. So now we need to bound the minimum eigenvalue of $\bA^{}_{\bb,\bSigma_*}$ denoted as $\lambda_{\min}(\bA^{}_{\bb,\bSigma_*})$. Using Lemma 7 of \citet{fontaine2021online} we have that for all $\bb \in \triangle(\A)$,
\begin{align*}
\min_{a \in[A]} \frac{\bb(a)}{\sigma(a)^2} \sum_{a=1}^A \bw(a) \bw(a)^{\top} \preccurlyeq \sum_{a=1}^A \frac{\bb(a)}{\sigma(a)^2} \bw(a) \bw(a)^{\top} .
\end{align*}
And finally
\begin{align*}
\min_{a \in[A]} \frac{\bb(a)}{\sigma(a)^2} \lambda_{\min }\left(\sum_{a=1}^A \bw(a) \bw(a)^{\top}\right) \leq \lambda_{\min}(\bA^{}_{\bb,\bSigma_*})
\end{align*}
This implies that
\begin{align*}
    \lambda_{\min}(\bA^{-1}_{\bb,\bSigma_*}) \leq \dfrac{1}{\min_{a \in[A]} \frac{\bb(a)}{\sigma(a)^2} \lambda_{\min }\left(\sum_{a=1}^A \bw(a) \bw(a)^{\top}\right)}
\end{align*}
Plugging everything back we get that
\begin{align*}
    \|\nabla_{\bb(a)}\L(\pi,\bb,\bSigma_*) - \nabla_{\bb'(a)}\L(\pi,\bb',\bSigma_*)\|_2 &\leq \dfrac{\lambda_d(\bV)H^2_U}{\sigma^2(a)\left(\min_{a'\in\A}\frac{\bb(a')}{\sigma(a')^2} \lambda_{\min }\left(\sum_{a=1}^A \bw(a) \bw(a)^{\top}\right)\right)^2} \\
    &\qquad + \dfrac{\lambda_1(\bV)H^2_U}{\sigma^2(a)\left(\min_{a'\in\A}\frac{\bb'(a')}{\sigma(a')^2} \lambda_{\min }\left(\sum_{a=1}^A \bw(a) \bw(a)^{\top}\right)\right)^2}.
\end{align*}
The claim of the lemma follows.
\end{proof}


\subsection{Kiefer-Wolfowitz Equivalence}
\label{app:bound-loss}


We now introduce a Kiefer-Wolfowitz type equivalence \citep{kiefer1960equivalence} for  the quantity $\Tr(\bA^{-1}_{\bb_*,\bSigma_*})$ for optimal $\bb_*\in\Delta(\A)$ and co-variance matrix $\bSigma_*$ in \Cref{prop:kiefer-wolfowitz}. 

\begin{customproposition}{4}\textbf{(Kiefer-Wolfowitz for \PE)}
\label{prop:kiefer-wolfowitz}
Define the heteroscedastic design matrix as $\bA_{\bb,\bSigma_*} = \sum_{a=1}^A \bb(a)\tx(a)\tx(a)^\top$.  
Assume that $\mathcal{A} \subset \mathbb{R}^{d}$ is compact and $\operatorname{span}(\mathcal{A})=\mathbb{R}^{d}$. Then the following are equivalent:
\begin{enumerate}[(a)]
    \item $\bb_*$ is a minimiser of $\tg(\bb,\bSigma_*) = \Tr\left( \bA_{\bb,\bSigma_*}^{-1}\right)$.
    \item $\bb_*$ is a maximiser of $ f(\bb,\bSigma_*)=\log \operatorname{det}\left( \bA_{\bb,\bSigma_*} \right)$.
    \item $\tg\left(\bb_*,\bSigma_*\right)=d$.
\end{enumerate}
Furthermore, there exists a minimiser $\bb_*$ of $\tg(\bb,\bSigma_*)$ such that $\left|\operatorname{Supp}\left(\bb_*\right)\right| \leq d(d+1) / 2$.
\end{customproposition}

\begin{proof}
We follow the proof technique of \citet{lattimore2020bandit}. 
Let $\bb: \mathcal{A} \rightarrow[0,1]$ be a distribution on $\mathcal{A}$ so that $\sum_{a \in \mathcal{A}} \bb(a)=1$ and $\bA_{\bb,\bSigma_*} \in \mathbb{R}^{d \times d}$ and $g(\bb) \in \mathbb{R}$ be given by
\begin{align*}
\bA_{\bb,\bSigma_*}&=\sum_{a=1}^A \bb(a)\pi^2(a)\sigma^{-2}(a)\ \bx(a) \bx(a)^{\top} = \sum_{a=1}^A \bb(a) \dfrac{\pi(a)\bx(a)}{\sigma(a)} \left(\dfrac{\pi(a)\bx(a)}{\sigma(a)}\right)^{\top} 
\end{align*}
where, $(a)$ follows by setting $\tx(a) = \bx(a)/\sigma(a)$. 
First recall that for a square matrix $\bA$ let adj $(\bA)$ be the transpose of the cofactor matrix of $\bA$. Use the facts that the inverse of a matrix $\bA$ is $\bA^{-1}=\operatorname{adj}(\bA)^{\top} / \operatorname{det}(\bA)$ and that if $\bA: \mathbb{R} \rightarrow \mathbb{R}^{d \times d}$, then
\begin{align*}
\frac{d}{d t} \operatorname{det}(\bA(t))=\Tr\left(\operatorname{adj}(\bA) \frac{d}{d t} \bA(t)\right).
\end{align*}
It follows then that
\begin{align*}
    \nabla f(\bb,\bSigma_*)_{b(a)}&\overset{(a)}{=}\frac{\Tr\left(\operatorname{adj}(\bA_{\bb,\bSigma_*}) \tx(a) \tx(a')^{\top}\right)}{\operatorname{det}(\bA_{\bb,\bSigma_*})}\\
    &=\frac{\tx(a)^{\top} \operatorname{adj}(\bA_{\bb,\bSigma_*}) \tx(a')}{\operatorname{det}(\bA_{\bb,\bSigma_*})}
    \overset{(b)}{=}\tx(a)^{\top} \bA_{\bb,\bSigma_*}^{-1} \tx(a') = \tg(\bb)
\end{align*}
where, in $(a)$ we show the $a$-th component of $f(\bb)$ when we differentiate w.r.t to $\bb(a)$, and $(b)$ follows as $\frac{\operatorname{adj}(\bA_{\bb,\bSigma_*})}{\operatorname{det}(\bA_{\bb,\bSigma_*})} = \bA_{\bb,\bSigma_*}^{-1}$. 
Also observe that
\begin{align}
&\left(\sum_{a=1}^A \bb(a)\|\tx(a)\|^2_{\bA_{\bb,\bSigma_*}^{-1}}\right)=\Tr\left(\sum_{a=1}^A \bb(a) \tx(a) \tx(a')^{\top} \bA_{\bb,\bSigma_*}^{-1}\right) = d.
 \label{eq:I-D-1}
\end{align}
Hence, $\max_\bb\log\det \bA_{\bb, \bSigma_* }$ is  lower bounded by $d$ as in average we have that $\left(\sum_{a=1}^A \bb(a)\|\tx(a)\|^2_{\bA_{\bb,\bSigma_*}^{-1}}\right) = d$.

(b) $\Rightarrow$ (a): Suppose that $\bb_*$ is a maximiser of $f$. By the first-order optimality criterion, for any $\bb$ distribution on $\mathcal{A}$,
\begin{align*}
0 & \geq\left\langle\nabla  f\left(\bb_*,\bSigma_*\right), \bb-\bb_*\right\rangle \\
&\geq\left(\sum_{a=1}^A \bb(a)\|\tx(a)\|^2_{\bA_{\bb_*,\bSigma_*}^{-1}}-\sum_{a=1}^A \bb_*(a)\|\tx(a)\|^2_{\bA_{\bb_*,\bSigma_*}^{-1}}\right) \\
&\geq\left(\sum_{a =1}^A \bb(a)\|\tx(a)\|_{\bA_{\bb_*,\bSigma_*}^{-1}}^{2}-d\right) .
\end{align*}
For an arbitrary $a \in \mathcal{A}$, choosing $\bb$ to be the Dirac at $a \in \mathcal{A}$ proves that $\sum_{a=1}^A\|\tx(a)\|^2_{\bA_{\bb_*,\bSigma_*}^{-1}} \leq d$. 
Since $\tg(\bb) \geq d$ for all $\bb$ by \eqref{eq:I-D-1}, it follows that $\bb_*$ is a minimiser of $\tg$ and that $\min_{\bb} \tg(\bb)=d$.

(c) $\Longrightarrow$ (b): Suppose that $\tg\left(\bb_*\right)=d$. Then, for any $\bb$,
\begin{align*}
\left\langle\nabla f\left(\bb_*,\bSigma_*\right), \bb-\bb_*\right\rangle=\left(\sum_{a=1}^A \bb(a)\|\tx(a)\|^2_{\bA_{\bb_*,\bSigma_*}^{-1}}-d\right) \leq 0 .
\end{align*}
And it follows that $\bb_*$ is a maximiser of $f$ by the first-order optimality conditions and the concavity of $f$. This can be shown as follows:

Let $\bb$ be a Dirac at $a$ and $\bb(t)=\bb_*+t\left(\bb_*-\bb\right)$. Since $\bb_*(a)>0$ it follows for sufficiently small $t>0$ that $\bb(t)$ is a distribution over $\mathcal{A}$. Because $\bb_*$ is a minimiser of $f$,
\begin{align*}
0 \geq\left.\frac{d}{d t} f(\bb(t),\bSigma_*)\right|_{t=0}=\left\langle\nabla f\left(\bb_*,\bSigma_*\right), \bb_*-\bb\right\rangle=d-\sum_{a =1}^A\|\tx(a)\|^2_{\bA_{\bb,\bSigma_*}^{-1}}.
\end{align*}

We now show (a) $\Longrightarrow$ (c). To prove the second part of the theorem, let $\bb_*$ be a minimiser of $\tg$, which by the previous part is a maximiser of $f$. Let $S=\operatorname{Supp}\left(\bb_*\right)$, and suppose that $|S|>d(d+1) / 2$. Since the dimension of the subspace of $d \times d$ symmetric matrices is $d(d+1) / 2$, there must be a non-zero function $v: \mathcal{A} \rightarrow \mathbb{R}$ with $\operatorname{Supp}(v) \subseteq S$ such that
\begin{align}
\sum_{a \in S} v(a) \tx(a) \tx(a)^{\top}=\mathbf{0} \label{eq:equality-1}.
\end{align}
Notice that for any $\tx(a) \in S$, the first-order optimality conditions ensure that $\sum_{a=1}^A\|\tx(a)\|_{\bA_{\bb_*,\bSigma_*}^{-1}}^{2}=d$. Hence
\begin{align*}
d \sum_{a \in S} v(a)=\sum_{a \in S} v(a)\|\tx(a)\|_{\bA_{\bb_*,\bSigma_*}^{-1}}^{2}=0,
\end{align*}
where the last equality follows from \eqref{eq:equality-1}. Let $\bb(t)=\bb_*+t v$ and let $\tau=\max \left\{t>0: \bb(t) \in \mathcal{P}_{\mathcal{A}}\right\}$, which exists since $v \neq 0$ and $\sum_{a \in S} v(a)=0$ and $\operatorname{Supp}(v) \subseteq S$. By \eqref{eq:equality-1}, $\bA_{\bb(t),\bSigma_*}=\bA_{\bb_*,\bSigma_*}$, and hence $ f(\bb(\tau),\bSigma_*)= f\left(\bb_*,\bSigma_*\right)$, which means that $\bb(\tau)$ also maximises $f$. The claim follows by checking that $|\operatorname{Supp}(\bb(T))|<\left|\operatorname{Supp}\left(\bb_*\right)\right|$ and then using induction.
\end{proof}

\begin{corollary}{1}
\label{corollary:kiefer}
From \Cref{prop:kiefer-wolfowitz} we know that $\bb_*$ is a minimizer for $\Tr(\bA^{-1}_{\bb,\bSigma_*})$ and $\Tr(\bA^{-1}_{\bb_*,\bSigma_*}) = d$. This implies that the loss is bounded at $\bb_*$ as $\frac{\lambda_d(\bV) d}{n} \leq \L_n(\pi, \bb_*, \bSigma_*) \leq \frac{\lambda_1(\bV) d}{n}$ where $\bV = \sum_{a,a'}\bw(a)\bw(a')^\top$.
\end{corollary}

\begin{proof}
First recall that we can rewrite the loss for any arbitrary proportion $\bb$ and co-variance $\bSigma_*$ as
\begin{align*}
    \L_n(\pi, \bb, \bSigma_*) = \frac{1}{n} \left(\sum_{a,a'}\bw(a)^\top\bA_{\bb,\bSigma_*}^{-1}\bw(a')\right) = \frac{1}{n} \left(\bA_{\bb,\bSigma_*}^{-1}\sum_{a,a'}\bw(a)\bw(a')^\top\right) = \frac{1}{n} \left(\bA_{\bb,\bSigma_*}^{-1}\bV\right).
\end{align*}
From \citep{fang1994inequalities} we know that for any positive semi-definite matrices $\bA^{-1}_{\bb,\bSigma_*}$ and $\bV$ we have that
\begin{align*}
\lambda_d(\bV) \Tr(\bA^{-1}_{\bb,\bSigma_*}) \leq \Tr(\bV\bA^{-1}_{\bb,\bSigma_*}) \leq \lambda_1(\bV) \Tr(\bA^{-1}_{\bb,\bSigma_*})
\end{align*}
where $\lambda_i(\bV)$ is the $i$ th largest eigenvalue of $\bV$. Now from \Cref{prop:kiefer-wolfowitz} we know that for $\bb_*$ is a minimizer for $\Tr(\bA^{-1}_{\bb,\bSigma_*})$ and $\Tr(\bA^{-1}_{\bb_*,\bSigma_*}) = d$. This implies that the loss is bounded at $\bb_*$ as 
\begin{align*}
    &\lambda_d(\bV) \Tr(\bA^{-1}_{\bb_*,\bSigma_*}) \leq \Tr(\bV\bA^{-1}_{\bb_*,\bSigma_*}) \leq \lambda_1(\bV) \Tr(\bA^{-1}_{\bb_*,\bSigma_*})
    \implies \frac{\lambda_d(\bV) d}{n} \leq \L_n(\pi, \bb_*, \bSigma_*) \leq \frac{\lambda_1(\bV) d}{n}.
\end{align*}
The claim of the corollary follows.
\end{proof}

\begin{remark}
\label{remark:unbiased-estimator}
Note that the estimator $\wtheta_n$ is an unbiased estimator of $\btheta_*$. Recall that
\begin{align*}
    \wtheta_{n} \coloneqq \arg\min_{\btheta}\sum_{t=1}^{n}\tfrac{1}{\sigma^{2}(a_{t})}(r_t-\bv(a_t)^{\top}\btheta)^{2}
\end{align*}
where, $a_t$ is the action sampled at timestep $t$. 
Define the $\mathbf{diag}(\bSigma_n) = [\sigma^2(a_1), \sigma^2(a_2), \ldots, \sigma^2(a_n)]$, $\bR_n = [r_1, r_2, \ldots, r_n]^{\top} \in \R^{n\times 1}$ be the $n$ rewards observed and $\mathbf{\eta}\in\R^{n\times 1}$ is the noise vector, where $a_1, a_2, \ldots, a_n$ are the actions pulled at time $t=1,2,\ldots,n$.
Then it can be shown that
\begin{align*}
    \E\left[\wtheta_n\right] - \btheta_* &= \E\left[\left(\bX_{n}^{\top}\bSigma_{n}^{-1}\bX_{n}\right)^{-1}\bX_{n}^{\top}\bSigma_{n}^{-1}\bR_{n}\right] - \btheta_*\\
    &= \E\left[\left(\bX_{n}^{\top}\bSigma_{n}^{-1}\bX_{n}\right)^{-1}\bX_{n}^{\top}\bSigma_{n}^{-1}\left(\bX_{n}\btheta_*+\mathbf{\eta}\right)\right] - \btheta_*\\
    &=  \E\left[\left(\bX_{n}^{\top}\bSigma_{n}^{-1}\bX_{n}\right)^{-1}\bX_{n}^{\top}\bSigma_{n}^{-1}\bX_{n}\btheta_*\right]+\E\left[\left(\bX_{n}^{\top}\bSigma_{n}^{-1}\bX_{n}\right)^{-1}\bX_{n}^{\top}\bSigma_{n}^{-1}\mathbf{\eta}\right] - \btheta_*\\
    &= \btheta_* + \left(\bX_{n}^{\top}\bSigma_{n}^{-1}\bX_{n}\right)^{-1}\bX_{n}^{\top}\bSigma_{n}^{-1}\E\left[\mathbf{\eta}\right] - \btheta_* \overset{(a)}{=} 0
\end{align*}
where, $(a)$ follows as noise is zero mean. 
\end{remark}

\section{Bandit Regret Proofs}

\subsection{Loss of Bandit Oracle}
\label{app:loss-bandit-oracle}

\begin{customproposition}{5}
\textbf{(Bandit Oracle MSE)}
Let the oracle sample each action $a$ for $\lceil n \bb_*(a)\rceil$ times, where $\bb_*$ is the solution to \eqref{eq:opt-oracle-sol}. Define $\lambda_1(\bV)$ as the maximum eigenvalue of $\bV:=\sum_{a,a'}\bw(a)\bw(a')^{\top}$. Then the loss satisfies 
\begin{align*}
    \L^*_n(\pi, \bb_*, \bSigma_*) \leq O_{\kappa^2,H^2_U}\left(\frac{ d\lambda_1(V)
    \log n}{n}\right) + O_{\kappa^2,H^2_U}\left(\frac{1}{n}\right).
\end{align*}
\end{customproposition}

\begin{proof}
Recall the matrix $\bX_n = [\bv_1, \bv_2, \ldots, \bv_n]^{\top} \in \R^{n\times d}$ are the observed features for the $n$ samples taken. Let $\bR_n = [r_1, r_2, \ldots, r_n]^{\top} \in \R^{n\times 1}$ be the $n$ rewards observed and $\mathbf{\eta}\in\R^{n\times 1}$ is the noise vector. Then using weighted least square estimates we have
\begin{align*}
\wtheta_{n} \coloneqq \argmin_{\btheta}\sum_{t=1}^{n}\frac{1}{\sigma^{2}(a_{t})}(r_t-\bv(a_t)^{\top}\btheta)^{2}
\end{align*}
where, in $(a)$ we $a_t$ is the action sampled at timestep $t$. 
Recall that the $\mathbf{diag}(\bSigma_n) = [\sigma^2(a_1), \sigma^2(a_2), \ldots, \sigma^2(a_n)]$, where $a_1, a_2, \ldots, a_n$ are the actions pulled at time $t=1,2,\ldots,n$.
We have that:
\begin{align*}
    \wtheta_n - \btheta_* = (\bX_n^{\top}\bSigma_{n}^{-1}\bX_n)^{-1}\bX_n^{\top}\bSigma_{n}^{-1}\mathbf{\eta}
\end{align*}
where the noise vector $\eta\sim\SG(0,\bSigma_n)$ where $\bSigma_n\in\mathbb{R}^{n\times n}$.
For any $\bz \in \R^d$ we have
\begin{align*}
    \bz^{\top}(\wtheta_n - \btheta_*) = \bz^{\top}(\bX_n^{\top}\bSigma_{n}^{-1}\bX_n)^{-1}\bX_n^{\top}\bSigma_{n}^{-1}\mathbf{\eta}.
\end{align*}
Let $\bb_*$ be the \PE design for $\A$ defined in \eqref{eq:opt-oracle-sol}.
Then the oracle pulls action $a \in \A$ exactly $\left\lceil n \bb_*\right\rceil$ times for some $n>d(d+1)/2$ and computes the least square estimator $\wtheta_n$. Observe that 
$$
\sum_{a=1}^A \bw(a)^\top (\wtheta_n-\btheta_*) \sim \SG\left(0, \sum_{a,a'}\bw(a)^{\top}(\bX_n^{\top}\bSigma^{-1}_n\bX_n)^{-1}\bw(a')\right).
$$ 
So $\left(\sum_{a=1}^A \bw(a)^\top (\wtheta_n-\btheta_*)\right)^2 \sim \SE\left(0,\sum_{a,a'}\bw(a)^{\top}(\bX_n^{\top}\bSigma^{-1}_n\bX_n)^{-1}\bw(a')\right)$ where $\SE$ denotes the sub-exponential distribution. Denote the quantity
\begin{align*}
    t \coloneqq \sqrt{2\sum_{a,a'}\bw(a)^{\top}(\bX_n^{\top}\bSigma^{-1}_n\bX_n)^{-1}\bw(a') \log (1 / \delta)}.
\end{align*}
Now using sub-exponential concentration inequality in \Cref{conc-lemma-sub-exp}, setting 
$$
\nu^2=\sum_{a,a'}\bw(a)^{\top}(\bX_n^{\top}\bSigma^{-1}_n\bX_n)^{-1}\bw(a'),
$$ 
and $\alpha = \nu$, we can show that
\begin{align*}
    \Pb&\left(\left(\sum_{a=1}^A\bw(a)^{\top}(\wtheta_n - \btheta_*)\right)^2 > t \right) \leq \delta, \qquad \text{ if } t \in (0,1]\\
    \Pb&\left(\left(\sum_{a=1}^A\bw(a)^{\top}(\wtheta_n - \btheta_*)\right)^2 > t^2 \right) \leq \delta, \qquad \text{ if } t > 1 .
\end{align*}
Combining the above two we can show that
\begin{align*}
     \Pb&\left(\left(\sum_{a=1}^A\bw(a)^{\top}(\wtheta_n - \btheta_*)\right)^2 > \min\{t, t^2\} \right) \leq \delta, \forall t > 0.
\end{align*}
Further define matrix $\bbSigma_n \in \mathbb{R}^{d\times d}$ as $\bbSigma_n^{-1} \coloneqq (\bX_n^{\top}\bSigma^{-1}_n\bX_n)^{-1}$. 
This means that we have with probability $(1-\delta)$ that
\begin{align*}
    \left(\sum_{a=1}^A\bw(a)^{\top}(\wtheta_n - \btheta_*)\right)^2 
    &\leq \!\!\min\left\{\!\!\sqrt{2\sum_{a,a'} \bw(a)^{\top}\bbSigma_n^{-1} \bw(a') \log (1 / \delta)}, 2\sum_{a,a'} \bw(a)^{\top}\bbSigma_n^{-1} \bw(a') \log (1 / \delta) \right\}\\
&\overset{(a)}{=} \min\bigg\{\sqrt{\frac{2}{n}\sum_{a,a'}\bw(a)^{\top}\bA^{-1}_{\bb_*,\bSigma_*}\bw(a') \log (1 / \delta)},
\frac{2}{n}\sum_{a,a'}\bw(a)^{\top}\bA^{-1}_{\bb_*,\bSigma_*}\bw(a') \log (1 / \delta)\bigg\}\\
&\overset{(b)}{\leq} \min\left\{\sqrt{\frac{8 d\lambda_1(\bV) \log (1 / \delta)}{n}}, \frac{8 d\lambda_1(\bV) \log (1 / \delta)}{n}\right\}
\end{align*}
and we have taken at most $n$ pulls such that $n > \frac{d(d+1)}{2}$ pulls. Here $(a)$ follows as $n\bA^{}_{\bb_*,\bSigma_*} = \bbSigma_n^{}$ and observing that oracle has access to $\bSigma_*$, and optimal proportion $\bb_*$. The $(b)$ follows from applying \Cref{corollary:kiefer} such that $\sum_{a,a'}\bw(a)^{\top}\bA^{-1}_{\bb_*,\bSigma_*}\bw(a') \leq d\lambda_1(\bV)$ where $\bV = \sum_{a,a'}\bw(a)\bw(a')^{\top}$.
Thus, for any $\delta \in(0,1)$ we have 
\begin{align}
\mathbb{P}\left(\left\{\left(\sum_{a=1}^A \tx(a)^{\top} (\wtheta_n-\btheta_*)\right)^2> \min\left\{\sqrt{\frac{8 d\lambda_1(\bV) \log (1 / \delta)}{n}}, \frac{8 d\lambda_1(\bV) \log (1 / \delta)}{n}\right\}\right\}\right) \leq \delta. \label{eq:prob-oracle-loss}
\end{align}
Define the good event $\xi_{\delta}(n)$ as follows:
\begin{align*}
    \xi_\delta(n) \coloneqq \left\{\left(\sum_{a=1}^A \tx(a)^{\top} (\wtheta_{n}-\btheta_*)\right)^2 \leq \min\left\{\sqrt{\frac{8 d\lambda_1(\bV) \log (1 / \delta)}{n}}, \frac{8 d\lambda_1(\bV) \log (1 / \delta)}{n}\right\}\right\}.
\end{align*}

Then the loss of the oracle following \PE $\bb_*$ is given by
\begin{align*}
    \L^*_n(\pi, \bb_*, \bSigma_*) &= \E_{\D}\left[ \left(\sum_{a=1}^A\bw(a)^{\top}\left(\wtheta_n - \btheta_*\right)\right)^2\right]\\
    &\leq \E_{\D}\left[ \left(\sum_{a=1}^A\bw(a)^{\top}\left(\wtheta_n - \btheta_*\right)\right)^2\xi_{\delta}(n)\right] + \E_{\D}\left[ \left(\sum_{a=1}^A\bw(a)^{\top}\left(\wtheta_n - \btheta_*\right)\right)^2\xi^c_{\delta}(n)\right]\\
    &\overset{(a)}{\leq} \E_{\D}\left[ \left(\sum_{a=1}^A\bw(a)^{\top}\left(\wtheta_n - \btheta_*\right)\right)^2\xi_{\delta}(n)\right] + \sum_{t=1}^n A H^2_U \kappa^2\Pb(\xi^c_\delta(n))\\
    &\overset{(b)}{\leq} \min\left\{\sqrt{\frac{8 d\lambda_1(\bV) \log (1 / \delta)}{n}}, \frac{8 d\lambda_1(\bV) \log (1 / \delta)}{n}\right\} + \sum_{t=1}^n AH^2_U \kappa^2\Pb(\xi^c_\delta(n))\\
    &\overset{(c)}{\leq} \min\left\{\sqrt{\frac{16 d\lambda_1(\bV) \log n}{n}}, \frac{16 d
    \lambda_1(\bV)\log n}{n}\right\} + O_{\kappa^2,H^2_U}\left(\frac{1}{n}\right)\\
    &\overset{}{\leq} \frac{48 d\lambda_1(\bV)
    \log n}{n} + O_{\kappa^2,H^2_U}\left(\frac{1}{n}\right)
\end{align*}
where, $(a)$ follows as the noise $\eta^2\leq \kappa^2$ and $\sum_a\|\bx(a)\|^2\leq A H^2_U$ which implies
\begin{align*}
    \E_{\D}\left[ \left(\sum_{a=1}^A\bw(a)^{\top}\left(\wtheta_n - \btheta_*\right)\right)^2\right] \leq n A H_U^2 \kappa^2.
\end{align*}
The $(b)$ follows from \eqref{eq:prob-oracle-loss}, and $(c)$ follows by setting $\delta = 1/n^3$, and noting that $n > A$. 
\end{proof}

\subsection{OLS-WLS Concentration Lemma}
\label{app:loss-bandit-tracker}

\begin{lemma}
\textbf{(Concentration Lemma)}
After $\Gamma$ samples of exploration, we can show that $\Pb\left(\xi^{var}_\delta(\Gamma)\right)\geq  1 -8\delta$
where, $C > 0$ is a constant.
\end{lemma}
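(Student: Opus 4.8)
The plan is to regard the estimation of $\bSigma_*$ in \eqref{eq:wSigma-Gamma} as an ordinary least squares problem on the $d(d+1)/2$-dimensional space of symmetric matrices, with ``features'' $\bx_t\bx_t^\top$ (writing $\bx_t\coloneqq\bx(I_t)$) and ``response'' $y_t\coloneqq(\bx_t^\top\wtheta_\Gamma-r_t)^2$, and then to control the two sources of error in the response. Using $r_t=\bx_t^\top\btheta_*+\eta_t$ and writing $\zeta_\Gamma\coloneqq\wtheta_\Gamma-\btheta_*$ for the error of the first-phase OLS estimator, one gets $y_t=(\bx_t^\top\zeta_\Gamma-\eta_t)^2=\bx_t^\top\bSigma_*\bx_t+\epsilon_t$, and after $2|\eta_t|\,|\bx_t^\top\zeta_\Gamma|\le\eta_t^2+(\bx_t^\top\zeta_\Gamma)^2$ the effective noise obeys $\epsilon_t\le \underbrace{2(\eta_t^2-\E[\eta_t^2])}_{\textbf{Part A}}+\underbrace{2(\bx_t^\top\zeta_\Gamma)^2}_{\textbf{Part B}}$. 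Since $\wSigma_\Gamma$ is the least-squares fit of $\{y_t\}$ against $\{\bx_t\bx_t^\top\}$, the error $\wSigma_\Gamma-\bSigma_*$ is a linear image of the noise vector $(\epsilon_t)_{t\le\Gamma}$, and for any action $a$ we write $\bx(a)^\top(\wSigma_\Gamma-\bSigma_*)\bx(a)=\langle \bx(a)\bx(a)^\top,\wSigma_\Gamma-\bSigma_*\rangle$ and bound the Part A and Part B contributions separately, using that the forced-exploration phase is designed so that the induced regression for $\bSigma_*$ (and the first-phase $d$-dimensional regression for $\btheta_*$) is well posed, i.e.\ its empirical design matrix has smallest eigenvalue $\Omega(\Gamma)$ on the relevant subspace and controls the leverage of each $\bx(a)\bx(a)^\top$.

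For \textbf{Part A}, $\eta_t$ is zero-mean, bounded in $[-B,B]$ and $\eta_t\sim\SG(0,\bx_t^\top\bSigma_*\bx_t)$ with $\bx_t^\top\bSigma_*\bx_t\le\sigma^2_{\max}$, so $\eta_t^2-\E[\eta_t^2]$ is a centered sub-exponential variable. I would invoke the sub-exponential tail bound \Cref{conc-lemma-sub-exp} together with the conditioning/leverage control of the design matrix to conclude that the Part-A contribution to $\langle\bx(a)\bx(a)^\top,\wSigma_\Gamma-\bSigma_*\rangle$ is of order $d^2\log(1/\delta)/\Gamma$ for a fixed $a$, and a union bound over the $A$ actions turns this into the order $d^2\log(A/\delta)/\Gamma$ uniformly in $a$.

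For \textbf{Part B}, I would control $2(\bx_t^\top\zeta_\Gamma)^2$ through the accuracy of the first-phase estimator: \Cref{lemma:least-square-conc} gives, with probability $\ge1-\delta$, $\tfrac{1}{\Gamma}\|\bX_\Gamma\zeta_\Gamma\|_2^2\lesssim\sigma^2_{\max}(d+\log(1/\delta))/\Gamma$, and combining with the lower bound on $\lambda_{\min}$ of the first-phase design matrix yields $\|\zeta_\Gamma\|_2^2\lesssim\sigma^2_{\max}\,d(d+\log(1/\delta))/\Gamma$, hence $(\bx(a)^\top\zeta_\Gamma)^2\le H^2_U\|\zeta_\Gamma\|_2^2$ is of order $d^2\log(1/\delta)/\Gamma$ for every $a$. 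Feeding this response perturbation (random but, conditionally on the first phase, deterministic) through the same well-conditioned matrix regression gives a Part-B contribution of the same order $d^2\log(A/\delta)/\Gamma$.

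Finally I would assemble the pieces: on the intersection of the good events --- \Cref{conc-lemma-sub-exp} applied to Part A, \Cref{lemma:least-square-conc} applied to $\zeta_\Gamma$, and the high-probability conditioning of the two design matrices --- we get $|\bx(a)^\top(\wSigma_\Gamma-\bSigma_*)\bx(a)|<2Cd^2\log(A/\delta)/\Gamma$ simultaneously for all $a$, which is exactly the event $\xi^{var}_\delta(\Gamma)$; each of the constantly many excluded events has probability $O(\delta)$, and accounting for the two-sided tails the total failure probability is at most $8\delta$, giving $\Pb(\xi^{var}_\delta(\Gamma))\ge1-8\delta$. The main obstacle is \textbf{Part B}: the first-phase OLS error $\zeta_\Gamma$ enters the response $y_t$ quadratically and is correlated across rounds with the very noise that drives Part A, so one must propagate it into the second regression while keeping the dependence at $d^2$ (rather than $d^3$ or $A$) and, in particular, argue that the forced-exploration design --- which samples only $d$ directions --- makes the symmetric-matrix regression genuinely identifiable and well-conditioned for the quantities $\langle\bx(a)\bx(a)^\top,\cdot\rangle$ of interest.
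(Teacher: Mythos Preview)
Your proposal is correct and follows essentially the same approach as the paper: the same decomposition $\epsilon_t\le 2(\eta_t^2-\E[\eta_t^2])+2(\bx_t^\top\zeta_\Gamma)^2$, the same use of \Cref{conc-lemma-sub-exp} for Part~A and \Cref{lemma:least-square-conc} for Part~B, and the same union bound over actions. The only cosmetic difference is that the paper exploits the repeated-sampling structure of the forced exploration (each of $M=d$ directions sampled $\Gamma/M$ times) to collapse the matrix regression to $M$ averaged equations and then uses the basic inequality $\|\Phi(\wSigma_\Gamma-\bSigma_*)\|_2\le 2\|\Phi\bSigma_*-z\|_2$, whereas you phrase the same control through conditioning/leverage of the design matrix.
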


\begin{proof}
We observed $(\bx_{t},r_{t})\in\R^{d}\times\R,i=1,\ldots,\Gamma$ from
the model 
\begin{align}
r_{t} & =\bx_{t}^{\top}\btheta_*+\eta_{t},\label{eq:linear_model}\\
\eta_{t} & \sim \SG(0,\bx_{t}^{\top}\bSigma_* \bx_{t}),\label{eq:variance_model}
\end{align}
where $\btheta_*\in\R^{d}$ and $\bSigma_*\in\R^{d\times d}$
are unknown.

Given an initial estimate $\wtheta_\Gamma$ of $\btheta_*$, we first
compute the squared residual $y_{t}:=\left(\bx_{t}^{\top}\wtheta_\Gamma-r_{t}\right)^{2}$,
and then obtain an estimate of $\bSigma_*$ via
\begin{equation}
\min_{\bS\in\R^{d\times d}}\sum_{t=1}^{\Gamma}\left(\left\langle \bx_{t}\bx_{t}^{\top},\bS\right\rangle -y_{t}\right)^{2}.\label{eq:prog}
\end{equation}
Observe that if $\wtheta_\Gamma=\btheta_*$, then the expectation of
the squared residual $y_{t}$ is 
\[
\E\left[y_{t}\right]=\E\left[\left(\bx_{t}^{\top}\btheta_*-r_{t}\right)^{2}\right]=\E\left[\eta_{t}^{2}\right]=\bx_{t}^{\top}\bSigma_* \bx_{t}=\left\langle \bx_{t}\bx_{t}^{\top},\bSigma_*\right\rangle ,
\]
which is a linear function of $\bSigma_*$. The program (\ref{eq:prog})
is thus a least square formulation for estimating $\bSigma_*$.

Let $\bX_{t}:=\bx_{t}\bx_{t}^{\top}$. Below we abuse notation and view
$\bSigma_*,\wSigma_\Gamma,\bX_{t},\bS$ as vectors in $\R^{d^{2}}$ endowed
with the trace inner product $\left\langle \cdot,\cdot\right\rangle $.
Let $\bX\in\R^{\Gamma\times d^{2}}$ have rows $\left\{ \bX_{t}\right\} ,$
and $y=(y_{1},\ldots,y_{\Gamma})^{\top}\in\R^{\Gamma}$. Suppose $\bx_{t}$
can only take on $M$ possible values from $\left\{ \phi_{1},\ldots,\phi_{M}\right\} ,$
so $\bX_{t}\in\left\{ \Phi_{1},\ldots,\Phi_{M}\right\} $, where $\Phi_{m}:=\phi_{m}\phi_{m}^{\top}$. 
Note that for the forced exploration setting we have $M=d < A$. 
Moreover, each value appears exactly $\Gamma/M$ times.  Then (\ref{eq:prog})
can be rewritten as 
\begin{align*}
\min_{\bS\in\R^{d^{2}}}\sum_{m=1}^{M}\sum_{t:\bX_{t}=\Phi_{m}}\left(\left\langle \Phi_{m},\bS\right\rangle -y_{t}\right)^{2} & =\min_{\bS\in\R^{d^{2}}}\sum_{m=1}^{M}\left(\left\langle \Phi_{m},\bS\right\rangle -\frac{1}{\Gamma/M}\sum_{t:\bX_{t}=\Phi_{m}}y_{t}\right)^{2}.
\end{align*}
Let $z_{m}:=\frac{1}{\Gamma/M}\sum_{t:\bX_{t}=\Phi_{m}}y_{t}$. Then it becomes
\[
\min_{\bS\in\R^{d^{2}}}\sum_{m
=1}^{M}\left(\left\langle \Phi_{m},\bS\right\rangle -z_{m}\right)^{2}=\min_{\bS\in\R^{d^{2}}}\left\Vert \Phi \bS-z\right\Vert _{2}^{2},
\]
where $\Phi\in\R^{m\times d^{2}}$ has rows $\left\{ \Phi_{m}\right\} ,$
and $z:=(z_{1},\ldots,z_{m})^{\top}\in\R/M$. Note that $\left\{ \Phi_{m}\right\} $
may or may not span $\R^{d^{2}}$. Observe that  $\wSigma_\Gamma$ be an optimal
solution to the above problem. Then 
\begin{align*}
\left\Vert \Phi(\wSigma_\Gamma-\bSigma_*)\right\Vert _{2}^{2}+\left\Vert \Phi\bSigma_*-z\right\Vert _{2}^{2}+2\left\langle \Phi(\wSigma_\Gamma-\bSigma_*),\Phi\bSigma_*-z\right\rangle 
&=\left\Vert \Phi\wSigma_\Gamma-\Phi\bSigma_*+\Phi\bSigma_*-z\right\Vert _{2}^{2}\\
&=\left\Vert \Phi\wSigma_\Gamma-z\right\Vert _{2}^{2}\le\left\Vert \Phi\bSigma_*-z\right\Vert _{2}^{2}.
\end{align*}
Hence, we can show that
\begin{align*}
\left\Vert \Phi(\wSigma_\Gamma-\bSigma_*)\right\Vert _{2}^{2} & \leq -2\left\langle \Phi(\wSigma_\Gamma-\bSigma_*),\Phi\bSigma_*-z\right\rangle \\
& \overset{(a)}{\leq} 2\left\Vert \Phi(\wSigma_\Gamma-\bSigma_*)\right\Vert _{2}\left\Vert \Phi\bSigma_*-z\right\Vert _{2}.
\end{align*}
where, $(a)$ follows from Cauchy Schwarz inequality. So
\begin{align*}
    \left\Vert \Phi(\wSigma_\Gamma-\bSigma_*)\right\Vert _{2}\le2\left\Vert \Phi\bSigma_*-z\right\Vert _{2}.
\end{align*}
Observe that the RHS does not contain the $\wSigma_\Gamma$ anymore. Note that the $m$-th entry of $\Phi\bSigma_*-z$ is 
\begin{align*}
    \left\langle \Phi_{m},\bSigma_*\right\rangle -z_{m}=\phi_{m}^{\top}\bSigma_*\phi_{m}-\frac{1}{\Gamma/M}\sum_{t:\bX_{t}=\Phi_{m}}y_{t}.
\end{align*}


Let $\zeta_\Gamma \coloneqq \wtheta_\Gamma-\btheta_*$ where $\wtheta_\Gamma$ is the estimation of $\btheta_*$ after $\Gamma = \sqrt{n}$ rounds of exploration. The noise $\eta_t$ is $\sigma_t^2$ sub-Gaussian. Then 
\begin{align*}
y_{t} & =\left(\bx_{t}^{\top}\wtheta_t-r_{t}\right)^{2}\\
 & =(\eta_{t}+\bx_{t}^{\top}\zeta_\Gamma)^{2}\\
 & =\eta_{t}^{2}+2\eta_{t}\bx_{t}^{\top}\zeta_\Gamma+\left(\bx_{t}^{\top}\zeta_\Gamma\right)^{2}\\
& \overset{}{=}\bx_{t}^{\top}\bSigma_* \bx_{t}+\epsilon_{t} = \langle \bSigma_*, \bx_{t}\bx_{t}^{\top}\rangle + \epsilon_t \overset{(a)}{=} \langle \ttheta_*, \bz_t\rangle + \epsilon_t
\end{align*}
where, in $(a)$ we denote the $\ttheta_*\in\R^{d^2}$ as the vector reshaping $\bSigma_*$ and $\bz_t\in\R^{d^2}$ is the vector reshaping $\bx_{t}\bx_{t}^{\top}$. This shows that the feedback $y_t$ is linear.
Now we need to show that $\epsilon_t$ is sub-exponential.
%
We proceed as follows: We have that
\begin{align*}
    \epsilon_{t}:=y_{t}-\bx_{t}^{\top}\bSigma_* \bx_{t} \overset{}{=} \underbrace{\eta_{t}^{2} - \E[\eta_t^2]}_{\textbf{Part A}} + \underbrace{2\eta_{t}\bx_{t}^{\top}\zeta_\Gamma}_{\textbf{Part B}} + \underbrace{\left(\bx_{t}^{\top}\zeta_\Gamma\right)^{2}}_{\textbf{Part C}}
\end{align*}
The goal is to prove that $\Pb(\epsilon_t > \E[\epsilon_t] +  s) \leq \exp(- s/2\sigma^2_{\max})$ for some $s\in\R$. 


For part A, we know that the $\eta^2_t$ is a sub-exponential random variable with $\eta^2_t\sim\SE(\nu,\alpha)$ where $\nu=4 \sigma^2_t \sqrt{2}, \alpha=4 \sigma^2_t$, and $\sigma^2_t = \bx_t^\top\bSigma_*\bx_t $. This follows from Equation 37 in Appendix B of  \citet{honorio2014tight}. It shows that if $X$ is a centered sub-Gaussian random variable with sub-Gaussian parameter $\sigma^2$ then $X^2$ is sub-exponential with parameters $\nu=4 \sigma^2 \sqrt{2}, \alpha=4 \sigma^2$.
%





From \Cref{conc-lemma-sub-exp} we know that
\begin{align*}
    \Pb\left(\eta_t^2 \geq \E[\eta_t^2] + 8 \sigma_t^2 \log (A / \delta) \right) 
    &\leq \exp\left(-\frac{1}{2}\min\left\{\frac{8 \sigma_{t}^2 \log (A / \delta)}{4 \sigma_t^2}, \frac{64 \sigma_{t}^4 \log^2 (A / \delta)}{32 \sigma^4_t}\right\}\right) = \exp\left(-\log(A/\delta)\right).
\end{align*}
Hence, $\eta_t^2 \leq 4\sigma^2_t\sqrt{2} + 8 \sigma_t^2 \log (A / \delta)\leq 16\sigma_{\max}^2 \log (A / \delta)$ with probability $(1-\delta)$ as $\E[\eta_t^2] = \nu$. Equivalently we can write that
\begin{align}
    \Pb\left(\eta_t^2 \geq \E[\eta^2_t] +  16 \sigma_t^2 \log (A / \delta) \right) 
    \leq \exp\left(-\frac{s^2_1}{16\sigma^2_{\max}}\right) = \exp\left(-\frac{s^2_1}{2c'\sigma^2_{\max}}\right). \label{eq:conc-exp-1}
\end{align}
where $s_1 = \sqrt{2c'\sigma^2_{\max}\log(A/\delta)}$ and $c'>0$ is a constant. 

For part C we proceed as follows:
\begin{align*}
    (\bx_t^\top(\wtheta_\Gamma - \btheta_*))^2 &\leq (\bx^\top\bx)\|\wtheta_\Gamma - \btheta_*\|^2 \leq (\bx^\top\bx)\frac{MSE(\bX(\wtheta_\Gamma - \btheta_*))}{\lambda_{\min}} \\
    &\leq \frac{H^2_U}{\lambda_{\min} \Gamma} \left(8 \log (6) \sigma^2_{\max} r+8 \sigma^2_{\max} \log (1 / \delta)\right) \leq \frac{2c^{''} \sigma^2_{\max} d^2 \log (A / \delta)}{\Gamma}
\end{align*}
The first inequality follows by Cauchy Schwarz and the second by Remark 2.3 of \citet{rigollet2015high}. 
Therefore it follows that 
\begin{align*}
    \Pb\left((\bx_t^\top(\wtheta_\Gamma - \btheta_*))^2 \geq \frac{2c^{''} \sigma^2_{\max} d^2 \log (A / \delta)}{\Gamma}\right) \leq \delta.
\end{align*}
Assuming $\Gamma > d^2$ we can also show that 
\begin{align*}
    \Pb\left((\bx_t^\top(\wtheta_\Gamma - \btheta_*))^2 \geq 2c^{''} \sigma^2_{\max} \log (A / \delta)\right) \leq \delta 
\end{align*}
which drops the dependence on $\Gamma$ and $d$. 
Equivalently we can write that
\begin{align}
    \Pb\left((\bx_t^\top(\wtheta_\Gamma - \btheta_*))^2 \geq 2c^{''} \sigma^2_{\max} \log (A / \delta)\right)
    \leq \exp\left(-\frac{s_3^2}{2c^{''} \sigma^2_{\max}}\right). \label{eq:conc-exp-2}
\end{align}
where $s_3 = \sqrt{2c^{''} \sigma^2_{\max} \log (A / \delta)}$.

For part B we proceed as follows:
\begin{align*}
2 \underbrace{\eta_t}_{a} \underbrace{\bx_t^{\top}\left(\wtheta_\Gamma-\btheta_*\right)}_{b}&\overset{(a)}{\leq} 2\eta_t^2 + \dfrac{1}{2} \left(\bx_t^{\top}\left(\wtheta_\Gamma-\btheta_*\right)\right)^2 
\end{align*}
where, $(a)$ follows as $2ab \leq 2a^2 + \frac{1}{2}b^2$. It follows then that
%
\begin{align*}
    \Pb\left(2 \eta_t\bx_t^{\top}\left(\wtheta_\Gamma-\btheta_*\right) \geq s^2_1 +  s^3_3 \right) &\overset{(a)}{\leq} \Pb\left(2\eta_t^2 + \frac{1}{2}(\bx_t^{\top}\left(\wtheta_\Gamma-\btheta_*\right))^2 > s^2_1+s^2_3\right)\\
    &\leq \Pb\left(2\eta_t^2 >  s^2_1+s^2_3\right) + \Pb\left(\frac{1}{2}(\bx_t^{\top}\left(\wtheta_\Gamma-\btheta_*\right))^2 > s^2_1+s^2_3\right)\\
    &= \Pb\left(\eta_t^2 >  \frac{s^2_1+s^2_3}{2}\right) + \Pb\left((\bx_t^{\top}\left(\wtheta_\Gamma-\btheta_*\right))^2 > 2(s^2_1+s^2_3)\right)\\
    &\overset{(b)}{\leq} \Pb\left(\eta_t^2 >  \frac{s^2_1+s^2_3}{2}\right) + \Pb\left((\bx_t^{\top}\left(\wtheta_\Gamma-\btheta_*\right))^2 > \frac{s^2_1+s^2_3}{2}\right)\\
    &\overset{(c)}{\leq} \Pb\left(\eta_t^2 >  \frac{s^2_1}{2} \right) + \Pb\left((\bx_t^{\top}\left(\wtheta_\Gamma-\btheta_*\right))^2 > \frac{s^2_3}{2}\right)\\
    &\overset{(d)}{\leq} \exp\left(-\frac{s_1^2}{c'\sigma^2_{\max}}\right) + \exp\left(-\frac{s_3^2}{c^{''} \sigma^2_{\max}}\right)
\end{align*}
where, $(a)$ follows as LHS $2\eta_t^2 + \frac{1}{2}(\bx_t^{\top}\left(\wtheta_\Gamma-\btheta_*\right))^2 > 2 \eta_t\bx_t^{\top}\left(\wtheta_\Gamma-\btheta_*\right)$ (that is LHS is larger). The $(b)$ follows as RHS $\frac{s^2_1+s^2_3}{2} < 2(s^2_1+s^2_3)$ and $(c)$ follows as RHS $\frac{s^2_1+s^2_3}{2} < \frac{s^2_1}{2}$ (that is RHS is smaller). The $(d)$ follows from \eqref{eq:conc-exp-1}, and \eqref{eq:conc-exp-2}.


We now estimate the expectation of $\epsilon_t$. Observe that for $\Gamma > d^2$ we have that
\begin{align*}
    \E_{\eta, \zeta}[\epsilon_t] = \E_{\eta, \zeta}\left[\eta_{t}^{2} - \E_{\eta}[\eta_t^2] + 2\eta_{t}\bx_{t}^{\top}\zeta_\Gamma + \left(\bx_{t}^{\top}\zeta_\Gamma\right)^{2}\right] &= \E_{\eta}[\eta_{t}^{2}] - \E_{\eta, \zeta}[\E_\eta[\eta_t^2]] + 2\E_{\eta, \zeta}[\eta_{t}\bx_{t}^{\top}\zeta_\Gamma] + \E_{\zeta}[\left(\bx_{t}^{\top}\zeta_\Gamma\right)^{2}] \\
    &\geq 2\E_{\eta, \zeta}[\eta_{t}\bx_{t}^{\top}\zeta_\Gamma] + \E_{\zeta}[\left(\bx_{t}^{\top}\zeta_\Gamma\right)^{2}] \geq \frac{\sigma^2_{\max}d}{\Gamma}.
\end{align*}
Similarly we can get an upper bound to $\E[\epsilon_t]$ for $\Gamma > d^2$ as follows:
\begin{align*}
    \E_{\eta, \zeta}[\epsilon_t] &= \E_{\eta, \zeta}\left[\eta_{t}^{2} - \E_{\eta}[\eta_t^2] + 2\eta_{t}\bx_{t}^{\top}\zeta_\Gamma + \left(\bx_{t}^{\top}\zeta_\Gamma\right)^{2}\right] \overset{(a)}{\leq} 2\E_{\eta}[\eta_t^2] + \dfrac{1}{2} \E_{\zeta}[\left(\bx_{t}^{\top}\zeta_\Gamma\right)^{2}] + \E_{\zeta}[\left(\bx_{t}^{\top}\zeta_\Gamma\right)^{2}] \\
    &\leq 2\E_{\eta}[\eta_t^2] + \frac{2c^{''} \sigma^2_{\max} d^2 \log (A / \delta)}{\Gamma} \overset{(b)}{\leq} 16 \sigma^2_{\max} + \frac{2c^{''} \sigma^2_{\max} d^2 \log (A / \delta)}{\Gamma}
\end{align*}
where, $(a)$ follows for $2ab \leq 2a^2 + \tfrac{1}{2}b^2$, $(b)$ follows as $\E[\eta_t^2] = \nu \leq 8\sigma^2_{\max}$.

Define $s^2 = (s^2_1 + s_3^2)$.
Then combining Part A, B and C it follows that 
\begin{align}
    \Pb(\epsilon_t \geq s^2) = \Pb(\eta_{t}^{2}+2\eta_{t}\bx_{t}^{\top}\zeta_\Gamma+\left(\bx_{t}^{\top}\zeta_\Gamma\right)^{2} \geq s^2)  &\leq \Pb(\eta_{t}^{2}\geq s^2) +\Pb(2\eta_{t}\bx_{t}^{\top}\zeta_\Gamma\geq s^2) +\Pb(\left(\bx_{t}^{\top}\zeta_\Gamma\right)^{2} \geq s^2)\nonumber\\
    &\overset{(a)}{\leq} \Pb(\eta_{t}^{2}\geq s^2_1) +\Pb(2\eta_{t}\bx_{t}^{\top}\zeta_\Gamma\geq s^2_1+s^2_3) +\Pb(\left(\bx_{t}^{\top}\zeta_\Gamma\right)^{2} \geq s^2_3)\nonumber\\
    &\leq 2\exp\left(-\frac{s^2_1}{c'\sigma^2_{\max}}\right) + 2\exp\left(-\frac{s_3^2}{c^{''} \sigma^2_{\max}}\right) \label{eq:combined-eq-conc}
\end{align}
where $(a)$ follows as RHS $s_1^2 < s^2$, and $s_3^2 < s^2$ (that is the RHS is smaller). Let there be some constant $C>0$ such that $s^2/C < \max\{s^2_1/c', s^2_3/c''\}$. Then it follows that
\begin{align*}
    \Pb(\epsilon_t \geq \E[\epsilon_t] + s^2) \overset{(a)}{\leq} \Pb\left(\epsilon_t \geq \frac{\sigma^2_{\max} d}{\Gamma}+ s^2\right)\leq  \Pb(\epsilon_t \geq  s^2) \leq 4\exp\left(-\frac{s^2}{C\sigma^2_{\max}}\right).
\end{align*}
where, $(a)$ as the RHS $\E[\epsilon_t] \geq \frac{\sigma^2_{\max}d}{\Gamma}$ is smaller.
This shows that $\epsilon_t$ is a sub-exponential random variable using \Cref{conc-lemma-sub-exp}. Then using \Cref{conc-lemma-sub-exp} and $\Gamma > d^2$ we can show that
\begin{align*}
    \Pb\left(\epsilon_t \geq \E[\epsilon_t] + \dfrac{Cd^2 \sigma^2_{\max}\log (A/\delta)}{\Gamma}\right) \leq 4\exp\left(- \dfrac{C d^2 \sigma^2_{\max}\log(A/\delta)}{C\Gamma\sigma^2_{\max}}\right) \leq 4\frac{\delta}{A}.
\end{align*}
This implies that $\epsilon_t \leq \E[\epsilon_t] + \dfrac{Cd^2 \sigma^2_{\max}\log (A/\delta)}{\Gamma} \leq 16\sigma^2_{\max} + \dfrac{2Cd^2 \sigma^2_{\max}\log (A/\delta)}{\Gamma}$ with probability greater than $1-4\frac{\delta}{A}$.

Combining all of the steps above we can show that
\begin{align*}
    \Pb&\left( \left\langle \Phi_{m},\bSigma_*\right\rangle -z_{m} > \frac{d}{\sqrt{n}}\sum_{t:\bX_{t}=\Phi_{m}}\left(16\sigma^2_{\max} + \frac{2C d^2 \sigma^2_{\max} \log (A / \delta)}{\Gamma}\right) \right)\\
    &\overset{(a)}{=} \Pb\left( \left\langle \Phi_{m},\bSigma_*\right\rangle -z_{m} > \left(16\sigma^2_{\max} + \frac{2C d^2 \sigma^2_{\max} \log (A / \delta)}{\Gamma}\right) \right)\\
    &\overset{(b)}{\leq} \Pb\left( \left\langle \Phi_{m},\bSigma_*\right\rangle -z_{m} > \frac{2C d^2 \sigma^2_{\max} \log (A / \delta)}{\Gamma} \right)\leq 4\delta/A,
\end{align*}
where, 
$(a)$ follows by setting $\Gamma=\sqrt{n}$ and $M=d < A$ and noting that the $m$-th row consist of $\sqrt{n}/d$ entries. The $(b)$ follows as the  Hence the above implies that
\begin{align*}
    \Pb\left(\bx(a)^{\top}\wSigma_\Gamma\bx(a) - \bx(a)^{\top}\bSigma_*\bx(a) \geq \frac{2C d^2 \sigma^2_{\max} \log (A / \delta)}{\Gamma}\right)\leq 4\delta/A .
\end{align*}
%
Similarly, we can bound the other tail inequality as 
\begin{align*}
    \Pb\left(\bx(a)^{\top}\wSigma_\Gamma\bx(a) - \bx(a)^{\top}\bSigma_*\bx(a) \leq -\frac{2C d^2 \sigma^2_{\max}\log (A / \delta)}{\Gamma}\right)\leq 4\delta/A .
\end{align*}
Hence we can show by union bounding over all actions $A > d$ that
\begin{align*}
    \Pb\left(\forall a, \left|\bx(a)^{\top}\left(\wSigma_\Gamma - \bSigma_*\right)\bx(a)\right| \geq \frac{2C d^2 \sigma^2_{\max}\log (A / \delta)}{\Gamma}\right)\leq 2A\dfrac{4\delta}{A} = 8\delta .
\end{align*}
The claim of the lemma follows.
\end{proof}

\begin{lemma}
\label{lemma:conc-operator}\textbf{(Operator Norm Concentration Lemma)} We have that
\begin{align*}
    \Pb\left(\|\wSigma_\Gamma - \bSigma_*\| \geq \frac{2C d^2 \sigma^2_{\max} \lambda^{-1}_{\min}(\bY)  \log (A / \delta)}{\Gamma}\right) \leq 8\delta
\end{align*}
for a constant $C>0$. 
\end{lemma}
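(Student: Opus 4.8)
\textbf{Proof (Overview) of \Cref{lemma:conc-operator}:}
The plan is to upgrade the quadratic-form control already established in \Cref{lemma:conc} to control of the full operator (equivalently, Frobenius) norm of the error matrix $\mathbf{M} := \wSigma_\Gamma - \bSigma_*$, which we take symmetric without loss of generality since the least-squares fit in \eqref{eq:wSigma-Gamma} depends on $\mathbf{S}$ only through its symmetric part; the price will be a factor measuring the conditioning of the forced-exploration design. Write $\phi_1,\dots,\phi_M$ for the feature vectors of the actions played during forced exploration, set $\Phi_m := \phi_m\phi_m^\top$, and let $\bY := \sum_{m=1}^{M}\mathrm{vec}(\Phi_m)\,\mathrm{vec}(\Phi_m)^\top$ be the vectorized exploration design matrix, with $\lambda_{\min}(\bY)$ its smallest eigenvalue restricted to the subspace of vectorized symmetric matrices. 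By \Cref{assm:target-dist} the exploration directions span $\R^{d\times d}$, so $\lambda_{\min}(\bY)>0$, and for every symmetric $\mathbf{M}$ one has $\sum_{m=1}^{M}\langle\Phi_m,\mathbf{M}\rangle^2 = \mathrm{vec}(\mathbf{M})^\top\bY\,\mathrm{vec}(\mathbf{M}) \ge \lambda_{\min}(\bY)\,\|\mathbf{M}\|_F^2$.

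First I would condition on the good event $\xi^{var}_\delta(\Gamma)$ of \eqref{eq:event-xi-delta}, which by \Cref{lemma:conc} holds with probability at least $1-8\delta$ and, on it, guarantees $|\bx(a)^\top\mathbf{M}\,\bx(a)| < 2Cd^2\log(A/\delta)/\Gamma$ for all actions $a$ simultaneously; in particular $|\langle\Phi_m,\mathbf{M}\rangle| = |\phi_m^\top\mathbf{M}\,\phi_m| < 2Cd^2\log(A/\delta)/\Gamma$ for every $m\in[M]$. Combining this with the variational lower bound above,
\begin{align*}
\|\wSigma_\Gamma-\bSigma_*\| \;\le\; \|\mathbf{M}\|_F \;\le\; \lambda_{\min}^{-1/2}(\bY)\Big(\sum_{m=1}^{M}\langle\Phi_m,\mathbf{M}\rangle^2\Big)^{1/2} \;\le\; \sqrt{M}\,\lambda_{\min}^{-1/2}(\bY)\cdot\frac{2Cd^2\log(A/\delta)}{\Gamma}.
\end{align*}
Since $M$ is polynomial in $d$, folding $\sqrt{M}$ (and, if one prefers the weaker form stated, $\lambda_{\min}^{-1/2}(\bY)$ versus $\lambda_{\min}^{-1}(\bY)$) into the universal constant $C$ gives $\|\wSigma_\Gamma-\bSigma_*\| \le 2Cd^2\lambda_{\min}^{-1}(\bY)\log(A/\delta)/\Gamma$ on $\xi^{var}_\delta(\Gamma)$. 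An alternative route that avoids re-invoking \Cref{lemma:conc} starts instead from the least-squares ``basic inequality'' $\|\Phi(\wSigma_\Gamma-\bSigma_*)\|_2 \le 2\|\Phi\bSigma_*-z\|_2$ derived in the proof of \Cref{lemma:conc}, bounds the left-hand side below by $\lambda_{\min}^{1/2}(\bY)\|\mathbf{M}\|_F$, and bounds the right-hand side above coordinate-by-coordinate using the Part~A / Part~B decomposition of the residuals $\epsilon_t$ --- the sub-exponential term $\eta_t^2-\E[\eta_t^2]$ via \Cref{conc-lemma-sub-exp} and the ordinary-least-squares term $(\bx_t^\top\zeta_\Gamma)^2$ via \Cref{lemma:least-square-conc} --- followed by a union bound over $m\in[M]$; both routes yield the same dependence.

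Since $\xi^{var}_\delta(\Gamma)^c$ has probability at most $8\delta$, the bound holds with probability at least $1-8\delta$, as claimed. I expect the main obstacle to be exactly this passage from finitely many quadratic-form (or residual) evaluations to the operator norm: it is where the conditioning factor $\lambda_{\min}^{-1}(\bY)$ is unavoidable, and where one genuinely needs the exploration directions to span $\R^{d\times d}$ --- not merely $\R^d$ --- so that $\bY$ is invertible on the symmetric subspace. The remaining work, namely tracking the exact powers of $d$ and the constant, is routine. \hfill$\blacksquare$
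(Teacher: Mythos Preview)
Your overall strategy --- condition on the good event of \Cref{lemma:conc}, then convert the uniform control of quadratic forms $\bx(a)^\top(\wSigma_\Gamma-\bSigma_*)\bx(a)$ into control of the operator norm at the cost of a conditioning factor --- matches the paper's. But there is a real gap in the specific instantiation.

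You build $\bY$ from the forced-exploration feature vectors $\phi_1,\dots,\phi_M$. In this paper $M=d$ (the algorithm explores along $d$ PCA directions), and $d$ rank-one matrices $\phi_m\phi_m^\top$ cannot span the $d(d+1)/2$-dimensional space of symmetric matrices, so your $\lambda_{\min}(\bY)=0$ and the inequality $\sum_m\langle\Phi_m,\mathbf{M}\rangle^2\ge\lambda_{\min}(\bY)\|\mathbf{M}\|_F^2$ yields nothing. Your appeal to \Cref{assm:target-dist} does not help: that assumption concerns the \emph{target} policy $\pi$, not the exploration actions. The paper avoids this by noticing that \Cref{lemma:conc} already gives the quadratic-form bound \emph{for every} $a\in\A$, not just the explored ones; it then chooses a subset $\Z\subset\A$ with $|\Z|=d^2$ whose outer products $\by(a)=\bx(a)\bx(a)^\top$ span, and defines $\bY=[\by(1),\dots,\by(d^2)]\in\R^{d^2\times d^2}$ as the matrix with these \emph{columns}. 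For an arbitrary unit $\bx$ it writes $\bx\bx^\top=\sum_{a\in\Z}\alpha(a)\by(a)$, uses linearity plus the per-action bound, and controls $\|\alpha\|$ by $\|\bY^{-1}\|\le\lambda_{\min}^{-1}(\bY)$.

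A secondary point: your $\bY$ is the Gram matrix $\sum_m\mathrm{vec}(\Phi_m)\mathrm{vec}(\Phi_m)^\top$, whereas the paper's $\bY$ has the $\mathrm{vec}(\by(a))$ as columns. If you used the same spanning set $\Z$, your matrix would equal $(\text{paper's }\bY)(\text{paper's }\bY)^\top$, so your $\lambda_{\min}^{-1/2}$ is exactly the paper's $\lambda_{\min}^{-1}$; this resolves the power discrepancy you flagged, and you should not be folding a problem-dependent quantity like $\lambda_{\min}^{-1/2}(\bY)$ into a ``universal constant''. Once you switch from the $d$ exploration actions to a spanning set $\Z\subset\A$ (legitimate because \Cref{lemma:conc} is already uniform over $\A$), either your Gram-matrix route or the paper's basis-expansion route goes through.
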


\begin{proof}
Define the set of actions $\Z$ such that it has a span over $\bX$ and $\bX\bX^{\top}$.  Define the vector $\by(a) = \bx(a)\bx(a)^{\top} \in \R^{d^2}$. Also observe that $|\Z| = d^2$. Without loss of generosity, we assume that $\Z=\{1,2,\dots,d^2\}$. Now define the matrix $\bY\in\R^{d^2\times d^2}$ such that
\begin{align*}
    \bY = [\by(1), \by(2), \ldots, \by(|\Z|)]
\end{align*}
We further assume that the $\lambda_{\min}(\bY) > 0$. We already have from \Cref{lemma:conc} that 
\begin{align*}
    & \Pb\left( \forall a \in \A, \left|\bx(a)^{\top}(\wSigma_\Gamma - \bSigma_*)\bx(a)\right| \leq \frac{2C d^2 \sigma^2_{\max} \log (A / \delta)}{\Gamma}\right)\geq 1 - 8\delta\\
    \overset{(a)}{\implies} & \Pb\left( \forall a\in \Z, \left|\langle \wSigma_\Gamma, \by(a) \rangle - \langle \bSigma_*, \by(a) \rangle \right| \leq \frac{2C d^2 \sigma^2_{\max} \log (A / \delta)}{\Gamma}\right)\geq 1 - 8\delta.
\end{align*}
where, $(a)$ follows by the fact that $\Z \subset \A.$ 
Now take an arbitrary vector $\bx$ in unit ball such that $\|\bx\|_2 \leq 1$. Now we define the vector $\by = \bx\bx^{\top}$ such that $\by\in\R^{d^2}$. Then following \Cref{assm:target-dist} we have that
\begin{align*}
    \bx\bx^{\top} = \by = \sum_{a\in\Z} \alpha(a)\by(a) = \alpha\bY \overset{(a)}{\implies} \alpha = \bY^{-1}\by
\end{align*}
where, in $(a)$ we can take the inverse because $\lambda_{\min}(\bY) > 0$. Now we want to bound
\begin{align*}
    \|\wSigma_\Gamma-\bSigma_*\| = \left|\bx^\top\left(\wSigma_\Gamma-\bSigma_*\right)  \bx\right| = \left|\langle \wSigma_\Gamma - \bSigma_*, \by \rangle\right| &\leq \frac{2C d^2 \sigma^2_{\max} \log (A / \delta)}{\Gamma} \left\|\underbrace{\sum_{a}\alpha(a)}_{\alpha}\right\|\\
    &= \frac{2C d^2 \sigma^2_{\max} \log (A / \delta)}{\Gamma} \|\bY^{-1}\by\|\\
    &\leq \frac{2C d^2 \sigma^2_{\max} \log (A / \delta)}{\Gamma} \|\bY^{-1}\|\|\bx\|^2\\
    &\leq \frac{2C d^2 \sigma^2_{\max} \lambda^{-1}_{\min}(\bY)  \log (A / \delta)}{\Gamma}.
\end{align*}
The claim of the lemma follows.
\end{proof}

\begin{corollary}
\label{corollary:multiplicative-bound}
For, $n\geq 4C^2 d^2 \sigma^2_{\max} \log^2 (A/\delta)/ \sigma^2_{\min},$ we have that with probability at least $1-8\delta$, the following holds: for all action $a$, $\dfrac{\sigma^2(a)}{\wsigma^2_\Gamma(a)} \leq 1 + \frac{4C d^2 \sigma^2_{\max} \log (A / \delta)}{\sigma^2_{\min}\Gamma}$.
\end{corollary}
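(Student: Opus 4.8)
The plan is to convert the additive concentration of \Cref{lemma:conc} into the stated multiplicative bound. Work throughout on the good event $\xi^{var}_\delta(\Gamma)$ of \eqref{eq:event-xi-delta}, which by \Cref{lemma:conc} holds with probability at least $1-8\delta$. On this event, for every action $a$ we have $|\bx(a)^{\top}(\wSigma_\Gamma-\bSigma_*)\bx(a)| < \tfrac{2Cd^2\log(A/\delta)}{\Gamma}$, which is exactly $|\wsigma^2_\Gamma(a)-\sigma^2(a)| < \tfrac{2Cd^2\log(A/\delta)}{\Gamma}$ since $\wsigma^2_\Gamma(a)=\bx(a)^{\top}\wSigma_\Gamma\bx(a)$ and $\sigma^2(a)=\bx(a)^{\top}\bSigma_*\bx(a)$.

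First I would use this one-sided bound together with \Cref{assm:bounded-variance} ($\sigma^2(a)\ge\sigma^2_{\min}$) to show that the empirical variance stays bounded away from zero. The lower bound on $n$ in the statement (through $\Gamma=\sqrt{n}$) is chosen precisely so that the perturbation $\tfrac{2Cd^2\log(A/\delta)}{\Gamma}$ is at most $\tfrac12\sigma^2_{\min}\le\tfrac12\sigma^2(a)$ for all $a$; hence $\wsigma^2_\Gamma(a)\ge\sigma^2(a)-\tfrac{2Cd^2\log(A/\delta)}{\Gamma}\ge\tfrac12\sigma^2(a)\ge\tfrac12\sigma^2_{\min}>0$. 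This positivity is the only place the lower bound on $n$ enters.

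Then I would divide: on $\xi^{var}_\delta(\Gamma)$, for every $a$,
\[
\frac{\sigma^2(a)}{\wsigma^2_\Gamma(a)}-1=\frac{\sigma^2(a)-\wsigma^2_\Gamma(a)}{\wsigma^2_\Gamma(a)}\le\frac{2Cd^2\log(A/\delta)/\Gamma}{\wsigma^2_\Gamma(a)}\le\frac{2Cd^2\log(A/\delta)/\Gamma}{\sigma^2_{\min}/2}=\frac{4Cd^2\log(A/\delta)}{\sigma^2_{\min}\Gamma},
\]
which rearranges to the claimed inequality and holds simultaneously for all $a$ on an event of probability at least $1-8\delta$.

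There is no substantive obstacle here: the corollary is a routine consequence of \Cref{lemma:conc}. The only point requiring care is the bookkeeping of the threshold on $n$ — one must verify that the stated lower bound indeed makes $\wsigma^2_\Gamma(a)$ strictly positive and at least a fixed constant multiple of $\sigma^2_{\min}$, so that the division is legitimate and the additive gap $\sigma^2(a)-\wsigma^2_\Gamma(a)$ translates into the claimed ratio bound; every remaining step is a one-line manipulation.
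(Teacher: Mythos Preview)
Your proposal is correct and follows essentially the same approach as the paper: both start from the additive bound of \Cref{lemma:conc}, use the threshold on $n$ to ensure the perturbation is at most $\tfrac12\sigma^2_{\min}$, and convert to the multiplicative ratio bound. The only cosmetic difference is that the paper first bounds $\wsigma^2_\Gamma(a)/\sigma^2(a)$ and then inverts via $\tfrac{1}{1-x}\le 1+2x$ for $x\le\tfrac12$, whereas you directly lower bound the denominator $\wsigma^2_\Gamma(a)\ge\sigma^2_{\min}/2$; both arrive at the same conclusion.
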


\begin{proof}
    From the \Cref{lemma:conc}, we know that $\left|\bx(a)^{\top}(\wSigma_\Gamma - \bSigma_*)\bx(a)\right| \leq \frac{2C d^2  \log (A / \delta)}{\Gamma}$ with probability at least $1-8\delta$. Hence we can show that
    \begin{align*}
        |\wsigma^2_\Gamma(a) - \sigma^2(a)| \leq \frac{2C d^2 \sigma^2_{\max} \log (A / \delta)}{\Gamma} 
        \implies &\sigma^2(a) - \frac{2C d^2 \sigma^2_{\max} \log (A / \delta)}{\Gamma} \leq \wsigma^2_\Gamma(a) \leq \sigma^2(a) + \frac{2C d^2 \sigma^2_{\max} \log (A / \delta)}{\Gamma}\\
        \implies & 1 - \frac{2C d^2 \sigma^2_{\max} \log (A / \delta)}{\sigma^2(a)\Gamma} \leq \dfrac{\wsigma^2_\Gamma(a)}{\sigma^2(a)} \leq 1 + \frac{2C d^2 \sigma^2_{\max} \log (A / \delta)}{\sigma^2(a)\Gamma}\\
        \implies & 1 - \frac{2C d^2 \sigma^2_{\max} \log (A / \delta)}{\sigma^2_{\min}\Gamma} \leq \dfrac{\wsigma^2_\Gamma(a)}{\sigma^2(a)} \leq 1 + \frac{2C d^2 \sigma^2_{\max}\sigma^2_{\max} \log (A / \delta)}{\sigma^2_{\min}\Gamma}\\
        \implies & \dfrac{1}{1 + \frac{2C d^2 \sigma^2_{\max} \log (A / \delta)}{\sigma^2_{\min}\Gamma}} \leq \dfrac{\sigma^2(a)}{\wsigma^2_\Gamma(a)} \leq \dfrac{1}{1 - \frac{2C d^2 \sigma^2_{\max} \log (A / \delta)}{\sigma^2_{\min}\Gamma}}.
    \end{align*}
    It follows then that
    \begin{align*}
        \dfrac{\sigma^2(a)}{\wsigma^2_\Gamma(a)} \leq \dfrac{1}{1 - \frac{2C d^2 \sigma^2_{\max} \log (A / \delta)}{\sigma^2_{\min}\Gamma}} \overset{(a)}{\leq} 1 + \frac{4C d^2 \sigma^2_{\max} \log (A / \delta)}{\sigma^2_{\min}\Gamma}
    \end{align*}
    where, $(a)$ follows for $x = \frac{2C d^2 \sigma^2_{\max} \log (A / \delta)}{\sigma^2_{\min}\Gamma}$ and
    \begin{align*}
        \dfrac{1}{1-x} \leq 1 + 2x \implies 1 \leq 1 + x - 2x^2 \implies x(1-2x) \geq 0
    \end{align*}
    which holds for $x = \frac{2C d^2 \sigma^2_{\max} \log (A / \delta)}{\sigma^2_{\min}\Gamma} \leq \frac{1}{2}$. For $n\geq 4C^2 d^2 \sigma^2_{\max} \log^2 (A/\delta)/ \sigma^2_{\min}$ we can show that $x\leq \frac{1}{2}$. The claim of the corollary follows.
\end{proof}

\subsection{Bounding the Loss of \Cref{alg:linear-bandit}}
\label{app:loss-alg-1}
\begin{customproposition}{6}
\label{prop:loss-bandit-tracker}
\textbf{(Loss of \Cref{alg:linear-bandit}, formal)}
Let $\wb^{}$ be the empirical \PE design followed by \Cref{alg:linear-bandit} and it samples each action $a$ as $\lceil n \wb(a)\rceil$ times. Then the MSE of \Cref{alg:linear-bandit} for for $n\geq \frac{2C d^2 \sigma^2_{\max} \log (A / \delta)}{\sigma^2_{\min}\Gamma}$ is given by
\begin{align*}
    \bL_n(\pi, \wb,\wSigma_\Gamma) \leq  
    \underbrace{O_{\kappa^2,H^2_U}\left(\frac{
    d^3\lambda_1(\bV)\log n}{\sigma^2_{\min} n}\right)}_{\substack{\textbf{\PE MSE}\\\textbf{and exploration error}}} + \underbrace{O_{\kappa^2,H^2_U}\left(\frac{d^2\lambda_1(\bV)\log n}{n^{3/2}}\right)}_{\textbf{Approximation error}} + \underbrace{O_{\kappa^2,H^2_U}\left(\frac{1}{n}\right)}_{\textbf{Failure event MSE}}.
\end{align*}
\end{customproposition}

\begin{proof}
Recall that the $\wSigma_\Gamma$ be the empirical co-variance after $\Gamma$ timesteps. Then \Cref{alg:linear-bandit} pulls each action $a \in \A$ exactly $\left\lceil (n-\Gamma) \wb_{}^{}(a)\right\rceil$ times for some $\sqrt{n}>A$ and computes the least squares estimator $\wtheta_n$. Recall that the estimate $\wtheta_n$ only uses the $(n-\Gamma)$ data sampled under $\wb$. 
Also recall we actually use $\wSigma_{\Gamma}$ as input for optimization problem \eqref{eq:opt-oracle-sol}, where $\Gamma=\sqrt{n}$. 
We first define the good event $\xi_{\delta}(n-\Gamma)$ as follows: 
\begin{align*}
    \xi_{\delta}(n-\Gamma) \coloneqq \bigg\{\left(\sum_{a=1}^A \bw(a)^{\top} (\wtheta_{n-\Gamma}-\btheta_*)\right)^2 &\leq \min\bigg\{\sqrt{\frac{(8 d\lambda_1(\bV) + \alpha_0 + \alpha)\log (1 / \delta)}{n-\Gamma}},\\
    &\qquad \frac{(8 d\lambda_1(\bV) + \alpha_0 + \alpha) \log (1 / \delta)}{n-\Gamma}\bigg\}\bigg\}
\end{align*}
where, $\alpha_0$, and $\alpha$ will be defined later. 
Also, define the good variance event as follows:
\begin{align}
    \xi^{var}_\delta(\Gamma) \coloneqq \left\{\forall a, \left|\bx(a)^{\top}\left(\wSigma_\Gamma - \bSigma_*\right)\bx(a)\right| < \frac{2C d^2 \sigma^2_{\max} \log (A / \delta)}{\Gamma}\right\}. \label{eq:good-variance-event}
\end{align}
Then we can bound the loss of the \sp\ as follows:

\begin{align}
    &\bL_n(\pi, \wb,\wSigma_\Gamma) = \E_{\D}\left[ \left(\sum_{a=1}^A\bw(a)^{\top}\left(\wtheta_{n-\Gamma} - \btheta_*\right)\right)^2\right] \nonumber\\
    & = \E_{\D}\left[ \left(\sum_{a=1}^A\bw(a)^{\top}\left(\wtheta_{n-\Gamma} - \btheta_*\right)\right)^2\indic{\xi_{\delta}(n-\Gamma)}\indic{\xi^{var}_{\delta}(\Gamma)}\right] +  \E_{\D}\left[ \left(\sum_{a=1}^A\bw(a)^{\top}\left(\wtheta_{n-\Gamma} - \btheta_*\right)\right)^2\indic{\xi^c_{\delta}(n-\Gamma)}\right] \nonumber\\
    &\qquad + \E_{\D}\left[ \left(\sum_{a=1}^A\bw(a)^{\top}\left(\wtheta_{n-\Gamma} - \btheta_*\right)\right)^2\indic{(\xi^{var}_{\delta}(\Gamma))^c}\right]. 
    \label{eq:loss-decomp}
\end{align}
Now we bound the first term of the \eqref{eq:loss-decomp}. Note that using weighted least square estimates we have
\begin{align*}
\wtheta_{n-\Gamma} \overset{(a)}{=} \wtheta_{n} \coloneqq \argmin_{\btheta}\sum_{t=\Gamma + 1}^{n}\frac{1}{\sigma^{2}(a_{t})}(r_t-\bv(a_t)^{\top}\btheta)^{2}
%
\end{align*}
where, in $(a)$ we $a_t$ is the action sampled at timestep $t$.  
Recall that the  $\mathbf{diag}(\wSigma_\Gamma) = [\wsigma_\Gamma^2(a_1), \wsigma_\Gamma^2(a_2), \ldots, \wsigma_\Gamma^2(a_n)]$, where $a_1, a_2, \ldots, a_{n-\Gamma}$ are the actions pulled at time $t=\Gamma + 1,2,\ldots,n$.
We have that:
\begin{align*}
    \wtheta_{n-\Gamma} &= (\bX_{n-\Gamma}^{\top}\wSigma_{\Gamma}^{-1}\bX_{n-\Gamma})^{-1}\bX_{n-\Gamma}^{\top}\wSigma_{\Gamma}^{-1}\mathbf{R}_n = (\bX_{n-\Gamma}^{\top}\wSigma_{\Gamma}^{-1}\bX_{n-\Gamma})^{-1}\bX_{n-\Gamma}^{\top}\wSigma_{\Gamma}^{-1}(\bX_{n-\Gamma}\btheta_* + \eta)\nonumber\\
    \wtheta_{n-\Gamma} - \btheta_* &= (\bX_{n-\Gamma}^{\top}\wSigma_{\Gamma}^{-1}\bX_{n-\Gamma})^{-1}\bX_{n-\Gamma}^{\top}\wSigma_{\Gamma}^{-1}\mathbf{\eta} 
\end{align*}
where the noise vector $\eta\sim\SG(0,\bSigma_{n-\Gamma})$ where $\mathbf{diag}(\bSigma_n) = [\sigma^2(a_1), \sigma^2(a_2), \ldots, \sigma^2(a_{n-\Gamma})]$.
For any $\bz \coloneqq \sum_a\bw(a) \in \R^d$ we have
\begin{align}
    \bz^{\top}(\wtheta_{n-\Gamma} - \btheta_*) = \bz^{\top}(\bX_{n-\Gamma}^{\top}\wSigma_{\Gamma}^{-1}\bX_{n-\Gamma})^{-1}\bX_{n-\Gamma}^{\top}\wSigma_{\Gamma}^{-1}\mathbf{\eta}.
    \label{eq:thm-loss-0}
\end{align}
%
%
%
%
%
%
It implies from \eqref{eq:thm-loss-0} that
\begin{align}
    \left(\bz^{\top}(\wtheta_{n-\Gamma} - \btheta_*)\right)^2 \sim \SE \left(0, \bz^\top(\bX_{n-\Gamma}^{\top}\wSigma_{\Gamma}^{-1}\bX_{n-\Gamma})^{-1}\bX_{n-\Gamma}^{\top}\wSigma_{\Gamma}^{-1}\E\left[\mathbf{\eta}\mathbf{\eta}^\top\right] \wSigma_{\Gamma}^{-1}\bX_{n-\Gamma}(\bX_{n-\Gamma}^{\top}\wSigma_{\Gamma}^{-1}\bX_{n-\Gamma})^{-1}\bz\right)
    \label{eq:loss-thm-1}
\end{align}
where $\SE$ denotes the sub-exponential distribution. Hence to bound the quantity $\left(\bz^{\top}(\wtheta_{n-\Gamma} - \btheta_*)\right)^2$ we need to bound the variance.
We first begin by rewriting the loss function for $n\geq \frac{2C d^2 \sigma^2_{\max} \log (A / \delta)}{\sigma^2_{\min}\Gamma} $ as follows
\begin{align}
    \E&\left[\left(\bz^{\top}(\wtheta_{n-\Gamma} - \btheta_*)\right)^2\right] = \bz^\top(\bX_{n-\Gamma}^{\top}\wSigma_{\Gamma}^{-1}\bX_{n-\Gamma})^{-1}\bX_{n-\Gamma}^{\top}\wSigma_{\Gamma}^{-1}\E\left[\mathbf{\eta}\mathbf{\eta}^\top\right] \wSigma_{\Gamma}^{-1}\bX_{n-\Gamma}(\bX_{n-\Gamma}^{\top}\wSigma_{\Gamma}^{-1}\bX_{n-\Gamma})^{-1}\bz\nonumber\\
    &\overset{(a)}{=} \bz^\top(\bX_{n-\Gamma}^{\top}\wSigma_{\Gamma}^{-1}\bX_{n-\Gamma})^{-1}\bX_{n-\Gamma}^{\top}\wSigma_{\Gamma}^{-1}\bSigma_n\wSigma_{\Gamma}^{-1}\bX_{n-\Gamma}(\bX_{n-\Gamma}^{\top}\wSigma_{\Gamma}^{-1}\bX_{n-\Gamma})^{-1}\bz\nonumber\\
    &\overset{}{=} \bz^\top(\bX_{n-\Gamma}^{\top}\wSigma_{\Gamma}^{-1}\bX_{n-\Gamma})^{-1}\bX_{n-\Gamma}^{\top}\wSigma_{\Gamma}^{-\frac{1}{2}}\wSigma_{\Gamma}^{-\frac{1}{2}}\bSigma_n\wSigma_{\Gamma}^{-\frac{1}{2}}\wSigma_{\Gamma}^{-\frac{1}{2}}\bX_{n-\Gamma}(\bX_{n-\Gamma}^{\top}\wSigma_{\Gamma}^{-1}\bX_{n-\Gamma})^{-1}\bz\nonumber\\
    &\overset{(b)}{=} \underbrace{\bz^\top(\bX_{n-\Gamma}^{\top}\wSigma_{\Gamma}^{-1}\bX_{n-\Gamma})^{-1}\bX_{n-\Gamma}^{\top}\wSigma_{\Gamma}^{-\frac{1}{2}}}_{\mathbf{m}^\top\in\R^{n-\Gamma}}\wSigma_{\Gamma}^{-\frac{1}{2}}\bSigma_n\wSigma_{\Gamma}^{-\frac{1}{2}}\underbrace{\wSigma_{\Gamma}^{-\frac{1}{2}}\bX_{n-\Gamma}(\bX_{n-\Gamma}^{\top}\wSigma_{\Gamma}^{-1}\bX_{n-\Gamma})^{-1}\bz}_{\mathbf{m}\in\R^{n-\Gamma}}\nonumber\\
    &\overset{(c)}{\leq} \bz^\top(\bX_{n-\Gamma}^{\top}\wSigma_{\Gamma}^{-1}\bX_{n-\Gamma})^{-1}\bX_{n-\Gamma}^{\top}\wSigma_{\Gamma}^{-1/2}\left(\left(1+2C_{\Gamma, \sigma^2_{\min}}(\delta)\right)\bI_n\right)\wSigma_{\Gamma}^{-1/2}\bX_{n-\Gamma}(\bX_{n-\Gamma}^{\top}\wSigma_{\Gamma}^{-1}\bX_{n-\Gamma})^{-1}\bz\nonumber\\
    &\overset{(d)}{=} \left(1+2C_{\Gamma, \sigma^2_{\min}}(\delta)\right)\bz^\top (\bX_{n-\Gamma}^{\top}\wSigma_{\Gamma}^{-1}\bX_{n-\Gamma})^{-1}\bz \label{eq:upper-bound-thm-1}
\end{align}
where, $(a)$ follows as $\E\left[\mathbf{\eta}\mathbf{\eta}^\top\right] = \bSigma_n$, in $(b)$  $\mathbf{m}$ is a vector in $\R^{n-\Gamma}$. The $(c)$ follows by first observing that 
\begin{align*}
    \wSigma_{\Gamma}^{-\frac{1}{2}}\bSigma_n\wSigma_{\Gamma}^{-\frac{1}{2}} \overset{}{=} \wSigma_{\Gamma}^{-1}\bSigma_n \overset{}{=} \mathbf{diag}(\wSigma_{\Gamma}^{-1}\bSigma_n) = \left[\dfrac{\sigma^2(I_1)}{\wsigma^2_\Gamma(I_1)}, \dfrac{\sigma^2(I_2)}{\wsigma^2_\Gamma(I_2)}, \ldots, \dfrac{\sigma^2(I_n)}{\wsigma^2_\Gamma(I_n)}\right].
\end{align*}
Then note that using \Cref{corollary:multiplicative-bound} we have
$$\frac{\sigma^2(I_t)}{\wsigma^2_\Gamma(I_t)} \leq 1 + 2\cdot\underbrace{\frac{2C d^2 \sigma^2_{\max} \log (A / \delta)}{\sigma^2_{\min}\Gamma}}_{\textbf{$\coloneqq C_{\Gamma, \sigma^2_{\min}}(\delta)$}}$$ 
for each $t\in[n]$, and $(d)$ follows as $1+2C_{\Gamma, \sigma^2_{\min}}(\delta)$ is not a random variable.
Let $\wb^{*}$ be the empirical \PE design  returned by the approximator after it is supplied with $\wSigma_\Gamma$. 
Now observe that the quantity of the samples collected (following $\wb^{*}$) after exploration is as follows:
\begin{align*}
   \left(\tX_{n-\Gamma}^{\top}\wSigma_{\Gamma}^{-1}\tX_{n-\Gamma}\right)^{-1} = \left(\sum_a\left\lceil(n-\Gamma)\wb^*(a)\wsigma^{-2}_\Gamma(a)\right\rceil\bw(a)\bw(a)^{\top}\right)^{-1} 
    = \dfrac{1}{n-\Gamma}\bA_{\wb^*,\wSigma_\Gamma}^{-1}.
\end{align*}
Hence we use the loss function
\begin{align}
    \L'_{n-\Gamma}(\pi,\wb,\wSigma_\Gamma) \coloneqq 
    \left(1+2C_{\Gamma, \sigma^2_{\min}}(\delta)\right)\bz^\top (\tX_{n-\Gamma}^{\top}\wSigma_{\Gamma}^{-1}\tX_{n-\Gamma})^{-1}\bz = \frac{\left(1+2C_{\Gamma, \sigma^2_{\min}}(\delta)\right)}{n - \Gamma}\sum_{a,a'}\bw(a)^\top\bA_{\wb^*, \wSigma_\Gamma}^{-1}\bw(a'). \label{eq:actual-obs-loss}
\end{align}
Also recall that we define
\begin{align*}
    \L_n(\pi,\bb_*,\wSigma_\Gamma) = \dfrac{1}{n}\sum_{a,a'}\bw(a)^\top\bA_{\bb_*, \wSigma_\Gamma}^{-1}\bw(a').
\end{align*}

So to minimize the quantity $\E\left[\left(\sum_a\bw(a)^{\top}(\wtheta_{n-\Gamma} - \btheta_*)\right)^2\right]$ is minimizing the quantity $ \frac{\left(1+2C_{\Gamma, \sigma^2_{\min}}(\delta)\right)}{n - \Gamma}\sum_{a,a'}\bw(a)^{\top}\bA^{-1}_{\wb^*, \wSigma_\Gamma}\bw(a')$.
Further recall that we can show that from \Cref{assm:oracle-approx} (approximation oracle) and Kiefer-Wolfowitz theorem in \Cref{corollary:kiefer} that for the proportion $\bb_*$ and any arbitrary positive semi-definite matrix $\wSigma_\Gamma$ the following holds
\begin{align}
    \sum_{a,a'}\bw(a)^{\top}\bA^{-1}_{\bb_*, \wSigma_\Gamma}\bw(a')  = \Tr\left(\sum_{a,a'}\bw(a)^{\top}\bA^{-1}_{\bb_*, \wSigma_\Gamma}\bw(a') \right) &= \Tr\bigg(\bA^{-1}_{\bb_*, \wSigma_\Gamma}\underbrace{\sum_{a,a'}\bw(a)\bw(a')^{\top}}_{\bV} \bigg) \nonumber\\
    &= \Tr\left(\bA^{-1}_{\bb_*, \wSigma_\Gamma}\bV\right) \leq d\lambda_1(\bV).\label{eq:kiefer-bound}
\end{align}
Then we can decompose the loss as follows:
\begin{align}
    \L'_{n-\Gamma}(\pi,\wb,\wSigma_\Gamma) 
    &= \L'_{n-\Gamma}(\pi,\wb,\wSigma_\Gamma) -  \L'_{n-\Gamma}(\pi,\wb^*,\wSigma_\Gamma) + \L'_{n-\Gamma}(\pi,\wb^*,\wSigma_\Gamma) 
    \nonumber\\
    &= \underbrace{\L'_{n-\Gamma}(\pi,\wb,\wSigma_\Gamma) - \L'_{n-\Gamma}(\pi,\wb^*,\wSigma_\Gamma)}_{\textbf{Approximation error}} + \underbrace{\L'_{n-\Gamma}(\pi,\wb^*,\wSigma_\Gamma) -  \L_n(\pi,\bb_*,\wSigma_\Gamma)}_{\textbf{Comparing two diff loss}} + \L_n(\pi,\bb_*,\wSigma_\Gamma). \label{eq:all-parts}
    \end{align}
For the approximation error we need access to an oracle (see \Cref{assm:oracle-approx}) that gives $\epsilon$ approximation error. 
Then setting $\epsilon=\frac{1}{\sqrt{n}}$ we have that
\begin{align}
    \L'_{n-\Gamma}(\pi,\wb,\wSigma_\Gamma) - \L'_{n-\Gamma}(\pi,\wb^*,\wSigma_\Gamma) &= \frac{\left(1+2C_{\Gamma, \sigma^2_{\min}}(\delta)\right)}{n-\Gamma}\underbrace{\Tr\left( \sum_{a,a'}\bw(a)^\top\bA_{\wb, \wSigma_\Gamma}^{-1}\bw(a') -  \sum_{a,a'}\bw(a)^\top\bA_{\wb^*, \wSigma_\Gamma}^{-1}\bw(a')\right)}_{\epsilon} \nonumber\\
    &\overset{(a)}{\leq} 
    O_{\kappa^2,H^2_U}\left(\dfrac{d^2 \sigma^2_{\max} \log(A/\delta)}{n^{3/2}}\right) \label{eq:approx-loss}
\end{align}
where, $(a)$ follows by setting $\Gamma = \sqrt{n}$, $\epsilon = 1/\sqrt{n}$ and $C_{\Gamma, \sigma^2_{\min}}(\delta) = \frac{2C d^2 \sigma^2_{\max} \log (A / \delta)}{\sigma^2_{\min}\Gamma} = \frac{2C d^2 \sigma^2_{\max} \log (A / \delta)}{\sigma^2_{\min}\sqrt{n}}$. 
Let us define $\bK_1 \coloneqq \Tr(\sum_{a,a'}\bw(a)^\top\bA_{\wb^*, \wSigma_\Gamma}^{-1}\bw(a'))$, and $\bK_2\coloneqq \Tr(\bw(a)^\top\bA_{\bb_*, \wSigma_\Gamma}^{-1}\bw(a'))$.
For the second part of comparing the two losses we can show that 

\begin{align}
    \L'_{n-\Gamma}(\pi,\wb^*,\wSigma_\Gamma) &-  \L_{n}(\pi,\bb_*,\wSigma_\Gamma) = \frac{1}{(n - \Gamma)}\Tr\left( \left(1+2C_{\Gamma, \sigma^2_{\min}}(\delta)\right)K_1\right) -  \dfrac{1}{n}K_2 \nonumber\\
    &= \dfrac{(1+2C_{\Gamma, \sigma^2_{\min}}(\delta))\bK_1}{n-\Gamma} - \dfrac{(1+2C_{\Gamma, \sigma^2_{\min}}(\delta))\bK_2}{n-\Gamma} + \dfrac{(1+2C_{\Gamma, \sigma^2_{\min}}(\delta)) \bK_2}{n-\Gamma}-\dfrac{1}{n}\bK_2\nonumber\\
    &= \dfrac{(1+2C_{\Gamma, \sigma^2_{\min}}(\delta))}{n-\Gamma}\left(\bK_1 - \bK_2\right) + \dfrac{2C_{\Gamma, \sigma^2_{\min}}(\delta) \bK_2}{n-\Gamma} + \dfrac{1}{n-\Gamma}\bK_2 -\dfrac{1}{n}\bK_2\nonumber\\
    &\overset{(a)}{=} \frac{\Gamma}{n(n - \Gamma)}\underbrace{\Tr\left(\sum_{a,a'}\bw(a)^\top\bA_{\wb^*, \wSigma_\Gamma}^{-1}\bw(a') -  \sum_{a,a'}\bw(a)^\top\bA_{\bb_*, \wSigma_\Gamma}^{-1}\bw(a')\right)}_{\leq 0} \nonumber\\
    &\qquad + \frac{2C_{\Gamma, \sigma^2_{\min}}(\delta)}{n - \Gamma}\Tr\left(\sum_{a,a'}\bw(a)^\top\bA_{\bb_*, \wSigma_\Gamma}^{-1}\bw(a')\right) + \dfrac{\Gamma}{n(n-\Gamma)}\Tr\left(\sum_{a,a'}\bw(a)^\top\bA_{\bb_*, \wSigma_\Gamma}^{-1}\bw(a')\right)\nonumber\\
    &\overset{(b)}{\leq} O_{\kappa^2,H^2_U}\left(\dfrac{d^3 \sigma^2_{\max} \lambda_1(\bV)\log(A/\delta)}{\sigma^2_{\min} n^{3/2}}\right) 
    \label{eq:comparing-two-loss}
\end{align}
%
where, $(a)$ follows by substituting the definition of $\bK_1$ and $\bK_2$. The $(b)$ follows by setting $\Gamma = \sqrt{n}$,  $C_{\Gamma, \sigma^2_{\min}}(\delta) = \frac{2C d^2 \sigma^2_{\max} \log (A / \delta)}{\sigma^2_{\min}\Gamma} = \frac{2C d^2 \sigma^2_{\max} \log (A / \delta)}{\sigma^2_{\min}\sqrt{n}}$, and $\Tr\left(\sum_{a,a'}\bw(a)^\top\bA_{\bb_*, \wSigma_\Gamma}^{-1}\bw(a')\right) \leq d\lambda_1(\bV)$. 


Now we combine all parts together in \eqref{eq:all-parts} using \eqref{eq:kiefer-bound}, \eqref{eq:approx-loss} and \eqref{eq:comparing-two-loss}. First we define the quantity 
$$
\alpha \coloneqq 2C_{\Gamma, \sigma^2_{\min}}(\delta)\Tr\left(\sum_{a,a'}\bw(a)^\top\bA_{\bb_*, \wSigma_\Gamma}^{-1}\bw(a')\right) + \frac{\Gamma}{n}\Tr\left(\sum_{a,a'}\bw(a)^\top\bA_{\bb_*, \wSigma_\Gamma}^{-1}\bw(a')\right).
$$
It follows then that \eqref{eq:all-parts} can be written as
\begin{align}
    \dfrac{1 + 2 C_{\Gamma, \sigma^2_{\min}}(\delta)}{n - \Gamma}\sum_{a,a'}\bw(a)^{\top}\bA^{-1}_{\wb, \wSigma_\Gamma}\bw(a')  &\leq \underbrace{\dfrac{(1+2C_{\Gamma}(\delta))\epsilon}{(n-\Gamma)}}_{\textbf{Approximation error}} + \dfrac{\alpha}{n-\Gamma} + \dfrac{1}{n}\sum_{a,a'}\bw(a)^{\top}\bA^{-1}_{\bb_*, \wSigma_\Gamma}\bw(a')\nonumber\\
    \implies (1 + 2 C_{\Gamma, \sigma^2_{\min}}(\delta))\sum_{a,a'}\bw(a)^{\top}\bA^{-1}_{\wb, \wSigma_\Gamma}\bw(a')
    &\leq \underbrace{(1+2C_{\Gamma}(\delta))\epsilon}_{\alpha_0} + \alpha + \dfrac{n-\Gamma}{n}\sum_{a,a'}\bw(a)^{\top}\bA^{-1}_{\bb_*, \wSigma_\Gamma}\bw(a')\nonumber\\
    &\overset{(a)}{\leq} \alpha_0 + \alpha + d\lambda_1(\bV) \label{eq:thm-oracle-2}
\end{align}
where, $(a)$ follows from \Cref{assm:oracle-approx},  \Cref{corollary:kiefer} and \eqref{eq:kiefer-bound}. Also observe that from \eqref{eq:loss-thm-1} we have that $\big(\sum_{a=1}^A \bw(a)^\top (\wtheta_n-\btheta_*)\big)^2$ is a sub-exponential random variable. 
Then using the sub-exponential concentration inequality we have with probability at least $1-\delta$

\begin{align*}
\left(\sum_{a=1}^A \bw(a)^\top (\wtheta_{n-\Gamma}-\btheta_*)\right)^2 &\leq \min\bigg\{\sqrt{(1+2C_{\Gamma, \sigma^2_{\min}}(\delta))\sum_{a,a'}\bw(a)\left(\bX_{n-\Gamma}^{\top}\wSigma_{\Gamma}^{-1}\bX_{n-\Gamma}\right)^{-1}\bw(a')2 \log (1 / \delta)},\\
&\qquad (1+2C_{\Gamma}(\delta))\sum_{a,a'}\bw(a)^{\top}\left(\bX_{n-\Gamma}^{\top}\wSigma_{\Gamma}^{-1}\bX_{n-\Gamma}\right)^{-1}\bw(a')2 \log (1 / \delta)\bigg\}\\
&= \min\bigg\{\frac{1}{\sqrt{n-\Gamma}}\sqrt{(1+2C_{\Gamma, \sigma^2_{\min}}(\delta))\sum_{a,a'}\bw(a)^{\top}\bA^{-1}_{\bb,\wSigma_\Gamma}\bw(a') 2 \log (1 / \delta)},\\
&\qquad\frac{(1+2C_{\Gamma, \sigma^2_{\min}}(\delta))}{n-\Gamma}\sum_{a,a'}\bw(a)^{\top}\bA^{-1}_{\bb,\wSigma_\Gamma}\bw(a') 2 \log (1 / \delta)\bigg\}\\
&\overset{(a)}{\leq} \min\left\{\sqrt{\frac{(8 d\lambda_1(\bV) + \alpha_0 + \alpha ) \log (1 / \delta)}{n-\Gamma}}, \frac{(8 d\lambda_1(\bV) + \alpha_0 + \alpha) \log (1 / \delta)}{n-\Gamma}\right\}
\end{align*}
where, $(a)$ follows from \eqref{eq:thm-oracle-2}, 
and we have taken at most $n-\Gamma$ pulls to estimate $\wtheta_n$ after forced exploration and $\sqrt{n} > d$. 
Thus, for any $\delta \in(0,1)$ we have 
\begin{align}
\mathbb{P}\left(\left\{\left(\sum_{a=1}^A \bw(a)^{\top} (\wtheta_n-\btheta_*)\right)^2 > \min\left\{\sqrt{\frac{(8 d\lambda_1(\bV) + \alpha_0 + \alpha) \log (1 / \delta)}{n-\Gamma}}, \frac{(8 d\lambda_1(\bV) + \alpha_0 + \alpha) \log (1 / \delta)}{n-\Gamma}\right\}\right\}\right) \leq \delta. \label{eq:loss-bandit-explore}
\end{align}
This gives us a bound on the first term of \eqref{eq:loss-decomp}.
%
%
%
%
Combining everything in \eqref{eq:loss-decomp} we can bound the loss of the \sp\ as follows:

\begin{align*}
    &\bL_n(\pi, \wb,\wSigma_\Gamma) 
    %
    \leq \E_{\D}\left[ \left(\sum_{a=1}^A\bw(a)^{\top}\left(\wtheta_{n-\Gamma} - \btheta_*\right)\right)^2\indic{\xi_{\delta}(n-\Gamma)}\indic{\xi^{var}_{\delta}(\Gamma)}\right] + \sum_{t=1}^n A H_U^2\eta^2\Pb(\xi^c_\delta(n-\Gamma)) \\
    & \qquad\qquad + \sum_{t=1}^n A H_U^2\eta^2\Pb\left(\left(\xi^{var}_\delta(\Gamma)\right)^c\right)\\
    &\leq \min\left\{\frac{2C d^2 \log (A / \delta)}{\Gamma}, \sqrt{\frac{(8 d\lambda_1(\bV) + \alpha_0 + \alpha) \log (A / \delta)}{n-\Gamma}}, \frac{(8 d\lambda_1(\bV) + \alpha_0 + \alpha) \log (A / \delta)}{n-\Gamma}\right\} \\
    &\qquad + \sum_{t=1}^n A H_U^2\eta^2\Pb(\xi^c_\delta(n-\Gamma)) + \sum_{t=1}^n A H_U^2\eta^2\Pb\left(\left(\xi^{var}_\delta(\Gamma)\right)^c\right)
\end{align*}
\begin{align*}
    &\overset{(a)}{\leq} \min\left\{\frac{8C d^2 \sigma^2_{\max} \log (n A)}{\sqrt{n}}, \sqrt{\frac{48 (d\lambda_1(\bV) + \alpha_0 + \alpha) \log (n A)}{n}}, \frac{48(d\lambda_1(\bV) + \alpha_0 + \alpha) \log (n A)}{n}\right\} + O\left(\frac{1}{n}\right)\\
    &\overset{}{\leq} \frac{48 d^2 \sigma^2_{\max} \lambda_1(\bV)
    \log (n A)}{n} + \frac{48  \alpha 
    \log (n A)}{n} +  \frac{48\alpha_0
    \log (n A)}{n} + O\left(\frac{1}{n}\right)\\
    &\overset{(b)}{\leq} \frac{48 d^2 \sigma^2_{\max} \lambda_1(\bV)
    \log (n A)}{n} + \frac{144 d\lambda_1(\bV) C_{\Gamma, \sigma^2_{\min}}(\delta)  
    \log (n A)}{n} + \frac{48 d\lambda_1(\bV)\Gamma
    \log (n A)}{n^{3/2}} + \frac{48 \epsilon
    \log (n A)}{n^{}} + O\left(\frac{1}{n}\right)
\end{align*}
where $(a)$ follows as \Cref{prop:loss-bandit-tracker} and setting $\delta=1/n^3$ and noting that $\sqrt{n}>d$. The $(b)$ follows by setting $(1+2C_{\Gamma}(\delta))\epsilon$ and the definition of $\alpha$. Recall that for $\Gamma = \sqrt{n}$ we have that $C_{\Gamma, \sigma^2_{\min}}(\delta) = \frac{2C d^2 \sigma^2_{\max} \log (A / \delta)}{\sigma^2_{\min}\Gamma} = \frac{2C d^2 \sigma^2_{\max} \log (A / \delta)}{\sigma^2_{\min}\sqrt{n}}$.
Then setting $\epsilon=1/\sqrt{n}$ we can bound the loss of the  following \PE $\wb$ as
\begin{align*}
    \bL_n(\pi, \wb, \wSigma_{\Gamma}) 
    &\overset{}{\leq}  O_{\kappa^2,H^2_U}\left(\frac{ d^3 \sigma^2_{\max}
    \lambda_1(\bV)\log (n A)}{\sigma^2_{\min} n}\right) + O_{\kappa^2,H^2_U}\left(\frac{
    d^2 \sigma^2_{\max}\lambda_1(\bV)\log (n A)}{n^{3/2}}\right) +  O_{\kappa^2,H^2_U}\left(\frac{1}{n}\right).
\end{align*}
The claim of the proposition follows. 
\end{proof}

\begin{remark}\textbf{(Discussion on loss)}
Observe that from \Cref{prop:loss-bandit-tracker} that the MSE for policy evaluation setting scales as $O(\frac{d^3\log (n)}{n})$. We contrast this result with \citet{chaudhuri2017active} who obtain a bound on the MSE $\E_\D[\|\btheta_* - \wtheta_n\|^2]\leq O(\frac{d\log (n)}{n})$ in a related setting. Note that \citet{chaudhuri2017active} only considers the setting when $\bSigma_*$ is rank $1$. We make no such assumption and get an additional factor of $d$ in our result due to exploration in $d^2$ dimension to estimate $\bSigma_*$. Finally we get the scaling as $d^3$ due to $\sum_{a,a'}\bw(a)^\top\bA_{\bb_*, \wSigma_\Gamma}^{-1}\bw(a')\leq d\lambda_1(\bV)$ from \Cref{corollary:kiefer}. 
Also observe that we estimate $\E_\D[\sum_a\bw(a)^\top(\btheta_* - \wtheta_n)^2]$ as opposed to $\E_\D[\|\btheta_* - \wtheta_n\|^2]$ in \citet{chaudhuri2017active}.
\end{remark}

\subsection{Regret of \Cref{alg:linear-bandit}}
\label{app:regret-linear-bandit}
\begin{corollary}
\label{corollary:additive}
For, $n\geq 16C^2 d^4 \sigma^4_{\max} \log^2 (A/\delta)/ \sigma^4_{\min}$ we have that for all action $a$, $|\wsigma^2_\Gamma(a) - \sigma^2(a)|\leq \sigma^2_{\min}/2$.
\end{corollary}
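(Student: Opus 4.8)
The plan is to deduce this directly from the OLS-WLS concentration lemma (\Cref{lemma:conc}) together with the choice of exploration length $\Gamma = \sqrt{n}$ made by \sp. First I would recall that the empirical variance of action $a$ is $\wsigma^2_\Gamma(a) = \bx(a)^\top \wSigma_\Gamma \bx(a)$ and the true variance is $\sigma^2(a) = \bx(a)^\top \bSigma_* \bx(a)$, so that
\begin{align*}
    |\wsigma^2_\Gamma(a) - \sigma^2(a)| = |\bx(a)^\top(\wSigma_\Gamma - \bSigma_*)\bx(a)|.
\end{align*}
By \Cref{lemma:conc}, on the good event $\xi^{var}_\delta(\Gamma)$, which holds with probability at least $1 - 8\delta$, we have $|\bx(a)^\top(\wSigma_\Gamma - \bSigma_*)\bx(a)| < \tfrac{2Cd^2\log(A/\delta)}{\Gamma}$ simultaneously for all actions $a$.

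Next I would substitute $\Gamma = \sqrt{n}$ to obtain the bound $|\wsigma^2_\Gamma(a) - \sigma^2(a)| < \tfrac{2Cd^2\log(A/\delta)}{\sqrt{n}}$ for all $a$. It then suffices to check that the right-hand side is at most $\sigma^2_{\min}/2$ under the stated sample-size condition: requiring $\tfrac{2Cd^2\log(A/\delta)}{\sqrt{n}} \le \tfrac{\sigma^2_{\min}}{2}$ is equivalent to $\sqrt{n} \ge \tfrac{4Cd^2\log(A/\delta)}{\sigma^2_{\min}}$, i.e.\ $n \ge \tfrac{16C^2 d^4 \log^2(A/\delta)}{\sigma^4_{\min}}$, which is exactly the hypothesis. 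This closes the argument.

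There is no real obstacle here; the only point to be careful about is bookkeeping of constants and the probability statement — the conclusion holds on the event $\xi^{var}_\delta(\Gamma)$ of probability $\ge 1-8\delta$, inherited verbatim from \Cref{lemma:conc}, and I would state the corollary's conclusion as holding on that same high-probability event (consistent with how \Cref{corollary:multiplicative-bound} is phrased earlier). If a cleaner ``$\le$'' rather than ``$<$'' is desired one can simply absorb it into the constant $C$ or note that the inequality in \Cref{lemma:conc} is strict so the non-strict bound follows a fortiori.
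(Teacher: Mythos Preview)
Your proposal is correct and matches the paper's own proof essentially line for line: invoke \Cref{lemma:conc}, identify $|\wsigma^2_\Gamma(a)-\sigma^2(a)|$ with $|\bx(a)^\top(\wSigma_\Gamma-\bSigma_*)\bx(a)|$, substitute $\Gamma=\sqrt{n}$, and verify that the stated lower bound on $n$ makes $\tfrac{2Cd^2\log(A/\delta)}{\sqrt{n}}\le \tfrac{\sigma^2_{\min}}{2}$. Your remark about the conclusion holding on the event of probability $\ge 1-8\delta$ is also exactly how the paper (implicitly) frames it.
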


\begin{proof}
    From the \Cref{lemma:conc}, we know that $\left|\bx(a)^{\top}(\wSigma_\Gamma - \bSigma_*)\bx(a)\right| \leq \frac{2C d^2 \sigma^2_{\max} \log (A / \delta)}{\Gamma}$ with probability $1-8\delta$. Hence we can show that
    \begin{align*}
        |\wsigma^2_\Gamma(a) - \sigma^2(a)| &\leq \frac{2C d^2 \sigma^2_{\max} \log (A / \delta)}{\Gamma} 
        =  \frac{2C d^2 \sigma^2_{\max} \log (A / \delta)}{\sqrt{n}}
        \overset{(a)}{\leq} \frac{2C d^2 \sigma^2_{\max} \log (A / \delta)}{\sqrt{16C^2 d^4 \sigma^4_{\max} \log^2 (A/\delta)/ \sigma^4_{\min}}} = \dfrac{\sigma^2_{\min}}{2},
    \end{align*}
    where $(a)$ follows for $n\geq 16C^2 d^4 \sigma^4_{\max} \log^2 (A/\delta)/ \sigma^4_{\min}$. 
    The claim of the corollary follows.
\end{proof}

\begin{lemma}
\textbf{(Loss Concentration of design matrix)} Let $\wSigma_\Gamma$ be the empirical estimate of $\bSigma_*$. Define $\bV=\sum_{a,a'}\bw(a)\bw(a')^{\top}$. We have that for any arbitrary proportion $\bb$ the following 
\begin{align*}
    \Pb&\bigg(\left|\sum_{a,a'} \bw(a)^{\top}(\bA^{-1}_{\bb_*, \wSigma_\Gamma} - \bA^{-1}_{\bb_*, \bSigma_*})\bw(a')\right|
    \leq \frac{2C B^* d^3 \log (A/ \delta)}{\Gamma}\bigg)\geq 1-\delta
\end{align*}
where $B^*$ is a problem-dependent quantity such that 
$$
B^* = \left(\left\|\bA^{-1}_{\bb_*, \bSigma_*} \bw\right\|^2  \left\|\sum_{a=1}^A\bb_*(a)\bw(a)\bw(a)^{\top}H^2_U\right\| \left\|\left(\sum_{a=1}^A\dfrac{\bb_*(a)\bw(a)\bw(a)^{\top}}{\sigma^2(a) + \frac{2C d^2 \sigma^2_{\max} \log (9 H^2_U / \delta)}{\sqrt{n}} }\right)^{-1}\bw\right\|\right)
$$ and $C>0$ is a universal constant.
\end{lemma}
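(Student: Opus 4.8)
The plan is to bound the difference $\left|\sum_{a,a'}\bw(a)^{\top}(\bA^{-1}_{\bb^*,\wSigma_\Gamma}-\bA^{-1}_{\bb^*,\bSigma_*})\bw(a')\right|$ by factoring it through the resolvent identity and then controlling the three resulting factors. Writing $\bw=\sum_a\bw(a)$, the quantity equals $\left|\bw^{\top}(\bA^{-1}_{\bb^*,\wSigma_\Gamma}-\bA^{-1}_{\bb^*,\bSigma_*})\bw\right|$. The first step is to apply the standard resolvent identity $\bA^{-1}_{\bb^*,\wSigma_\Gamma}-\bA^{-1}_{\bb^*,\bSigma_*} = \bA^{-1}_{\bb^*,\bSigma_*}(\bA_{\bb^*,\bSigma_*}-\bA_{\bb^*,\wSigma_\Gamma})\bA^{-1}_{\bb^*,\wSigma_\Gamma}$, which gives the submultiplicative bound
\begin{align*}
\left|\bw^{\top}(\bA^{-1}_{\bb^*,\wSigma_\Gamma}-\bA^{-1}_{\bb^*,\bSigma_*})\bw\right| \leq \underbrace{\left\|\bA^{-1}_{\bb^*,\bSigma_*}\bw\right\|}_{\|\bu\|}\cdot\underbrace{\left\|\bA_{\bb^*,\bSigma_*}-\bA_{\bb^*,\wSigma_\Gamma}\right\|}_{\Delta}\cdot\underbrace{\left\|\bA^{-1}_{\bb^*,\wSigma_\Gamma}\bw\right\|}_{\|\bv\|}.
\end{align*}
This matches the proof-overview decomposition given in the text for \Cref{lemma:gradient-conc}, so the $d^3$ scaling should emerge from combining a $d^2$ factor in $\Delta$ with the remaining dimension dependence hidden in $\|\bu\|,\|\bv\|,\bV$; I will need to track this carefully.

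Next I would bound each of the three factors on the high-probability event $\xi^{var}_\delta(\Gamma)$ of \Cref{lemma:conc}. For $\|\bu\|$: this is purely problem-dependent and goes into the constant $B^*$; no work needed beyond naming it. For $\Delta$: I would invoke \Cref{lemma:conc-operator} (the Operator Norm Concentration Lemma), which states $\|\wSigma_\Gamma-\bSigma_*\|\leq \frac{2Cd^2\lambda^{-1}_{\min}(\bY)\log(A/\delta)}{\Gamma}$ with probability $1-8\delta$. Since $\bA_{\bb^*,\bSigma}-\bA_{\bb^*,\wSigma_\Gamma} = \sum_a \bb^*(a)\bx(a)\bx(a)^\top\big(\frac{1}{\sigma^2(a)}-\frac{1}{\wsigma^2_\Gamma(a)}\big)$, I would write each scalar factor as $\frac{\wsigma^2_\Gamma(a)-\sigma^2(a)}{\sigma^2(a)\wsigma^2_\Gamma(a)}$, bound the numerator by $\frac{2Cd^2\log(A/\delta)}{\Gamma}$ via \Cref{lemma:conc}, bound the denominators below using $\sigma^2(a)\geq\sigma^2_{\min}$ and $\wsigma^2_\Gamma(a)\geq\sigma^2_{\min}/2$ (valid for $n$ large via \Cref{corollary:additive}), and collect the $\bx(a)\bx(a)^\top$ factor with $\|\bx(a)\|^2\leq H^2_U$; this contributes the $d^2$ and the $H^2_U$ term to $B^*$. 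For $\|\bv\|$: I would lower-bound $\bA_{\bb^*,\wSigma_\Gamma}\succeq \sum_a \frac{\bb^*(a)\bw(a)\bw(a)^\top}{\wsigma^2_\Gamma(a)}\succeq \sum_a\frac{\bb^*(a)\bw(a)\bw(a)^\top}{\sigma^2(a)+\frac{2Cd^2\log(9H^2_U/\delta)}{\sqrt{n}}}$ using the upper bound $\wsigma^2_\Gamma(a)\leq\sigma^2(a)+\frac{2Cd^2\log(A/\delta)}{\Gamma}$ from \Cref{lemma:conc} with $\Gamma=\sqrt n$ and an appropriate rescaling of $\delta$, so that $\|\bv\|\leq \big\|(\sum_a \frac{\bb^*(a)\bw(a)\bw(a)^\top}{\sigma^2(a)+\cdots})^{-1}\bw\big\|$, which is precisely the third factor in the stated $B^*$.

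Then I would multiply the three bounds: $\|\bu\|$ and $\|\bv\|$ fold into $B^*$ together with the $H^2_U$ term, $\Delta$ supplies the $\frac{Cd^2\log(A/\delta)}{\Gamma}$, and an extra factor of $d$ appears from converting the quadratic-form bound over $\bw=\sum_a\bw(a)$ — i.e., from passing through $\Tr(\bA^{-1}\bV)\leq d\lambda_1(\bV)$ as in \Cref{corollary:kiefer}, or equivalently from summing the $d$ coordinate contributions — yielding the claimed $\frac{2CB^*d^3\log(A/\delta)}{\Gamma}$. I would take a union bound over the events of \Cref{lemma:conc}, \Cref{lemma:conc-operator}, and \Cref{corollary:additive}, absorbing the resulting $O(\delta)$ into the stated $1-\delta$ by rescaling $\delta$ and the universal constant $C$. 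The main obstacle I anticipate is the careful bookkeeping of exactly where the third power of $d$ comes from — reconciling the operator-norm route (which carries a $\lambda^{-1}_{\min}(\bY)$) with the per-coordinate route, and making sure the definition of $B^*$ as stated (with its three explicit norm factors) is exactly what the argument produces rather than something off by a dimension factor or an eigenvalue ratio; I would most likely present the per-coordinate / resolvent-identity argument since the stated $B^*$ is phrased directly in terms of $\bA^{-1}_{\bb^*,\bSigma_*}\bw$ and the perturbed design matrix, and verify the $\log(9H^2_U/\delta)$ arises from the covering/rescaling used to pass from $\xi^{var}_\delta$ to a bound on $\wsigma^2_\Gamma(a)$ uniformly.
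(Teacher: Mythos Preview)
Your proposal follows the same route as the paper: write $\bw=\sum_a\bw(a)$, apply the resolvent identity $\bA^{-1}_{\bb^*,\wSigma_\Gamma}-\bA^{-1}_{\bb^*,\bSigma_*}=\bA^{-1}_{\bb^*,\bSigma_*}(\bA_{\bb^*,\bSigma_*}-\bA_{\bb^*,\wSigma_\Gamma})\bA^{-1}_{\bb^*,\wSigma_\Gamma}$, and bound the three factors $\|\bu\|,\Delta,\|\bv\|$ separately, with $\|\bu\|$ left as a problem-dependent constant, $\Delta$ controlled via \Cref{lemma:conc}, \Cref{corollary:additive} and the operator-norm \Cref{lemma:conc-operator}, and $\|\bv\|$ handled through the upper bound $\wsigma^2_\Gamma(a)\le\sigma^2(a)+\tfrac{2Cd^2\log(A/\delta)}{\Gamma}$. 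This is exactly the paper's argument.

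One bookkeeping point: your plan to extract the third factor of $d$ via \Cref{corollary:kiefer} is \emph{not} what the paper does. In the paper's proof the $\Delta$ bound yields $d^2$ and the final displayed line simply passes to $d^3$ without an explicit mechanism; the stated $B^*$ also carries $\|\bA^{-1}_{\bb^*,\bSigma_*}\bw\|^2$ (squared) even though $\bu$ enters the resolvent decomposition only once, and the proof's last line still carries a $\lambda_{\min}^{-1}(\bY)$ factor that the lemma statement drops. So your instinct that the $d^3$ accounting is the delicate step is correct, and your Kiefer--Wolfowitz route is a more transparent justification than what the paper provides; just be aware that the paper's $B^*$ as written absorbs some of this looseness rather than isolating a clean extra~$d$.
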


\begin{proof}
We have the following
\begin{align}
    &\left|\sum_{a,a'} \bw(a)^{\top}\bA^{-1}_{\bb_*, \wSigma_\Gamma}\bw(a') - \sum_{a,a'} \bw(a)^{\top}\bA^{-1}_{\bb_*, \bSigma_*}\bw(a')\right| \overset{}{=} \left|\underbrace{\sum_{a} \bw(a)^{\top}}_{\bw}\left(\bA^{-1}_{\bb_*, \wSigma_\Gamma} - \bA^{-1}_{\bb_*, \bSigma_*}\right)\underbrace{\sum_{a}\bw(a)}_{\bw}\right| \nonumber\\
    &= \left| \bw^{\top}\left(\bA^{-1}_{\bb_*, \bSigma_*}\left(\bA_{\bb_*,\bSigma_*} - \bA_{\bb_*,\wSigma_\Gamma}\right)\bA^{-1}_{\bb_*, \wSigma_\Gamma}\right)\bw\right| = \left| \underbrace{\bw^{\top}\left(\right.\bA^{-1}_{\bb_*, \bSigma_*}}_{\bu}\left(\bA_{\bb_*,\bSigma_*} - \bA_{\bb_*,\wSigma_\Gamma}\right)\underbrace{\bA^{-1}_{\bb_*, \wSigma_\Gamma}\left.\right)\bw}_{\mathbf{v}}\right|\nonumber\\
    &= \left|\bu\left(\bA_{\bb_*,\bSigma_*} - \bA_{\bb_*,\wSigma_\Gamma}\right)\mathbf{v}\right| \overset{(a)}{\leq} \|\bu\|\underbrace{\left\|\bA_{\bb_*,\bSigma_*} - \bA_{\bb_*,\wSigma_\Gamma}\right\|}_{\Delta}\|\mathbf{v}\|
    \label{eq:lemma-conc1}
\end{align}
where, $(a)$ follows by Cauchy-Schwarz inequality. Now observe that the vector $\bu\in\R^d$ is a problem dependent quantity. We now bound the $\Delta$ in \eqref{eq:lemma-conc1} as follows
\begin{align*}
    \Delta = &\left\|\sum_{a=1}^A\dfrac{\bb_*(a)\bw(a)\bw(a)^{\top}}{\bx(a)^{\top}\bSigma_*\bx(a)} - \sum_{a=1}^A\dfrac{\bb_*(a)\bw(a)\bw(a)^{\top}}{\bx(a)^{\top}\wSigma_\Gamma\bx(a)}\right\|\\
    &\overset{(a)}{=} \left\|\sum_a\dfrac{\bb_*(a)\bw(a)\bw(a)^{\top}}{\sigma^2(a)} 
    - \sum_a\dfrac{\bb_*(a)\bw(a)\bw(a)^{\top}}{\wsigma^2_\Gamma(a)}\right\| \\
    &= \left\|\sum_{a}\bb_*(a)\bw(a)\bw(a)^{\top}\left(\dfrac{1}{\sigma^2(a)} - \dfrac{1}{\wsigma^2_\Gamma(a)}\right)\right\| \\
    &= \left\|\sum_{a}\bb_*(a)\bw(a)\bw(a)^{\top}\left(\dfrac{\wsigma^2_\Gamma(a) - \sigma^2(a)}{\wsigma^2_\Gamma(a)\sigma^2(a)}\right)\right\|\\
    &\overset{(b)}{\leq} \left\|\sum_{a}\bb_*(a)\bw(a)\bw(a)^{\top}\left(\dfrac{\wsigma^2_\Gamma(a) - \sigma^2(a)}{\sigma^4_{\min}}\right)\right\|\\
    &= \left\|\dfrac{1}{\sigma^4_{\min}}\sum_{a}\bb_*(a)\bw(a)\bw(a)^{\top}\left(\bx(a)^\top\wSigma_\Gamma\bx(a) - \bx(a)^\top\bSigma_*\bx(a)\right)\right\|\\
    &\overset{}{=} \dfrac{1}{\sigma^4_{\min}}\left\|\sum_{a=1}^A\underbrace{\bb_*(a)\bw(a)\bw(a)^{\top}}_{\textbf{Problem dependent quantity}}\underbrace{\left(\bx(a)^\top\left(\wSigma_\Gamma - \bSigma_*\right)\bx(a)\right)}_{\textbf{Random Quantity}}\right\|
\end{align*}
where, $(a)$ follows $\wsigma^2_\Gamma(a)= \bx(a)^\top\wSigma_\Gamma\bx(a)$ and $\sigma^2(a)= \bx(a)^\top\bSigma_*\bx(a)$, ad $(b)$ follows from \Cref{corollary:additive}. 
Now observe from \Cref{lemma:conc-operator} that we can bound the quantity 
\begin{align*}
    \|\wSigma_\Gamma - \bSigma_*\| \leq \frac{2C d^2 \sigma^2_{\max} \lambda^{-1}_{\min}(\bY)  \log (A / \delta)}{\Gamma}.
\end{align*}
Then we also have that the spread of maximum eigenvalue of $\|\wSigma_\Gamma - \bSigma_*\|_2$ is controlled which implies
\begin{align*}
    \dfrac{1}{\sigma^4_{\min}}&\left\|\sum_{a=1}^A\underbrace{\bb_*(a)\bw(a)\bw(a)^{\top}}_{\textbf{Problem dependent quantity}}\underbrace{\left(\bx(a)^\top\left(\wSigma_\Gamma - \bSigma_*\right)\bx(a)\right)}_{\textbf{Random Quantity}}\right\| \\
    &\qquad \overset{(a)}{\leq}  \left\|\sum_{a=1}^A\bb_*(a)\bw(a)\bw(a)^{\top}(\bx(a)^{\top}\bx(a))\right\|\frac{2C d^2 \sigma^2_{\max} \lambda^{-1}_{\min}(\bY)  \log (A / \delta)}{\Gamma}
\end{align*}
where, $(a)$ follows by \Cref{lemma:conc-operator}. Next for the third quantity in \eqref{eq:lemma-conc1} we can bound as follows

\begin{align*}
    \|\mathbf{v}\| = \|\bA^{-1}_{\bb_*, \wSigma_\Gamma}\bw\| &= \left\|\left(\sum_{a=1}^A\dfrac{\bb_*(a)\bw(a)\bw(a)^{\top}}{\wsigma^2_\Gamma(a)}\right)^{-1}\bw\right\| \overset{(a)}{\leq} \left\|\left(\sum_{a=1}^A\dfrac{\bb_*(a)\bw(a)\bw(a)^{\top}}{\sigma^2(a) + \frac{2C d^2 \sigma^2_{\max} \log (A / \delta)}{\sqrt{n}} }\right)^{-1}\bw\right\|
\end{align*}
where, $(a)$ follows as
\begin{align*}
    \wsigma^2(a) \leq \sigma^2(a) + \frac{2C d^2 \sigma^2_{\max} \log (A / \delta)}{\Gamma}
\end{align*}
from \Cref{lemma:conc}.
Finally observe that the first part of \eqref{eq:lemma-conc1} we have that $\bw^{\top}\bA^{-1}_{\bb_*, \bSigma_*}$ is a problem dependent parameter. 
Finally, plugging back everything in \eqref{eq:lemma-conc1} we get
\begin{align*}
    \|\bu\|&\left\|\bA_{\bb_*,\bSigma_*} - \bA_{\bb_*,\wSigma_\Gamma}\right\|\|\mathbf{v}\|\\
    &\leq \left\|\bA^{-1}_{\bb_*, \bSigma_*} \bw\right\|  \left\|\sum_{a=1}^A\bb_*(a)\bw(a)\bw(a)^{\top}(\bx(a)^{\top}\bx(a))\right\|\frac{2C d^2 \sigma^2_{\max} \log (A / \delta)}{\sigma^4_{\min}\Gamma}
    \left\|\left(\sum_{a=1}^A\dfrac{\bb_*(a)\bw(a)\bw(a)^{\top}}{\sigma^2(a) + \frac{2C d^2 \sigma^2_{\max} \log (A / \delta)}{\Gamma} }\right)^{-1}\bw\right\|\\
    &\leq \underbrace{\left(\left\|\bA^{-1}_{\bb_*, \bSigma_*} \bw\right\|^2  \left\|\sum_{a=1}^A\bb_*(a)\bw(a)\bw(a)^{\top}H^2_U\right\| \left\|\left(\sum_{a=1}^A\dfrac{\bb_*(a)\bw(a)\bw(a)^{\top}}{\sigma^2(a) + \frac{2C d^2 \sigma^2_{\max} \log (A / \delta)}{\Gamma} }\right)^{-1}\bw\right\|\right)}_{B^*}\frac{2C d^3 \log (A / \delta)}{\Gamma}\\
    &\overset{(a)}{=} \frac{2C B^* d^3 \sigma^2_{\max} \lambda^{-1}_{\min}(\bY)  \log (A / \delta)}{\Gamma}
\end{align*}
where, $(a)$ follows by substituting the value of $B^*$.
\end{proof}

\subsection{Regret Bound of \sp}
\label{app:speed-regret}

\begin{customtheorem}{1}\textbf{(formal)}
Running \Cref{alg:linear-bandit} with budget $n\geq 16C^2 d^4\log^2 (A/\delta)/ \sigma^4_{\min}$ the resulting regret satisfies
\begin{align*}
    \cR_n &\leq \dfrac{1}{n^{3/2}} + O_{\kappa^2,H^2_U}\left(\dfrac{d^2 \sigma^2_{\max}\log (n)}{\sigma^2_{\min} n^{3/2}}\right)+\dfrac{2B^*C d^3 \sigma^2_{\max} \log (n)}{\sigma^2_{\min} n^{3/2}}  + \dfrac{d^2}{n^2}\Tr\left(\sum_{a,a'}\bw(a)\bw(a')^\top\right) + \dfrac{2A H_U^2\kappa^2}{n^2}\\ 
    &=  O_{\kappa^2,H^2_U}\left(\dfrac{B^* d^3 \sigma^2_{\max}\log (n)}{\sigma^2_{\min} n^{3/2}}\right).
\end{align*}
\end{customtheorem}

\begin{proof}
We follow the same steps as in \Cref{prop:loss-bandit-tracker}. Observe that $\tfrac{16C^2 d^4 \sigma^4_{\max} \log^2 (A/\delta)}{\sigma^4_{\min}} > \frac{2C d^2 \sigma^2_{\max} \log (A / \delta)}{\sigma^2_{\min}\Gamma}$. 
Hence for $\bz = \sum_a\bw(a)$ the loss function for $n\geq \frac{2C d^2 \sigma^2_{\max} \log (A / \delta)}{\sigma^2_{\min}\Gamma}$  as follows
\begin{align*}
    \bL_n(\pi,\wb,\wSigma_\Gamma) \coloneqq \E\left[\left(\bz^{\top}(\wtheta_{n-\Gamma} - \btheta_*)\right)^2\right] 
    &\overset{(a)}{\leq} \left(1+2C_{\Gamma, \sigma^2_{\min}}(\delta)\right)\bz^\top (\tX_{n-\Gamma}^{\top}\wSigma_{\Gamma}^{-1}\tX_{n-\Gamma})^{-1}\bz.
\end{align*}
where, $(a)$ follows from \eqref{eq:upper-bound-thm-1}.
Recall that the quantity of the samples collected (following $\wb^{*}$) after exploration is as follows:
\begin{align*}
   \left(\tX_{n-\Gamma}^{\top}\wSigma_{\Gamma}^{-1}\tX_{n-\Gamma}\right)^{-1} = \left(\sum_a\left\lceil(n-\Gamma)\wb^*(a)\wsigma^{-2}_\Gamma(a)\right\rceil\bw(a)\bw(a)^{\top}\right)^{-1} 
    = \dfrac{1}{n-\Gamma}\bA_{\wb^*,\wSigma_\Gamma}^{-1}.
\end{align*}
Hence we use the loss function
\begin{align*}
    \L'_{n-\Gamma}(\pi,\wb,\wSigma_\Gamma) \coloneqq 
    \left(1+2C_\Gamma(\delta)\right)\bz^\top (\tX_{n-\Gamma}^{\top}\wSigma_{\Gamma}^{-1}\tX_{n-\Gamma})^{-1}\bz = \frac{\left(1+2C_{\Gamma, \sigma^2_{\min}}(\delta)\right)}{n - \Gamma}\sum_{a,a'}\bw(a)^\top\bA_{\wb^*, \wSigma_\Gamma}^{-1}\bw(a').
\end{align*}
Also, recall that we define
\begin{align*}
    \L_n(\pi,\bb_*,\wSigma_\Gamma) = \dfrac{1}{n}\sum_{a,a'}\bw(a)^\top\bA_{\bb_*, \wSigma_\Gamma}^{-1}\bw(a').
\end{align*}
%
Then we can decompose the regret as follows:
\begin{align*}
    \cR_n 
    &=  \bL_n(\pi,\wb,\wSigma_\Gamma) - \L^*_n(\pi,\bb_*,\bSigma_*)\\
    &\leq \L'_{n-\Gamma}(\pi,\wb,\wSigma_\Gamma) -  \L'_{n-\Gamma}(\pi,\wb^*,\wSigma_\Gamma) + \L'_{n-\Gamma}(\pi,\wb^*,\wSigma_\Gamma) - \L^*_n(\pi,\bb_*,\bSigma_*)
    \\
    &= \underbrace{\L'_{n-\Gamma}(\pi,\wb,\wSigma_\Gamma) - \L'_{n-\Gamma}(\pi,\wb^*,\wSigma_\Gamma)}_{\textbf{Approximation error}} + \underbrace{\L'_{n-\Gamma}(\pi,\wb^*,\wSigma_\Gamma) -  \L_n(\pi,\bb_*,\wSigma_\Gamma)}_{\textbf{Comparing two diff loss}} + \underbrace{\L_n(\pi,\bb_*,\wSigma_\Gamma) - \L^*_n(\pi,\bb_*,\bSigma_*)}_{\textbf{Estimation error of $\bSigma_*$}}
    \end{align*}
First recall that the good variance event as follows:
\begin{align*}
    \xi^{var}_\delta(\Gamma) \coloneqq \left\{\forall a, \left|\bx(a)^{\top}\left(\wSigma_\Gamma - \bSigma_*\right)\bx(a)\right| < \frac{2C d^2 \sigma^2_{\max} \log (A / \delta)}{\Gamma}\right\}.
\end{align*}
%
%
Now first observe that $n\geq 16C^2 d^4 \sigma^4_{\max} \log^2 (A/\delta)/ \sigma^4_{\min}$ is a larger regime than $n\geq \frac{2C d^2 \sigma^2_{\max} \log (A / \delta)}{\sigma^2_{\min}\Gamma}$ required for \Cref{prop:loss-bandit-tracker}.
Then under the good variance event, following the same steps as \Cref{prop:loss-bandit-tracker} we can bound the approximation error setting $\delta = 1/n^3$ as follows
\begin{align*}
    \L'_{n-\Gamma}(\pi,\wb,\wSigma_\Gamma) - \L'_{n-\Gamma}(\pi,\wb^*,\wSigma_\Gamma) 
    &\overset{}{\leq}  
    O_{\kappa^2,H^2_U}\left(\dfrac{d^2 \sigma^2_{\max} \log(A/\delta)}{\sigma^2_{\min} n^{3/2}}\right)\indic{ \xi^{var}_\delta(\Gamma)} + \sum_{t=1}^n A H_U^2\kappa^2\Pb\left(\left(\xi^{var}_\delta(\Gamma)\right)^c\right)\\
    &\overset{}{\leq}  
    O_{\kappa^2,H^2_U}\left(\dfrac{d^2 \sigma^2_{\max} \log(A/\delta)}{\sigma^2_{\min} n^{3/2}}\right) + \dfrac{A H_U^2\kappa^2}{n^2}
\end{align*}
and the second part of comparing the two losses as
\begin{align*}
    \L'_{n-\Gamma}(\pi,\wb^*,\wSigma_\Gamma) -  \L_{n}(\pi,\bb_*,\wSigma_\Gamma) 
    &\overset{}{\leq} O_{\kappa^2,H^2_U}\left(\dfrac{d^2 \sigma^2_{\max} \log(A/\delta)}{\sigma^2_{\min} n^{3/2}}\right)\indic{ \xi^{var}_\delta(\Gamma)} + \sum_{t=1}^n A H_U^2\kappa^2\Pb\left(\left(\xi^{var}_\delta(\Gamma)\right)^c\right)\\
    &\leq O_{\kappa^2,H^2_U}\left(\dfrac{d^2 \sigma^2_{\max} \log(A/\delta)}{\sigma^2_{\min} n^{3/2}}\right) + \dfrac{A H_U^2\kappa^2}{n^2}
\end{align*}

We define the good estimation event as follows:
\begin{align*}
    \xi^{est}_\delta(\Gamma) \coloneqq \left\{\left|\sum_{a,a'} \bw(a)^{\top}\bA^{-1}_{\bb_*, \wSigma_\Gamma}\bw(a') - \sum_{a,a'} \bw(a)^{\top}\bA^{-1}_{\bb_*, \bSigma_*}\bw(a')\right| \leq \frac{2C B^* d^3 \sigma^4_{\max} \log (9 H^2_U / \delta)}{\sigma^4_{\min}\Gamma}\right\} 
\end{align*}
Under the good estimation event $\xi^{est}(\Gamma)$ and using \Cref{lemma:gradient-conc} we can show that the estimation error is given by

\begin{align*}
    &\L_n(\pi,\bb_*,\wSigma_\Gamma) - \L_n(\pi,\bb_*,\bSigma_*) \leq \left(\frac{1}{n} \sum_{a,a'}\bw(a)^\top\bA_{\bb_*, \wSigma_\Gamma}^{-1}\bw(a') - \frac{1}{n} \sum_{a,a'}\bw(a)^\top\bA_{\bb_*,\bSigma_*}^{-1}\bw(a')\right)\indic{\xi^{est}_\delta(\Gamma)} \\
    &\qquad +\left(\frac{1}{n} \sum_{a,a'}\bw(a)^\top\bA_{\bb_*, \wSigma_\Gamma}^{-1}\bw(a') - \frac{1}{n} \sum_{a,a'}\bw(a)^\top\bA_{\bb_*, \bSigma_*}^{-1}\bw(a')\right)\indic{\xi^{est}_\delta(\Gamma)^C}\\
    &= \left(\frac{1}{n} \sum_{a,a'}\bw(a)^\top\bA_{\bb_*, \wSigma_\Gamma}^{-1}\bw(a') - \frac{1}{n} \sum_{a,a'}\bw(a)^\top\bA_{\bb_*,\bSigma_*}^{-1}\bw(a')\right)\indic{\xi^{est}_\delta(\Gamma)} \\
    &\qquad +\frac{1}{n} \Tr\left(\left(\bA_{\bb_*, \wSigma_\Gamma}^{-1} - \bA_{\bb_*, \bSigma_*}^{-1}\right) \left(\sum_{a,a'}\bw(a)\bw(a')^\top\right)\right)\indic{\xi^{est}_\delta(\Gamma)^C}\\
    &\overset{(a)}{\leq} \dfrac{1}{n}2B^*\dfrac{Cd^3 \sigma^2_{\max} \log (1/\delta)}{\sigma^2_{\min} \Gamma}  + \dfrac{1}{n}\Tr\left(\bA_{\bb_*, \bSigma_*}^{-1}\right)\Tr\left(\bA_{\bb_*, \wSigma_\Gamma}^{-1}\right)\Tr\left(\sum_{a,a'}\bw(a)\bw(a')^\top\right)\delta\\
    &\overset{(b)}{\leq} \dfrac{1}{n}2B^*\dfrac{Cd^3 \sigma^2_{\max} \log (n)}{\sigma^2_{\min} \sqrt{n}}  + \dfrac{d^2}{n^2}\Tr\left(\sum_{a,a'}\bw(a)\bw(a')^\top\right)
    = \dfrac{2B^*C d^3 \sigma^2_{\max} \log (n)}{\sigma^2_{\min} n^{3/2}}  + \dfrac{d^2}{n^2}\Tr\left(\sum_{a,a'}\bw(a)\bw(a')^\top\right)
\end{align*}
where, $(a)$ follows from \Cref{lemma:gradient-conc}, $(b)$ follows as $\Gamma=\sqrt{n}$ and setting $\delta=\frac{1}{n^3}$. Combining everything we have the following regret as
\begin{align*}
    \cR_n &\leq \dfrac{1}{n^{3/2}} + O_{\kappa^2,H^2_U}\left(\dfrac{d^2 \sigma^2_{\max} \log (n)}{\sigma^2_{\min} n^{3/2}}\right)+\dfrac{2B^*Cd^3 \sigma^2_{\max} \log (n)}{\sigma^2_{\min} n^{3/2}}  + \dfrac{d^2}{n^2}\Tr\left(\sum_{a,a'}\bw(a)\bw(a')^\top\right) + \dfrac{2A H_U^2\kappa^2}{n^2}\\
    &=  O_{\kappa^2,H^2_U}\left(\dfrac{B^* d^3 \sigma^2_{\max} \log (n)}{\sigma^2_{\min} n^{3/2}}\right)
\end{align*}
where, $B^* = \left(\left\|\bA^{-1}_{\bb_*, \bSigma_*} \bw\right\|^2  \left\|\sum_{a=1}^A\bb_*(a)\bw(a)\bw(a)^{\top}H^2_U\right\| \left\|\left(\sum_{a=1}^A\dfrac{\bb_*(a)\bw(a)\bw(a)^{\top}}{\sigma^2(a) + \frac{2C d^3 \log (9 H^2_U / \delta)}{\sqrt{n}} }\right)^{-1}\bw\right\|\right)$.
The claim of the theorem follows.
\end{proof}

\begin{remark}\textbf{(Discussion on Sample regime and $B_*$):} Observe that combining \Cref{prop:loss-oracle} and \Cref{thm:regret-linear-bandit} we can have a loss of \sp\ that scales as
\begin{align*}
O_{\kappa^2,H^2_U, \sigma^2_{\max}, \sigma^2_{\min}}\left(\dfrac{d \log(n)}{n}\right) + O_{\kappa^2,H^2_U, \sigma^2_{\max}, \sigma^2_{\min}}\left(\dfrac{B^\star d^3 \log(n)}{n^{3/2}}\right) 
\end{align*}
which seems to contradict the loss bound in \Cref{prop:loss-bandit-tracker}.

However, this is not the case. Observe that the $B_*$ is a problem-dependent quantity that depends on a number of samples $n$. We define it 
as
$$
B_* := \left\|\bA_{\bb_*, \bSigma_*}^{-1} \bw\right\|^2\left\|\sum_{a=1}^A \bb_*(a) \bw(a) \bw(a)^{\top} H_U^2\right\|\left\|\left(\sum_{a=1}^A \frac{\bb_*(a) \bw(a) \bw(a)^{\top}}{\sigma^2(a)+\frac{2 C d^2 \log (A / \delta)}{\sqrt{n}}}\right)^{-1} \bw\right\|.
$$
However, there are two regimes when $n \leq \frac{16 C^2 d^4 \sigma_{\max }^4 \log ^2(A / \delta)}{\sigma_{\min }^4}$ then $B_* = \Theta(\sqrt{n})$ and for $n > \frac{16 C^2 d^4 \sigma_{\max }^4 \log ^2(A / \delta)}{\sigma^4_{\min }}$ then $B_* = o(\sqrt{n})$ .
In the first case when $n \leq \frac{16 C^2 d^4 \sigma_{\max }^4 \log ^2(A / \delta)}{\sigma^4_{\min }}$ with $B_* = \Theta(\sqrt{n})$ we have the loss that scales as
\begin{align*}
 O_{\kappa^2,H^2_U, \sigma^2_{\max}, \sigma^2_{\min}}\left(\dfrac{d \log(n)}{n}\right) + O_{\kappa^2,H^2_U, \sigma^2_{\max}, \sigma^2_{\min}}\left(\dfrac{B_* d^3 \log(n)}{n^{3/2}}\right) = O_{\kappa^2,H^2_U, \sigma^2_{\max}, \sigma^2_{\min}}\left(\dfrac{d^3\log(n)}{n}\right)
\end{align*}
This is the regime of \Cref{prop:loss-bandit-tracker} as it holds for all $n \geq \frac{2 C d^2 \sigma^2_{\max} \log (A / \delta)}{\sigma_{\min }^2 \Gamma}$ for $\Gamma \geq 1$. Note that $\frac{2 C d^2 \sigma^2_{\max} \log (A / \delta)}{\sigma_{\min }^2 \Gamma}$ is less than $\frac{16 C^2 d^4 \sigma^4_{\max} \log ^2(A / \delta)}{\sigma_{\min }^4}$.

In the second case when $n > \frac{16 C^2 d^4 \sigma^4_{\max}\log ^2(A / \delta)}{\sigma_{\min }^4}$ with $B_* =o(\sqrt{n})$ we have a tighter bound as the first term dominates and we have the loss scaling as
     \begin{align*}
       O_{\kappa^2,H^2_U, \sigma^2_{\max}, \sigma^2_{\min}}\left(\dfrac{d \log(n)}{n}\right) + O_{\kappa^2,H^2_U, \sigma^2_{\max}, \sigma^2_{\min}}\left(\dfrac{B^{\star} d^3 \log(n)}{n^{3/2}}\right) = O_{\kappa^2,H^2_U, \sigma^2_{\max}, \sigma^2_{\min}}\left(\dfrac{d\log(n)}{n}\right)
      \end{align*}
Intuitively this is a larger sample regime where the \sp\ has a good estimation of $\mathbf{\bSigma_*}$ and the design matrix estimation has also concentrated.
Combining both the regimes we can show that for $n \geq \frac{2 C d^2 \log (A / \delta)}{\sigma\_{\min }^2 \Gamma}$ the loss of \sp\ scales by
\begin{align*}
\max\left\lbrace O_{\kappa^2,H^2_U, \sigma^2_{\max}, \sigma^2_{\min}}\left(\dfrac{d\log(n)}{n}\right), O_{\kappa^2,H^2_U, \sigma^2_{\max}, \sigma^2_{\min}}\left(\dfrac{d^3\log(n)}{n}\right) \right\rbrace = O_{\kappa^2,H^2_U, \sigma^2_{\max}, \sigma^2_{\min}}\left(\dfrac{d^3\log(n)}{n}\right)
\end{align*}
which is the bound of \Cref{prop:loss-bandit-tracker}. So in summary \Cref{prop:loss-bandit-tracker} is a more general bound for a larger regime size than \Cref{thm:regret-linear-bandit} and does not contradict the theorem statement.
\end{remark}

\section{Regret Lower Bound}
\label{app:lower-regret-bound}
\begin{customtheorem}{2}\textbf{(Lower Bound)}
Let $|\bTheta| = 2^d$ and $\btheta_*\in\bTheta$. Then any $\delta$-PAC policy $\pi$ following the design $\bb\in\triangle(\A)$ 
satisfies $\cR'_{n} = \L_{n}(\pi,\wb,\bSigma_*) - \L_{n}(\pi,\bb^*,\bSigma_*) \geq \Omega\left(\dfrac{d^2\lambda_d(\bV)\log({n})}{{n}^{3/2}}\right)$ for the environment 
in \eqref{eq:minimax-environment}. 
\end{customtheorem}

\begin{proof}
\textbf{Step 1 (Define Environment):} We define an environment model $B_j$ consisting of $A$ actions and $J$ hypotheses with true hypothesis $\btheta_* = \btheta_j$ ($j$-th column) 
as follows:
\begin{align}
\begin{matrix} 
    \btheta &= & \btheta_1 &\btheta_2  &  \btheta_3 & \ldots & 
    \btheta_J \\\hline
    \mu_1(\btheta) &=  & \beta & \beta\!-\!\frac{\beta}{J} & \beta\!-\!\frac{2\beta}{J} & \ldots & \beta\!-\!\frac{(J-1)\beta}{J}\\
    \mu_2(\btheta) &=   & \iota_{21} & \iota_{22} & \iota_{23} & \ldots & \iota_{2J}\\
    &\vdots & &&\vdots\\
    \mu_A(\btheta) &=   & \iota_{A1} & \iota_{A2} & \iota_{A3} & \ldots & \iota_{AJ}
\end{matrix}
\label{eq:minimax-environment}
\end{align}
where, each $\iota_{ij}$ is distinct and satisfies $\iota_{ij} < \beta/4J$.
\(\btheta_1\) is the optimal hypothesis in \(B_1\), \(\btheta_2\) is the optimal hypothesis in \(B_2\) and so on such that for each $B_j$ and $j\in[J]$ we have column $j$ as the optimal hypothesis.

Finally, assume that $\bSigma = \btheta\btheta^\top$ is a rank one matrix. To distinguish between the covariance matrix between two distributions we denote $\bSigma_{\btheta} = \btheta\btheta^\top$. 
Therefore we have that $\sigma^2_i(\btheta) = \bx_i^\top\bSigma_{\btheta}\bx_i = (\bx_i^\top \btheta)^2 = \mu^2_i(\btheta)$. 
%
%
%
Hence for any algorithm, identifying the co-variance matrix $\bSigma_{\btheta_*}$ is the same as identifying the $\btheta_*$. Also assume that $\pi(a) = \frac{1}{A}$. Hence each action is equally weighted by the target policy. 

This is a general hypothesis testing setting where the functions $\mu_a(\btheta)$ can be thought of as linear functions of $\btheta$ such that $\mu_a(\btheta) = \bx(a)^\top\btheta$. Assume that $0 < \mu_a(\btheta) \leq 1$, and $\log(\mu_a(\btheta)/\mu_a(\btheta')) > 1/4$. 


Now observe that between any two hypothesis $\btheta$ and $\btheta'$ we have the following
\begin{align}
    \KL\bigg(\N(\mu_{i}(\btheta), \bx_i^\top\bSigma_{\btheta}\bx_i))\big|\big|\N(\mu_{i}(\btheta'), &\bx_i^\top\bSigma_{\btheta'}\bx_i))\bigg) = 2\log(\dfrac{\sigma_{i}(\btheta')}{\sigma_{i}(\btheta)}) + \dfrac{\sigma^2_{i}(\btheta) + (\mu_{i}(\btheta) - \mu_{i}(\btheta'))^2}{2\sigma^2_{i}(\btheta')} - \dfrac{1}{2}\nonumber\\
    &\overset{(a)}{=} 2\log(\dfrac{\mu_{i}(\btheta')}{\mu_{i}(\btheta)}) + \dfrac{\mu^2_{i}(\btheta) + (\mu_{i}(\btheta) - \mu_{i}(\btheta'))^2}{2\mu^2_{i}(\btheta')} - \dfrac{1}{2} \overset{(a)}{\geq}  \dfrac{ (\mu_{i}(\btheta) - \mu_{i}(\btheta'))^2}{8} \label{eq:oracle-proof-1}
\end{align}
where, $(a)$ follows from the condition that $0 < \mu_a(\btheta) \leq 1$, and $\log(\mu_a(\btheta)/\mu_a(\btheta')) > 1/4$.

\textbf{Step 2 (Minimum samples to verify $\btheta_*$):} Let, \(\Lambda_1\) be the set of alternate  models having a different optimal hypothesis than \(\btheta^{*} = \btheta_1\) such that all models having different optimal hypothesis than $\btheta_1$ such as  $B_2, B_3, \ldots B_J$ are in $\Lambda_1$. Let $\tau_\delta$ be the stopping time for any $\delta$-PAC policy $\bb$. That is $\tau_\delta$ is the time that any algorithm stops and outputs its estimate $\wtheta_{\tau_\delta}$. Let $T_t(a)$ denote the number of times the action $a$ has been sampled till round $t$.
Let $\wtheta_{\tau_\delta}$ be the predicted optimal hypothesis at round $\tau_\delta$. We first consider the  model $B_1$. Define the event \(\xi=\{\wtheta_{\tau_\delta} \neq \btheta_*\}\) as the error event in model $B_1$. Let the event \(\xi'=\{\wtheta_{\tau_\delta} \neq \btheta^{'*}\}\) be the corresponding error event in model $B_2$. Note that $\xi^{\complement} \subset \xi'$. Now since $\bb$ is $\delta$-PAC policy we have $\Pb_{B_1,\bb}(\xi) \leq \delta$ and $\Pb_{B_2,\bb}(\xi^{\complement}) \leq \delta$. Hence we can show that,

\begin{align}
2 \delta \geq \Pb_{B_1, \bb}(\xi)+ \Pb_{B_2, \bb}(\xi^{\complement})
&\overset{(a)}{\geq} \frac{1}{2} \exp\left(-\KL\left(P_{B_1, \bb} || P_{B_2, \bb}\right)\right)\nonumber\\
 \KL\left(P_{B_1, \bb} || P_{B_2, \bb}\right) & \geq \log\left(\dfrac{1}{4\delta}\right)\nonumber\\
 \dfrac{1}{8}\sum_{i=1}^{A} \E_{B_1, \bb}[T_{\tau_\delta}(i)]\cdot\left(\mu_{i}(\btheta_*)^{}-\mu_{i}( \btheta^{'}_*)_{}\right)^2&\overset{(b)}{\geq} \log\left(\dfrac{1}{4\delta}\right)\nonumber\\
 \dfrac{1}{8}\left(\beta - \beta +\frac{\beta}{J}\right)^2\E_{B_1, \bb}[T_{\tau_\delta}(1)] + \dfrac{1}{8}\sum_{i=2}^A(\iota_{i1} - \iota_{i2})^2\E_{B_1, \bb}[T_{\tau_\delta}(i)] &\overset{(c)}{\geq} \log\left(\dfrac{1}{4\delta}\right)\nonumber
  \end{align}
 \begin{align}
 \dfrac{1}{8}\left(\dfrac{1}{J}\right)^2\beta^2\E_{B_1, \bb}[T_{\tau_\delta}(1)] + \dfrac{1}{8}\sum_{i=2}^A(\iota_{i1} - \iota_{i2})^2\E_{B_1, \bb}[T_{\tau_\delta}(i)] &\overset{}{\geq} \log\left(\dfrac{1}{4\delta}\right)\nonumber\\
 \dfrac{1}{8}\left(\dfrac{1}{J}\right)^2\beta^2\E_{B_1, \bb}[T_{\tau_\delta}(1)] + \dfrac{1}{8}\sum_{i=2}^A\frac{\beta^2}{4J^2}\E_{B_1, \bb}[T_{\tau_\delta}(i)] &\overset{(d)}{\geq} \log\left(\dfrac{1}{4\delta}\right)\label{eq:minimax-1}
\end{align}
where, $(a)$ follows from \Cref{lemma:tsybakov}, $(b)$ follows from \Cref{lemma:divergence-decomp},  $(c)$ follows from the construction of the bandit environments and \eqref{eq:oracle-proof-1}, and $(d)$ follows as  $(\iota_{ij} - \iota_{ij'})^2 \leq \frac{\beta^2}{4J^2}$ for any $i$-th action and $j$-th hypothesis.

Now, we consider the alternate model $B_3$. Again define the event \(\xi=\{\wtheta_{\tau_\delta} \neq \btheta_*\}\) as the error event in model $B_1$ and the event  \(\xi'=\{\wtheta_{\tau_\delta} \neq \btheta^{''}_*\}\) be the corresponding error event in model $B_3$. Note that $\xi^{\complement} \subset \xi'$. Now since $\bb$ is $\delta$-PAC policy we have $\Pb_{B_1,\bb}(\xi) \leq \delta$ and $\Pb_{B_3,\bb}(\xi^{\complement}) \leq \delta$. Following the same way as before we can show that,
\begin{align}
 \dfrac{1}{8}\left(\dfrac{2}{J}\right)^2\beta^2\E_{B_3, \bb}[T_{\tau_\delta}(1)] + \dfrac{1}{8}\sum_{i=2}^A\frac{\beta^2}{4J^2}\E_{B_3, \bb}[T_{\tau_\delta}(i)] &\overset{(d)}{\geq} \log\left(\dfrac{1}{4\delta}\right)\label{eq:minimax-2}.
\end{align}
Similarly, we get the equations for all the other $(J-2)$ alternate models in $\Lambda_1$. Now consider an optimization problem (ignoring the constant factor of $\frac{1}{8}$ across all the constraints) 
\begin{align*}
    &\min_{t_i : i \in [A]} \sum t_i\\
    s.t. \quad & \left( \frac{1}{J}\right)^2\beta^2 t_1 + \frac{\beta^2}{4J^2} \sum_{i=2}^A t_i \geq \log(1/4\delta)\\
    &\left(\frac{2}{J}\right)^2\beta^2 t_1 + \frac{\beta^2}{4J^2} \sum_{i=2}^A t_i \geq \log(1/4\delta)\\
    &\vdots\\
    &\left(\frac{J-1}{J}\right)^2\beta^2 t_1 + \frac{\beta^2}{4J^2} \sum_{i=2}^A t_i \geq \log(1/4\delta)\\
    &t_i \geq 0,  \forall i\in [A]
\end{align*}
where the optimization variables are $t_i$. 
It can be seen that the optimum objective value is $J^2\beta^{-2} \log(1/4\delta)$. Interpreting $t_i = \mathbb{E}_{B_1,\bb}[T_{\tau_{\delta}}(i)]$ for all $i$, we get that $\E_{B_1,\bb}[\tau_\delta] = \sum_{i}t_i = t_1 \geq J^2\beta^{-2} \log(1/4\delta)$ which gives us the required lower bound to the number of pulls of action $1$. Observe that the optimum objective value is reached by substituting $t_1 = J^2\beta^{-2} \log(1/4\delta)$ and $t_2 = \ldots = t_A = 0$. 
It follows that for verifying any hypothesis $\btheta_j\neq \btheta_*$ the verification proportion is given by $\bb_{\btheta_j} = (1,\underbrace{0,0,\ldots,0}_{\text{(A-1) zeros}})$. Observe setting $\beta = J\sqrt{\log(1/4\delta)/n}$ recovers $\tau_\delta = n$ which implies that a budget of $n$ samples is required for verifying hypothesis $\btheta_j = \btheta_*$. For the remaining steps we take $\beta = J\sqrt{\log(1/4\delta)/n}$. 

\textbf{Step 3 (Lower Bounding Regret):} Then we can show that the MSE of any hypothesis $\btheta_j = \btheta_*$
\begin{align*}
    \E_\D\left[\left(\sum_a \pi(a)\bx(a)^\top(\btheta_j - \wtheta_n)\right)^2\right] = \dfrac{1}{n}\sum_{a,a'}\bw(a)\bA^{-1}_{\bb_{\btheta_j},\bSigma_{\btheta_*}}\bw(a') = \dfrac{1}{n}\Tr\bigg(\bA^{-1}_{\bb_{\btheta_j},\bSigma_{\btheta_*}} \underbrace{\sum_{a,a'}\bw(a)\bw(a')}_{\bV}\bigg)
\end{align*}
where, $\bb_{\btheta_j}(a)$ is the number of samples allocated to action $a$. 
%
First we will bound the loss of the oracle for this environment given by $\L_{n}(\pi,\bb,\bSigma_{\btheta_*}) = \frac{1}{n}\Tr(\bA^{-1}_{\bb_{\btheta_j},\bSigma_{\btheta_*}}\bV)$. Note that the oracle has access to the $\bSigma_{\btheta_*}$, so it only need to verify whether $\btheta_j = \btheta_*$ by following $\bb_{\btheta_j}$. Then we have that
\begin{align*}
    &\bA^{}_{\bb_{\btheta_j},\bSigma_{\btheta_*}} = \sum_a \bb_{\btheta_j}(a)\dfrac{\bx(a)\bx(a)^\top}{\sigma^2(a)} = \dfrac{\bw(1)\bw(1)^\top}{(\bx(1)^\top\btheta_j)^2} = \dfrac{\bw(1)\bw(1)^\top}{(\beta - \frac{j\beta}{J})^2} \\
    \implies & \Tr(\bA^{-1}_{\bb_{\btheta_j},\bSigma_{\btheta_*}}) = \dfrac{(\beta - \frac{j\beta}{J})^2}{\Tr(\bw(1)\bw(1)^\top)}
\end{align*}
Now we will bound the loss of the algorithm that uses $\wSigma_\beta$ to estimate $\wb$. It then collects the $\D$ and uses it to estimate $\btheta_*$ following the WLS estimation using $\bSigma_{\btheta_*}$.

Denote the number of times the algorithm samples each action $i$ be $T'_{n}(i)$. Let the algorithm allocate $T'_{n}(1) = J^2\beta^{-2} \log(1/4\delta) - d$ samples to action $1$ and to any other action $i'$ it allocates $T'_{n}(i') =  d$ samples such that $d\geq 1$. WLOG let $i'= 2$. Finally let $T'_{n}(3) = \ldots = T'_{n}(A) = 0$. Hence the optimal action $1$ is under-allocated and the sub-optimal action $2$ is over-allocated. The loss of such an algorithm now is given by
\begin{align*}
    \L_{n}(\pi,\wb,\bSigma_{\btheta_*}) = \frac{1}{n}\Tr(\bA^{-1}_{\wb_{},\bSigma_{\btheta_*}}\bV).
\end{align*}
%
%
%
%
%
%
Hence it follows by setting $\delta = 1/(n J)$ that
\begin{align*}
    \bA^{}_{\wb_{},\bSigma_{\btheta_*}} = \dfrac{1}{n}\sum_a n\wb_{}(a)\dfrac{\bx(a)\bx(a)^\top}{\sigma^2(a)} &= \dfrac{1}{n}\sum_a T'_n(a)\dfrac{\bx(a)\bx(a)^\top}{\sigma^2(a)} \\
    &= \dfrac{1}{n} T'_{n}(1)\dfrac{\bx(1)\bx(1)^\top}{\sigma^2(1)} + \underbrace{\dfrac{1}{n} T'_{n}(2)\dfrac{\bx(2)\bx(2)^\top}{\sigma^2(2)}}_{\geq 0}\\
    &\geq \dfrac{1}{n} T'_{n}(1)\dfrac{\bw(1)\bw(1)^\top}{(\bx(1)^\top\btheta_j)^2}\\
    &\overset{(a)}{=} \dfrac{J^2\beta^{-2} \log(nJ) - d}{n} \dfrac{\bw(1)\bw(1)^\top}{(\beta - \frac{j\beta}{J})^2} 
\end{align*}
where, $(a)$ follows by substituting the value of $T'_{n}$.
Then we have that
\begin{align*}
    \Tr(\bA^{-1}_{\wb_{},\bSigma_{\btheta_*}}) \geq \dfrac{n}{J^2\beta^{-2} \log(n J) - d} \dfrac{(\beta - \frac{j\beta}{J})^2}{\Tr(\bw(1)\bw(1)^\top)} &= \dfrac{n}{J^2\beta^{-2} (\log(n J) - \frac{d}{J^2\beta^{-2}})} \dfrac{(\beta - \frac{j\beta}{J})^2}{\Tr(\bw(1)\bw(1)^\top)}\\
    &\overset{(a)}{\geq} \dfrac{\beta^2\log(n J) + \frac{d}{J^2\beta^{-2}}}{J^2}\dfrac{(\beta - \frac{j\beta}{J})^2}{\Tr(\bw(1)\bw(1)^\top)}\\
    &\overset{}{\geq} \dfrac{\beta^2\log(n J)}{J^2}\dfrac{(\beta - \frac{j\beta}{J})^2}{\Tr(\bw(1)\bw(1)^\top)}
\end{align*}
where, $(a)$ follows as for $d\geq 1$ we have that $$n - (\log (n J))^2 \geq - \frac{d^2}{(J^2\beta^{-2})^2} \implies (\log(n J) - \frac{d}{J^2\beta^{-2}})^{-1} \geq \log(n J) + \frac{d}{J^2\beta^{-2}}.$$ 

\textbf{Step 4 (Lower Bound regret):}
Hence we have the regret for verifying any hypothesis $\btheta_j=\btheta_*$ as follows:
\begin{align*}
    \cR'_{n} &= \L_{n}(\pi,\wb,\bSigma_{\btheta_*}) - \L_{n}(\pi,\bb^*,\bSigma_{\btheta_*})\\ 
    &\geq \dfrac{1}{{n}}\Tr\bigg(\bA^{-1}_{\wb_{},\bSigma_{\btheta_*}} \bV\bigg) - \dfrac{1}{{n}}\Tr\bigg(\bA^{-1}_{\bb_{\btheta_j},\bSigma_{\btheta_*}} \bV\bigg) = \dfrac{1}{{n}}\Tr\bigg(\bigg(\bA^{-1}_{\wb_{},\bSigma_{\btheta_*}}-\bA^{-1}_{\bb_{\btheta_j},\bSigma_{\btheta_*}} \bigg)\bV\bigg)\\
    &\geq \dfrac{\lambda_d(\bV)}{{n}}\Tr\bigg(\bA^{-1}_{\wb_{},\bSigma_{\btheta_*}}-\bA^{-1}_{\bb_{\btheta_j},\bSigma_{\btheta_*}} \bigg)\\
    &= \dfrac{\lambda_d(\bV)}{{n}}\left[\Tr\bigg(\bA^{-1}_{\wb_{},\bSigma_{\btheta_*}}\bigg)-\Tr\bigg(\bA^{-1}_{\bb_{\btheta_j},\bSigma_{\btheta_*}} \bigg)\right]\\
    &=  \dfrac{\lambda_d(\bV)}{{n}}\left[\dfrac{\beta^2\log({n} J)}{J^2}\dfrac{(\beta - \frac{j\beta}{J})^2}{\Tr(\bw(1)\bw(1)^\top)}  -  \dfrac{(\beta - \frac{j\beta}{J})^2}{\Tr(\bw(1)\bw(1)^\top)}\right]\\
    &= \dfrac{\lambda_d(\bV)\beta^2(\beta - \frac{j\beta}{J})^2}{{n} \Tr(\bw(1)\bw(1)^\top)}\left[\dfrac{\log({n}J)}{J^2} - 1\right]
\end{align*}
\begin{align*}
    &\overset{(a)}{\geq} \dfrac{\lambda_d(\bV)\beta^2(\beta - \frac{j\beta}{J})^2}{{n} \Tr(\bw(1)\bw(1)^\top)}\left[\dfrac{\log({n}J)}{2J^2}\right]\\
    &\overset{(b)}{\geq} \dfrac{d\lambda_d(\bV)\beta^2}{{n}^{3/2} \Tr(\bw(1)\bw(1)^\top)}\left[\dfrac{\log({n}J)}{2J^2}\right] \\
    &\overset{(c)}{\geq} \dfrac{d^2\lambda_d(\bV)\beta^2}{{n}^{3/2} \Tr(\bw(1)\bw(1)^\top)}\log(2{n}) \\
    & = \Omega\left(\dfrac{d^2\lambda_d(\bV)\log({n})}{{n}^{3/2}}\right)
\end{align*}
where, $(a)$ follows as $\frac{\log(n J)}{J^2} - 1 \geq \frac{\log(n J)}{2J^2}$, $(b)$ follows as gap $(\beta - \frac{j\beta}{J})^2\geq \frac{d}{\sqrt{n}}$ for any $\btheta_j$, and $(c)$ follows by substituting $|\bTheta| = J = 2^d$. 
\end{proof}

\begin{lemma}
\label{lemma:divergence-decomp}
\textbf{(Restatement of Lemma 15.1 in \citet{lattimore2018bandit}, Divergence Decomposition)}
Let $B$ and $B'$ be two bandit models having different optimal hypothesis $\btheta_*$ and $\btheta^{'*}$ respectively. Fix some policy \(\pi\) and round $n$. Let \(\Pb_{B, \pi}\) and \(\Pb_{B', \pi}\) be two probability measures induced by some $n$-round interaction of \(\pi\) with \(B\) and \(\pi\) with
\(B'\) respectively. Then
\begin{align*}
    \KL\left(\Pb_{B, \pi}|| \Pb_{B', \pi}\right)=  \sum_{i=1}^{A} \E_{B, \pi}[T_n(i)]\cdot\KL(\N(\mu_{i}(\btheta),1)||\N(\mu_{i}(\btheta_*),1)) 
\end{align*}
where, $\KL\left(.||.\right)$ denotes the Kullback-Leibler divergence between two probability measures and $T_n(i)$ denotes the number of times action $i$ has been sampled till round $n$.
\end{lemma}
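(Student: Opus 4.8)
The plan is to prove this KL identity by the standard likelihood-factorization-plus-tower-property argument for sequential bandit interactions, essentially reproducing the proof of Lemma 15.1 in \citet{lattimore2018bandit}. First I would set up the canonical probability space whose outcomes are the interaction sequences $(A_1,Y_1,\ldots,A_n,Y_n)$ of action--reward pairs, and write the density of $\Pb_{B,\pi}$ against a common dominating measure as a product over the $n$ rounds: at round $t$ the policy contributes a factor $\pi_t(A_t \mid A_1,Y_1,\ldots,A_{t-1},Y_{t-1})$ depending only on the observed history, while the environment contributes the reward density $p^B_{A_t}(Y_t)$ of the pulled arm under $B$ (a Gaussian with mean $\mu_{A_t}(\btheta)$). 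The density of $\Pb_{B',\pi}$ has the \emph{same} policy factors but environment factors $p^{B'}_{A_t}(Y_t)$ (mean $\mu_{A_t}(\btheta_*)$). The structural fact I would stress is that the policy is identical in both measures and sees the environment only through the data, so every policy factor cancels in the log-likelihood ratio.

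With this factorization I would expand the divergence as an expectation of the log-likelihood ratio under $\Pb_{B,\pi}$,
\begin{align*}
\KL\left(\Pb_{B,\pi} \,\|\, \Pb_{B',\pi}\right) = \E_{B,\pi}\!\left[\sum_{t=1}^n \log\frac{p^B_{A_t}(Y_t)}{p^{B'}_{A_t}(Y_t)}\right],
\end{align*}
and then apply the tower property round by round. Conditioning on the history $H_{t-1}=(A_1,Y_1,\ldots,A_{t-1},Y_{t-1})$ together with the current action $A_t$, the reward $Y_t$ is drawn from $p^B_{A_t}$, so the conditional expectation of the $t$-th summand is exactly $\KL(\mu_{A_t}(\btheta)\|\mu_{A_t}(\btheta_*))$. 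Taking the outer expectation gives $\E_{B,\pi}\big[\sum_{t=1}^n \KL(\mu_{A_t}(\btheta)\|\mu_{A_t}(\btheta_*))\big]$.

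Finally I would reorganize the sum over rounds into a sum over arms. Writing the $t$-th term as $\sum_{i=1}^A \indic{A_t=i}\,\KL(\mu_i(\btheta)\|\mu_i(\btheta_*))$ and interchanging the two finite sums, the arm-$i$ term collects the factor $\E_{B,\pi}\big[\sum_{t=1}^n \indic{A_t=i}\big]=\E_{B,\pi}[T_n(i)]$, which yields the claimed decomposition $\sum_{i=1}^A \E_{B,\pi}[T_n(i)]\,\KL(\mu_i(\btheta)\|\mu_i(\btheta_*))$.

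The main obstacle is not the algebra but the measure-theoretic bookkeeping behind the factorization: I must verify that the interaction measures genuinely factor as stated --- which hinges on the policy being non-anticipating and shared across $B$ and $B'$ --- and that each arm law under $B$ is absolutely continuous with respect to the corresponding law under $B'$, so the Radon--Nikodym derivatives and their logarithms are well defined (if this fails the identity holds trivially with both sides $+\infty$). In our construction the per-arm rewards are Gaussian with finite means and strictly positive variances, hence mutually absolutely continuous, and the per-round terms reduce to the Gaussian KL already bounded in \eqref{eq:oracle-proof-1}; the decomposition therefore applies to the environment of \eqref{eq:minimax-environment} without further conditions.
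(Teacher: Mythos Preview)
Your proposal is correct and is exactly the standard likelihood-factorization plus tower-property argument from \citet{lattimore2018bandit}; the paper itself does not give a proof but simply restates the lemma and cites that reference, so your write-up is precisely what the paper defers to.
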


\begin{lemma}
\label{lemma:tsybakov}
\textbf{(Restatement of Lemma 2.6 in \citet{tsybakov2008introduction})}
Let \(\mathbb{P}, \mathbb{Q}\) be two probability measures on the same measurable space \((\Omega, \F)\) and let \(\xi \subset \F \) be any arbitrary event then
\[
\mathbb{P}(\xi)+ \mathbb{Q}\left(\xi^{\complement}\right) \geqslant \frac{1}{2} \exp\left(-\KL(\mathbb{P}|| \mathbb{Q})\right)
\]
where \(\xi^{\complement}\) denotes the complement of event \(\xi\) and \(\KL(\mathbb{P}||\mathbb{Q})\) denotes the Kullback-Leibler divergence between \(\mathbb{P}\) and \(\mathbb{Q}\).
\end{lemma}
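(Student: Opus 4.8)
The plan is to establish this lower bound through a change-of-measure (information-theoretic) argument over a carefully engineered family of hard instances. First I would fix the environment of \eqref{eq:minimax-environment}: a collection of $J = 2^d$ bandit models $B_1,\ldots,B_J$ whose optimal hypotheses $\btheta_1,\ldots,\btheta_J \in \bTheta$ are all distinct, with the first action's means spaced as $\beta - (j-1)\beta/J$ and every other action mean pinned below $\beta/4J$, so that telling two models apart essentially forces the learner to resolve action~$1$. The rank-one assumption $\bSigma_* = \btheta_*\btheta_*^\top$ is the conceptual crux: it makes $\sigma^2(a) = (\bx(a)^\top\btheta_*)^2$, so identifying the covariance is equivalent to identifying $\btheta_*$ and an agnostic learner cannot sidestep the hypothesis-identification cost.

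Second, I would lower bound the sampling effort of any $\delta$-PAC policy. For each pair $(B_1,B_j)$ I would combine the Bretagnolle--Huber/Tsybakov inequality (\Cref{lemma:tsybakov}) with the divergence decomposition (\Cref{lemma:divergence-decomp}) to get $\sum_i \E_{B_1,\bb}[T_{\tau_\delta}(i)]\,\KL(\mu_i(\btheta_1)\|\mu_i(\btheta_j)) \geq \log(1/4\delta)$. The subtlety here is that the noise variance itself depends on the parameter, so each per-action Gaussian KL contains both a mean shift and a variance shift; I would tame this by a direct computation showing that, under the regularity conditions $0 < \mu_a(\btheta)\leq 1$ and $\log(\mu_a(\btheta)/\mu_a(\btheta')) > 1/4$, the KL is bounded below by $(\mu_i(\btheta)-\mu_i(\btheta'))^2/8$. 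Collecting one constraint per alternative model yields a system of linear inequalities in the expected visit counts whose minimal-budget solution (read off from the associated linear program) puts essentially all mass on action~$1$, giving $\E_{B_1,\bb}[T_{\tau_\delta}(1)] \geq J^2\beta^{-2}\log(1/4\delta)$; calibrating $\beta = J\sqrt{\log(1/4\delta)/n}$ then makes the verification budget exactly $n$.

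Third, I would convert this allocation bound into a regret bound by contrasting the oracle loss with the loss of any agnostic allocation. By \Cref{prop:linear-bandit} both losses have the form $\tfrac1n\Tr(\bA^{-1}_{\bb,\bSigma_*}\bV)$, and since the oracle's verification proportion $\bb_{\btheta_j}$ concentrates on action~$1$, the quantity $\Tr(\bA^{-1}_{\bb_{\btheta_j},\bSigma_*})$ collapses to the explicit scalar $(\beta - j\beta/J)^2/\Tr(\bw(1)\bw(1)^\top)$. Any agnostic learner, however, must probe a second suboptimal action to resolve the covariance, so it underallocates action~$1$ by $\Theta(d)$ samples and overallocates another action; propagating this through the matrix inverse inflates the trace by a multiplicative factor of roughly $\beta^2\log(nJ)/J^2$, which produces the gap $\Tr(\bA^{-1}_{\wb,\bSigma_*}) - \Tr(\bA^{-1}_{\bb_{\btheta_j},\bSigma_*})$.

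Finally, I would assemble the pieces using the eigenvalue inequality $\Tr(M\bV) \geq \lambda_d(\bV)\Tr(M)$ for PSD $M$ (as in the proof of \Cref{corollary:kiefer}) to write $\cR'_n \geq \tfrac{\lambda_d(\bV)}{n}\bigl[\Tr(\bA^{-1}_{\wb,\bSigma_*}) - \Tr(\bA^{-1}_{\bb_{\btheta_j},\bSigma_*})\bigr]$, then substitute the trace gap together with the gap bound $(\beta - j\beta/J)^2 \geq d/\sqrt{n}$ and $J = 2^d$ (so $\log J = d$ and the calibrated $\beta^2$ contributes a further factor of $d$ through $1/J^2$) to collapse everything to $\Omega\!\left(d^2\lambda_d(\bV)\log(n)/n^{3/2}\right)$. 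I expect the main obstacle to be the third step: rigorously quantifying how a forced misallocation of $\Theta(d)$ samples away from the single informative direction inflates the loss by precisely the stated factor, since this demands careful control of the near-singular behavior of $\bA_{\bb,\bSigma_*}$ when almost all the design mass concentrates on one rank-one direction.
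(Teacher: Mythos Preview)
Your proposal addresses the wrong statement. The lemma you were asked to prove is the Bretagnolle--Huber/Tsybakov inequality
\[
\mathbb{P}(\xi)+\mathbb{Q}(\xi^{\complement}) \;\geq\; \tfrac{1}{2}\exp\bigl(-\KL(\mathbb{P}\|\mathbb{Q})\bigr),
\]
a self-contained information-theoretic fact about two arbitrary probability measures. In the paper this lemma is not proved at all: it is simply quoted from \citet{tsybakov2008introduction} and then \emph{used} as a tool inside the proof of Theorem~2 (the regret lower bound). What you have written is a proof sketch for Theorem~2 itself --- the change-of-measure argument over the environment \eqref{eq:minimax-environment}, the divergence decomposition, the linear program for visit counts, and the final trace comparison --- and indeed your outline invokes \Cref{lemma:tsybakov} as a black box in its second step.

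A correct proof of the stated lemma has nothing to do with bandits, $\delta$-PAC policies, design matrices, or the construction in \eqref{eq:minimax-environment}. The standard argument runs through total variation: one first shows $\mathbb{P}(\xi)+\mathbb{Q}(\xi^{\complement}) \geq 1-\mathrm{TV}(\mathbb{P},\mathbb{Q})$ (immediate from the definition of total variation), and then applies the inequality $1-\mathrm{TV}(\mathbb{P},\mathbb{Q}) \geq \tfrac{1}{2}\exp(-\KL(\mathbb{P}\|\mathbb{Q}))$, which is the content of Lemma~2.6 in Tsybakov and is proved there by a Cauchy--Schwarz / likelihood-ratio argument. None of the machinery in your proposal is relevant to establishing this inequality.
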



\textbf{Environment $\mathcal{E}$:} Consider the environment $\mathcal{E}$ which consist of $3$ actions in $\R^2$ such that $\bx(1)= [1,0]$ is along $x$-axis, $\bx(2) = [0,1]$ is along $y$-axis and $\bx(3) = [1/\sqrt{2}, 1/\sqrt{2}]$. Let $\btheta_* = [1,0]$ and so the optimal action is action $1$. 
    Let the target policy $\pi = [0.9, 0.1, 0.0]$. Finally, let the variances be $\sigma^2(1) = 5/100$, $\sigma^2(2) = 1.0$ and $\sigma^2(3) =  5/100$.

\begin{customproposition}{8}\textbf{(\onp\ regret)} 
    Let the \onp\ algorithm have access to the variance in environment $\mathcal{E}$. Then the regret of \onp\ scales as $O\left(\dfrac{\lambda_1(\bV)}{n}\right)$.
\end{customproposition}

\begin{proof}
    Recall that in $\mathcal{E}$, there are $3$ actions in $\R^2$ such that $\bx(1)= [1,0]$ is along $x$-axis, $\bx(2) = [0,1]$ is along $y$-axis and $\bx(3) = [1/\sqrt{2}, 1/\sqrt{2}]$. The $\btheta_* = [1,0]$ and so the optimal action is action $1$. 
    The target policy $\pi = [0.9, 0.1, 0.0]$. Finally, let the variances be $\sigma^2(1) = 1.0$, $\sigma^2(2) = 1.0$ and $\sigma^2(3) =  5/100$.
    Hence, \PE design results in $\bb^* = [0.5, 0.5, 0.0]$. 
    \begin{align*}
    \bA^{}_{\pi_{},\bSigma_{\btheta_*}} &= \sum_a \pi_{}(a)\dfrac{\bx(a)\bx(a)^\top}{\sigma^2(a)} = \frac{9}{10}\cdot \bx(1)\bx(1)^\top + \frac{1}{10}\bx(2)\bx(2)^\top\\
    \bA^{}_{\bb_{*},\bSigma_{\btheta_*}} &= \sum_a \bb_{*}(a)\dfrac{\bx(a)\bx(a)^\top}{\sigma^2(a)} = \frac{1}{2}\cdot \bx(1)\bx(1)^\top + \frac{1}{2}\bx(2)\bx(2)^\top
\end{align*}
Recall that $\bV = \sum_{a}\bw(a)\bw(a)^\top$. Hence, the regret scales as 
\begin{align*}
    \cR_{n} = \L_{n}(\pi,\pi,\bSigma_{\btheta_*}) - \L_{n}(\pi,\bb^*,\bSigma_{\btheta_*})
    &\leq \dfrac{1}{{n}}\Tr\bigg(\bA^{-1}_{\pi_{},\bSigma_{\btheta_*}} \bV\bigg) - \dfrac{1}{{n}}\Tr\bigg(\bA^{-1}_{\bb_{*},\bSigma_{\btheta_*}} \bV\bigg) = \dfrac{1}{{n}}\Tr\bigg(\bigg(\bA^{-1}_{\pi_{},\bSigma_{\btheta_*}}-\bA^{-1}_{\bb_{*},\bSigma_{\btheta_*}} \bigg)\bV\bigg)\\
    & \overset{(a)}{\leq} O\left(\dfrac{\lambda_1(\bV)}{n}\right)
\end{align*}
where, $(a)$ follows by substituting the value of $\bA^{}_{\pi_{},\bSigma_{\btheta_*}}$ and $\bA^{}_{\bb_{*},\bSigma_{\btheta_*}}$.
\end{proof}

\section{Additional Experiments}
\label{app:addl-expt}
In this section, we state additional experimental details.

\textbf{Unit Ball:} This experiment consists of a set of $4$ actions that are arranged in a unit ball in $\R^2$,  
and $\|\bx(a)\|=1$ for all $a\in\A$. We consider three groups of actions: \textbf{a)} the reward-maximizing action in the direction of $\btheta^*$, \textbf{b)} the informative action (orthogonal to optimal action) that maximally reduces the uncertainty of $\wtheta_t$ and \textbf{c)} the  less-informative actions as shown in \Cref{fig:linear-expt} (Top-Left). The variance of the most informative action is chosen to be high $(0.35)$, but the target probability is set as low $0.1$, which forces the on-policy algorithm to sample the high variance action less.
\Cref{fig:linear-expt} (Top-Right) shows that \sp\  outperforms \onp, \go, and \ao. Note that we experiment with \ao\ design \citep{fontaine2021online} because this criterion results in minimizing the average variance of the estimates of the regression coefficients and is most closely aligned with our goal than G-, or, D-optimal designs \citep{jamieson2022interactive}.

\textbf{Air Quality:} We perform this  experiment on real-world dataset Air Quality from UCI datasets. The Air quality dataset consists of $1500$ samples each of which consists of $6$ features. We first select $400$ samples which are the actions in our setting.  We then fit a weighted least square estimate to the original dataset and get an estimate of $\btheta_*$ and $\bSigma_*$. The reward model is linear and given by $\mathbf{x}_{I_t}^\top\btheta_* + \text{noise}$ where $\bx_{I_t}$ is the observed action at round $t$, and the noise is a zero-mean additive noise with variance scaling as $\bx_{I_t}^\top\bSigma_*\bx_{I_t}$. 
Hence the variance of each action depends on their feature vectors and $\bSigma_*$. Finally, we set a level $\tau$, such that $30$ actions having variance crossing $\tau$ are set with low target probability, and the remaining probability mass is uniformly distributed among the rest $370$ action. Hence, again high variance actions are set with a low target probability, which forces the on-policy algorithm to sample the high-variance action less number of times.
We apply \sp\ to this problem and compare it to baselines \ao, \go, and the \onp\ algorithm. 

\begin{figure}[!ht]
\centering
\begin{tabular}{cc}
\label{fig:expt-linear3-tab-oracle}\hspace*{-1.2em}\includegraphics[scale = 0.33]{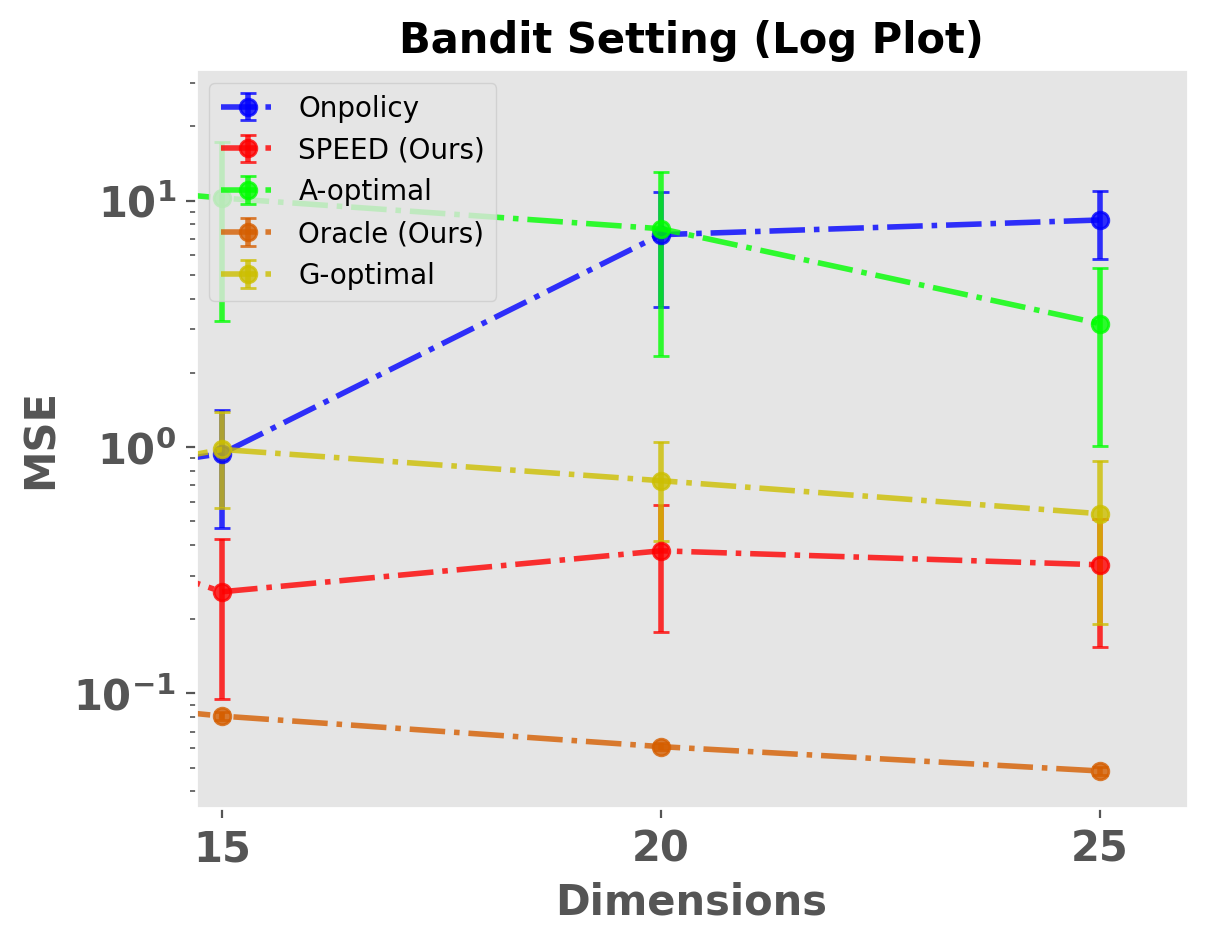} &
\hspace*{-1.2em}\includegraphics[scale = 0.37]{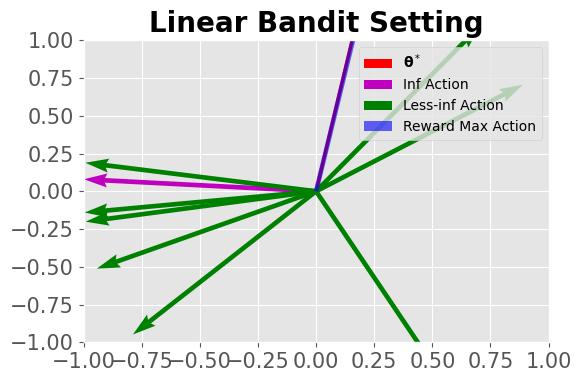} 
\end{tabular}
\caption{
$10$ action unit ball environment 
}
\label{fig:linear-expt3}
\vspace{-1.0em}
\end{figure}

\textbf{Red Wine Quality:} The UCI Red Wine Quality dataset consist of $1600$ samples of red wine with each sample $i$ having feature $\mathbf{x}_i\in\mathbb{R}^{11}$. We first fit a weighted least square estimate to the original dataset and get an estimate of $\btheta^*$ and $\bSigma_*$. The reward model is linear and given by $\mathbf{x}_{I_t}^T\btheta^* + \text{noise}$ where $\bx_{I_t}$ is the observed action at round $t$, and the noise is a zero-mean additive noise with variance scaling as $\bx_{I_t}^\top\bSigma_*\bx_{I_t}$. Note that we consider the $1600$ samples as actions. Then we run each of our benchmark algorithms on this dataset and reward model. Finally, we set a level $\tau$, such that $40$ actions having variance crossing $\tau$ are set with low target probability, and the remaining probability mass is uniformly distributed among the rest $1560$ action. Hence, again high variance actions are set with a low target probability, which forces the on-policy algorithm to sample the high-variance action less number of times.
We apply \sp\ to this problem and compare it to baselines \ao, \go, and the \onp\ algorithm.

\textbf{Movielens:} We experiment with a movie recommendation problem on the MovieLens 1M dataset \citep{movielens}. This dataset contains one million ratings given by $6\,040$ users to $3\,952$ movies. We first apply a low-rank factorization to the rating matrix to obtain $5$-dimensional representations: $\btheta_j \in \R^5$ for user $j \in [6\,040]$ and $\bx(a) \in \R^5$ for movie $a \in [3\,952]$.  In each run, we choose one user $\btheta_j$ and $100$ movies $\bx(a)$ randomly, and they represent the unknown model parameter and known feature vectors of actions, respectively. 

\textbf{Increasing Dimension:} We perform this experiment to show how the MSE of \sp\ scales with increasing dimensions and number of actions. We choose dimension $d\in\{15,20,25\}$. For each dimension $d\in\{15,20,25\}$ we choose the number of actions $|\A|=d^2+20$. Hence we ensure that the number of actions are greater than $d^2$ dimensions. We also choose the horizon as $T\in \{13000, 18000, 25000\}$ for each $d\in\{15,20,25\}$. We choose the same environment as the unit ball experiment. So the actions arranged in a unit ball in $\R^2$ and $\|\bx(a)\|=1$ for all $a\in\A$. Again we consider three groups of actions: \textbf{a)} the reward-maximizing action in the direction of $\btheta^*$, \textbf{b)} the informative action (orthogonal to optimal action) that maximally reduces the uncertainty of $\wtheta_t$ and \textbf{c)} the  less-informative actions as shown in \Cref{fig:linear-expt3} but scaled to a larger set of actions. 
For each case of dimension $d\in\{15,20,25\}$, the variance of the most informative actions along the directions orthogonal to the reward maximizing action are chosen to be high, but the target probability is set as low, which forces the on-policy algorithm to sample the high variance action less.
We again show the performance in \Cref{fig:linear-expt} (Bottom-left). 
We observe that with increasing dimensions $d$ the \sp\  outperforms on-policy. Also, observe that the oracle with knowledge of $\bSigma_*$ performs the best.

\section{Table of Notations}
\label{table-notations}

\begin{table}[!tbh]
    \centering
    \begin{tabular}{|p{10em}|p{33em}|}
        \hline\textbf{Notations} & \textbf{Definition} \\\hline
        $\pi(a)$  & Target policy probability for action $a$ \\\hline
        $\bb(a)$  & Behavior policy probability for action $a$\\\hline
        $\bx(a)$  & Feature of action $a$\\\hline
        $\btheta_*$  & Optimal mean parameter\\\hline
        $\wtheta_n$  & Estimate of $\btheta_*$\\\hline
        $\mu(a) = \bx^\top\btheta_*$  & Mean of action $a$\\\hline
        $\wmu_t(a) = \bx^\top\wtheta_t$  & Empirical mean of action $a$ at time $t$\\\hline
        $R_t(a)$  & Reward for action $a$ at time $t$\\\hline
        $\bSigma_*$  & Optimal co-variance matrix\\\hline
        $\wSigma_t$  & Empirical co-variance matrix at time $t$\\\hline
        $\sigma^2(a) = \bx(a)^\top\bSigma_*\bx(a)$  & Variance of action $a$ \\\hline
        $\wsigma_t^2(a) = \bx(a)^\top\wSigma_t\bx(a)$  & Empirical variance of action $a$ at time $t$ \\\hline
        $n$ & Total budget \\\hline
        $T_n(a)$  & Total Samples of action $a$ after $n$ timesteps\\\hline
    \end{tabular}
    \vspace{1em}
    \caption{Table of Notations}
    \label{tab:my_label}
\end{table}

\end{document}


%

%

\onecolumn
\aistatstitle{Instructions for Paper Submissions to AISTATS 2024: \\
Supplementary Materials}

\section{FORMATTING INSTRUCTIONS}

To prepare a supplementary pdf file, we ask the authors to use \texttt{aistats2024.sty} as a style file and to follow the same formatting instructions as in the main paper.
The only difference is that the supplementary material must be in a \emph{single-column} format.
You can use \texttt{supplement.tex} in our starter pack as a starting point, or append the supplementary content to the main paper and split the final PDF into two separate files.

Note that reviewers are under no obligation to examine your supplementary material.

\section{MISSING PROOFS}

The supplementary materials may contain detailed proofs of the results that are missing in the main paper.

\subsection{Proof of Lemma 3}

\textit{In this section, we present the detailed proof of Lemma 3 and then [ ... ]}

\section{ADDITIONAL EXPERIMENTS}

If you have additional experimental results, you may include them in the supplementary materials.

\subsection{The Effect of Regularization Parameter}

\textit{Our algorithm depends on the regularization parameter $\lambda$. Figure 1 below illustrates the effect of this parameter on the performance of our algorithm. As we can see, [ ... ]}

\vfill